\newtheorem{remark}{Remark}
\newtheorem{theorem}{Theorem}
\newtheorem{lemma}{Lemma}
\newtheorem{definition}{Definition}
\DeclareMathOperator*{\argmax}{argmax}
\DeclareMathOperator*{\argmin}{argmin}
\let\bs\boldsymbol
\journal{Computers \& Chemical Engineering}
\begin{document}

\begin{frontmatter}

%% Title, authors and addresses

%% use the tnoteref command within \title for footnotes;
%% use the tnotetext command for theassociated footnote;
%% use the fnref command within \author or \affiliation for footnotes;
%% use the fntext command for theassociated footnote;
%% use the corref command within \author for corresponding author footnotes;
%% use the cortext command for theassociated footnote;
%% use the ead command for the email address,
%% and the form \ead[url] for the home page:
%% \title{Title\tnoteref{label1}}
%% \tnotetext[label1]{}
%% \author{Name\corref{cor1}\fnref{label2}}
%% \ead{email address}
%% \ead[url]{home page}
%% \fntext[label2]{}
%% \cortext[cor1]{}
%% \affiliation{organization={},
%%             addressline={},
%%             city={},
%%             postcode={},
%%             state={},
%%             country={}}
%% \fntext[label3]{}

\title{BONSAI: Structure-Exploiting Robust Bayesian Optimization for Networked Black-Box Systems under Uncertainty} %% Article title

%% use optional labels to link authors explicitly to addresses:
%% \author[label1,label2]{}
%% \affiliation[label1]{organization={},
%%             addressline={},
%%             city={},
%%             postcode={},
%%             state={},
%%             country={}}
%%
%% \affiliation[label2]{organization={},
%%             addressline={},
%%             city={},
%%             postcode={},
%%             state={},
%%             country={}}

\author{Akshay Kudva and Joel A. Paulson}

\affiliation{organization={The Ohio State University},%Department and Organization
            addressline={Department of Chemical and Biomolecular Engineering}, 
            city={Columbus},
            postcode={43210}, 
            state={OH},
            country={USA}}

%% Abstract
\begin{abstract}
%% Text of abstract
Optimal design under uncertainty remains a fundamental challenge in advancing reliable, next-generation process systems. Robust optimization (RO) offers a principled approach by safeguarding against worst-case scenarios across a range of uncertain parameters. However, traditional RO methods typically require known problem structure, which limits their applicability to high-fidelity simulation environments. To overcome these limitations, recent work has explored robust Bayesian optimization (RBO) as a flexible alternative that can accommodate expensive, black-box objectives. Existing RBO methods, however, generally ignore available structural information and struggle to scale to high-dimensional settings. In this work, we introduce BONSAI (Bayesian Optimization of Network Systems under uncertAInty), a new RBO framework that leverages partial structural knowledge commonly available in simulation-based models. Instead of treating the objective as a monolithic black box, BONSAI represents it as a directed graph of interconnected white- and black-box components, allowing the algorithm to utilize intermediate information within the optimization process. We further propose a scalable Thompson sampling-based acquisition function tailored to the structured RO setting, which can be efficiently optimized using gradient-based methods. We evaluate BONSAI across a diverse set of synthetic and real-world case studies, including applications in process systems engineering. Compared to existing simulation-based RO algorithms, BONSAI consistently delivers more sample-efficient and higher-quality robust solutions, highlighting its practical advantages for uncertainty-aware design in complex engineering systems.
\end{abstract}

% %%Graphical abstract
% \begin{graphicalabstract}
% %\includegraphics{grabs}
% \end{graphicalabstract}

%%Research highlights
% \begin{highlights}
% \item Research highlight 1
% \item Research highlight 2
% \end{highlights}

%% Keywords
\begin{keyword}
Bayesian Optimization \sep Graph Representations \sep Gaussian Processes \sep Robust Optimization \sep Hybrid Modeling \sep Function Networks
%% keywords here, in the form: keyword \sep keyword

%% PACS codes here, in the form: \PACS code \sep code

%% MSC codes here, in the form: \MSC code \sep code
%% or \MSC[2008] code \sep code (2000 is the default)
\end{keyword}

\end{frontmatter}

%% Add \usepackage{lineno} before \begin{document} and uncomment 
%% following line to enable line numbers
%% \linenumbers

%% main text
%%

%% Use \section commands to start a section
\section{Introduction} 
\label{sec:introduction}

Designing engineered systems under uncertainty is a fundamental challenge in real-world applications. Uncertainty, which can arise from sources such as manufacturing imperfections, environmental variability, or modeling approximations, can significantly degrade the performance or feasibility of solutions obtained from deterministic (i.e., nominal) optimization. Even relatively small deviations from assumed conditions can cause nominally optimal solutions to perform poorly or even fail in practice.
To address this issue, robust optimization (RO) frameworks~\cite{blankenship1976infinitely, Beyer2007, BenTal2009, gabrel2014recent} have been developed to seek solutions that remain effective even under worst-case or uncertain scenarios. While conceptually powerful, solving RO problems is notoriously difficult due to their bilevel nature, which involves an outer-loop optimization over design variables and an inner-loop optimization to find worst-case uncertainty realizations. In certain special cases -- where the full algebraic structure is known and satisfies convexity/concavity conditions -- global optimization is tractable. However, such assumptions often fail to hold in practice.

In many engineering problems, system behavior can be accurately captured by high-fidelity simulations (or so-called ``digital twins''), which are increasingly used to support data-driven decision-making \cite{villalonga2021decision, nghiem2023physics, iranshahi2025digital}. These simulators may involve proprietary code, computationally intensive physics-based solvers, or stochastic agent-based models. As such, they are typically treated as black boxes: we can observe inputs and outputs, but cannot compute gradients or exploit internal structure. In such settings, RO becomes especially challenging, as each function evaluation is costly, meaning that only a limited number of simulations can be performed to identify robust designs.
This challenge has motivated a growing body of work on simulation-based RO. An early contribution in this area is the method of Bertsimas et al.~\cite{Bertsimas_unconstrained}, which performs robust local optimization using finite-difference estimates of gradients followed by a cone-based robust step calculation. While theoretically and conceptually elegant, this approach requires a large number of function evaluations at each iteration, making it unsuitable for computationally expensive simulators.

More recently, Bayesian optimization (BO)~\cite{Mockus1989, Frazier2018, paulson2024bayesian} has emerged as a powerful framework for black-box global optimization, combining probabilistic modeling with (sample-efficient) experimental design. Several robust BO methods have been proposed that extend this framework to handle uncertainty, including ARBO~\cite{Paulson2022}, which jointly models the objective as a function of both design and uncertain variables. This joint modeling enables informed sampling over the combined design-uncertainty space, in contrast to prior two-stage methods such as MiMReK~\cite{Marzat2015}, which attempt to solve the inner problem at every iteration of the outer problem. Alternative acquisition functions to the confidence bound-based ARBO method have also been explored including lookahead and information-based acquisitions for max-min problems in \cite{MinMaxBO}. There have also been BO methods developed to optimize over stochastic risk measures \cite{RiskMeasuresBO} or tackle distributionally robust optimization problems, e.g., \cite{DristributionallyRobust, Sessa2020, Aldeghi2021}.
While existing robust BO methods have demonstrated success, most treat the objective as a fully black-box function. 
This assumption is often overly conservative in scientific applications where partial knowledge of the system structure is available -- such as known transformations, internal submodels, or hierarchical decompositions. Leveraging this knowledge can substantially improve optimization efficiency.

In the nominal setting, recent work has shown that exploiting such structure can lead to dramatic improvements in sample efficiency. COBALT~\cite{COBALT} models the objective as a composite function, combining known outer transformations with multi-output black-box inner models. BOIS~\cite{gonzalez2025implementation} uses an acquisition function derived by deriving an approximate linearized model that simplifies uncertainty propagation compared to the improvement-based strategy proposed in COBALT. 
The BOFN framework~\cite{BOFN} generalizes these ideas to function networks, where the objective is expressed as a directed acyclic graph (DAG) of known and unknown components. This approach has been further extended to support partial evaluations~\cite{BOFN_partial} and causal inference under intervention-based queries (e.g., the MCBO method \cite{MCBO}). 

These advances have led to up to orders-of-magnitude improvement in sample efficiency for certain complex design problems. However, none of these methods address the robust setting, which we argue is a critical gap. Extending structure-aware BO to RO is non-trivial, as the network-based modeling of the objective leads to non-Gaussian posterior distributions, which complicates the design and optimization of the acquisition function (used to decide the next-best candidate point to query). These difficulties are amplified in the robust setting since we need to deal with both design variables and adversarial uncertainty variables at every iteration. One exception is the DRACO method \cite{DRACO}, which is designed for robust optimization problems that exhibit a special additive black-box structure that allows for closed-form uncertainty propagation (effectively extends ARBO to this structured setting). Although powerful, DRACO only works for a relatively narrow class of functions and thus cannot tackle the more general setting of interest in this work. Another exception, CBO-MW \cite{CBO_MW}, arises from the \textit{causal} BO literature \cite{aglietti2020causal}, which extends MCBO to settings where uncontrollable external interventions act adversarially on a known causal DAG. Rather than posing a nested max-min problem over the design and uncertainty variables, CBO-MW treats the problem as online learning over a finite set of interventions. Although this can achieve no-regret guarantees for acyclic graphs, it does not tackle our more general setting of continuous robust design over cyclic graphs. 

In this work, we present \textbf{BONSAI} (\textbf{B}ayesian \textbf{O}ptimization on \textbf{N}etwork \textbf{S}ystems under uncert\textbf{AI}nty), the first robust BO algorithm (to our knowledge) that leverages structured prior knowledge in the form of a function network. BONSAI represents the objective as a directed graph, where each node can be either a black- or white-box function. This enables it to flexibly leverage intermediate outputs and known transformations during both the learning and optimization process.
Figure~\ref{fig:pse_example} illustrates a representative example of such a structured design setting in process systems engineering. Here, an overall economic objective depends on a number of interconnected modules, including plant design, environmental analysis, and product development via physical experimentation. Each module is represented as a distinct function -- some of which are based on proprietary process simulators or complex physics-based models such as computational fluid dynamics (CFD). These components are coupled through intermediate variables and may depend on both global design variables (e.g., material choices, plant layout) and uncertain parameters (e.g., flow variability). A naive black-box BO approach would ignore these dependencies and treat the entire system as a monolithic mapping from inputs to outputs, leading to potentially inefficient exploration and relatively slow convergence to the desired solution. Conversely, fully equation-oriented optimization is often infeasible since it would be difficult to build complete and accurate models for each and every subsystem. BONSAI provides a principled framework for navigating this middle ground by using Gaussian process (GP) surrogates to model unknown subsystems and combining them with known structural information to guide robust, sample-efficient optimization. Table \ref{tab:method-comparison} provides a high-level comparison of BONSAI to the existing approaches introduced in the previous paragraphs. 

\begin{figure}[tb]
  \centering
  \includegraphics[width=0.95\textwidth]{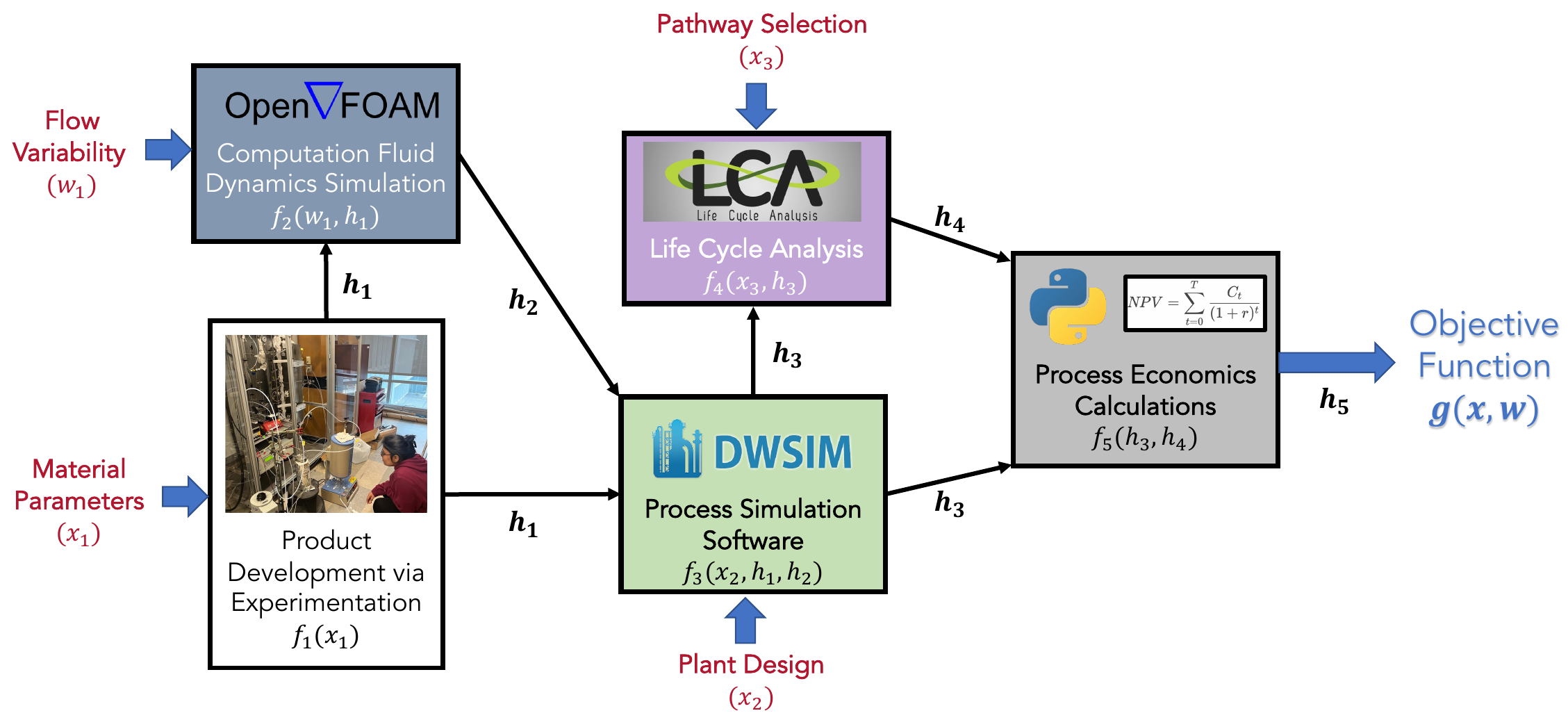}
  \caption{Example function network for robust process systems design. The overall objective function $g(x, w)$ combines outputs from five interconnected modules: CFD simulations, process simulation software, life cycle analysis (LCA), experimentation-based product development, and economic post-processing. Design variables $\bs{x}$ (e.g., plant design choices, pathway selection) and uncertain inputs $\bs{w}$ (e.g., flow variability) propagate through a network of intermediate variables ($h_1$ through $h_5$), capturing modular structure and interactions among subsystems. BONSAI exploits this structure for robust, sample-efficient optimization in the presence of uncertainty.}
  \label{fig:pse_example}
\end{figure}

To summarize, our key contributions are:
\begin{enumerate}
    \item We introduce BONSAI, a framework for robust BO over function networks that handles general combinations of design variables, uncertainties, and known/unknown components.
    \item We develop a scalable, two-stage acquisition function that extends Thompson sampling to the robust function network setting, overcoming the challenge of non-Gaussian posteriors.
    \item BONSAI is, to our knowledge, the first robust BO method capable of operating on cyclic function networks, enabling the modeling of feedback loops and recycle streams commonly encountered in process systems applications.
    \item We establish the first finite-time regret bounds for BO over function networks (including cyclic networks) thereby providing theoretical guarantees that connect a special case of BONSAI to established BO theory.
    \item We demonstrate the performance benefits of incorporating uncertainty and leveraging function network structure on a set of synthetic and real-world benchmarks, including cases with varying network topologies and dimensionalities.
\end{enumerate}

The remainder of this paper is organized as follows. Section~\ref{sec:problem-setting} defines the problem of interest. Section~\ref{sec:BONSAI} presents the BONSAI framework, outlines its algorithmic components, and summarizes our theoretical analysis of its performance in the nominal setting. Section~\ref{sec:numerical-experiments} reports numerical results across several benchmark tasks. Finally, Section~\ref{sec:conclusion} concludes with a discussion of implications and some potential future directions.

\begin{table}[t]
\centering
\begin{threeparttable}
\setlength{\tabcolsep}{5pt}
\caption{Comparison of the proposed BONSAI approach to related robust and structure-exploiting Bayesian optimization (BO) methods.} 
\label{tab:method-comparison}
\begin{tabular}{l c c c c}
\hline
Method & Structure$^{\dagger}$ & Robust (max-min) & Cycles? \\
\hline
\textbf{BONSAI (this work)} & FN-Cyclic & Yes & Yes \\
ARBO~\cite{Paulson2022,Stableopt} & BB & Yes & n/a \\
StableOpt~\cite{Stableopt} & BB & Yes & n/a \\
MiMaReK~\cite{Marzat2015} & BB & Yes & n/a \\
Weichert \& Kister~\cite{MinMaxBO} & BB & Yes & n/a \\
DRACO~\cite{DRACO} & Composite & Yes & No \\
COBALT~\cite{COBALT} & Composite & No & No \\
BOFN~\cite{BOFN} & FN-DAG & No & No \\
BOIS~\cite{gonzalez2025implementation} & FN-Cyclic & No & Yes \\
% BOFN (partial evals)~\cite{BOFN_partial} & FN--DAG & No & No \\
MCBO~\cite{MCBO} & Causal-DAG & No & No \\
CBO-MW~\cite{CBO_MW} & Causal-DAG & Yes & No \\
\hline
\end{tabular}
\begin{tablenotes}
    \item[$\dagger$] 
    \footnotesize
    Structure abbreviations: BB (black-box; no explicit structure), Composite ($g\circ h$ with $g$ known and $h$ black-box), FN-DAG (function network expressed as a directed acyclic graph), Causal-DAG (directed acyclic graph considering causal interventions), FN-Cyclic (general function network that can include cycles).
\end{tablenotes}
\end{threeparttable}
\end{table}
 
\section{Problem Description}
\label{sec:problem-setting}

In this work, we are interested in solving the following max-min (robust) optimization problem involving an expensive-to-evaluate function
\begin{align} \label{eq:max-min}
    \max_{\bs{x} \in \mathcal{X}} \min_{\bs{w} \in \mathcal{W}} ~ g(\bs{x}, \bs{w}),
\end{align}
where $\bs{x} \in \mathcal{X} \subset \mathbb{R}^{n_x}$ are the design variables constrained to the compact set $\mathcal{X}$, $\bs{w} \in \mathcal{W} \subset \mathbb{R}^{n_w}$ are the uncertain parameters constrained to the compact set $\mathcal{W}$, and $g : \mathcal{X} \times \mathcal{W} \to \mathbb{R}$ is the objective (or performance) function. The solution to \eqref{eq:max-min}, which we denote by $\bs{x}^\star$, is the design that maximizes the worst-case objective value across all feasible inputs, i.e., $\min_{\bs{w} \in \mathcal{W}} g(\bs{x}^\star, \bs{w}) \geq \min_{\bs{w} \in \mathcal{W}} g(\bs{x}, \bs{w})$ for all $\bs{x} \in \mathcal{X}$. This formulation can be interpreted as a sequential two-player game: the design $\bs{x}$ must be selected first, while the uncertainty $\bs{w}$ responds adversarially to degrade performance.

The choice of algorithm for solving \eqref{eq:max-min} depends heavily on the properties of the function $g$. Here, we consider a flexible and structured representation in which $g$ is defined implicitly through a directed graph of component/base functions $f_1, f_2, \ldots, f_K$, each corresponding to a node in the graph. These functions may be expensive to evaluate and depend on subsets of the design variables $\bs{x}$, uncertain parameters $\bs{w}$, and outputs of the other nodes. Let $V = \{1, 2, \ldots, K\}$ denote the set of nodes and let $E = \{ (j, k) : f_k \text{ depends on } f_j \}$ define the directed edges that encode functional dependencies. Each node function is defined by
\begin{align} \label{eq:network-functions}
    h_k = f_k\left( \bs{x}_{I_x(k)}, \bs{w}_{I_w(k)}, \{ h_j \}_{j \in J(k)} \right), \qquad k = 1, \ldots, K,
\end{align}
where $\bs{x}_{I_x(k)}$ and $\bs{w}_{I_w(k)}$ are the subsets of design and uncertain variables relevant to node $k$ and $J(k) \subseteq \{1,\ldots,K\} \setminus \{k\}$ indexes the nodes of the other functions whose outputs serve as input to $f_k$. Throughout the paper, we will often refer to \eqref{eq:network-functions} as a ``function network''.

For notational simplicity, we define $\bs{H} = [h_1, h_2, \ldots, h_K]^\top$ as the vector of node outputs and introduce the vector-valued function $\bs{F}( \bs{x}, \bs{w}, \bs{H} )$ as the concatenation of the right-hand sides of \eqref{eq:network-functions}. This allows us to express the system compactly as follows
\begin{align} \label{eq:compact-network}
    \bs{H} = \bs{F}( \bs{x}, \bs{w}, \bs{H} ).
\end{align}
This formulation generalizes prior work on acyclic function networks (e.g., \cite{BOFN, BOFN_PE}) in two important ways: (i) it explicitly models the effect of uncertainty and (ii) it allows for arbitrary directed graph structures, including cycles arising from feedback loops or recycle streams -- features that are common in process systems engineering applications.

We assume the final objective $g(\bs{x}, \bs{w})$ is a known linear transformation of the vector of node outputs
\begin{align} \label{eq:network-objective}
    g(\bs{x}, \bs{w}) = \bs{c}^\top \bs{H}^\star( \bs{x}, \bs{w} ),
\end{align}
where $\bs{c} \in \mathbb{R}^K$ is a projection vector and $\bs{H}^\star( \bs{x}, \bs{w} )$ denotes the solution to \eqref{eq:compact-network} for fixed $(\bs{x}, \bs{w})$. This is without loss of generality, as nonlinear transformations of the outputs can be included as additional nodes in the graph, with $\bs{c}$ being appropriately defined to select the corresponding final node. 
We further assume that the solution to \eqref{eq:compact-network} is unique, which is satisfied under mild regularity conditions such as when the function $\bs{F}( \bs{x}, \bs{w}, \bs{H} )$ is a contraction in $\bs{H}$ (e.g., by the Banach fixed-point theorem; see \cite[Chapter 3]{OrtegaReinboldt1970}). These conditions are often satisfied in process models with feedback or recycle streams, where the internal state of the system evolves in a stable and consistent way.
Lastly, a key modeling assumption in this work is that we have the ability to query the node functions $f_k$ at arbitrary inputs $(\bs{x}_{I_x(k)}, \bs{w}_{I_w(k)}, \bs{H}_{J(k)})$. This assumption is satisfied in many simulation-based engineering applications, where internal state variables and intermediate results are exposed during a model run. Examples include flowsheet simulators, physics-based solvers, and digital twins composed of coupled subsystem models. In such settings, it is typically straightforward to extract the outputs of intermediate components through, e.g., solver callbacks.

Previous work, such as ARBO \cite{Paulson2022}, attempt to solve \eqref{eq:max-min} using a robust Bayesian optimization perspective under a Gaussian process (GP) prior placed directly on $g$. However, such models are agnostic to the underlying structure in \eqref{eq:compact-network}-\eqref{eq:network-objective}, resulting in losses in sample efficiency. In this work, we instead place GP priors directly on the individual node functions $\{f_k\}_{k=1}^K$ (which is more natural given the previously defined representation) and use the graph structure to propagate uncertainty and guide sampling. By leveraging this additional structural information, we aim to significantly improve the sample efficiency of robust optimization in expensive simulation-based systems.

\section{The Proposed BONSAI Framework} \label{sec:BONSAI}

In this section, we introduce our proposed approach for incorporating the function network structure into the robust (max-min) optimization \eqref{eq:max-min}. We first describe the probabilistic model that we use to represent the unknown (black-box) node functions $f_1,\ldots,f_K$, which yields a fundamentally different model for the overall objective function $g$ than existing approaches. We then present the complete BONSAI algorithm as well as establish some theoretical convergence results (for a special case) and discuss several important practical implementation details.

\subsection{Probabilistic surrogate model}

We model the set of $K$ node functions $f_1, \ldots, f_K$ as being drawn from independent Gaussian process (GP) priors. To simplify notation, we let $\bs{z}_k = ( \bs{x}_{I_x(k)}, \bs{w}_{I_w(k)}, \bs{H}_{J(k)} )$ denote the full set of inputs to $f_k$ and, with slight abuse of notation, write $f_k(\bs{z}_k)$ interchangeably with $f_k(\bs{x}_{I_x(k)}, \bs{w}_{I_w(k)}, \bs{H}_{J(k)})$. Each node function can be mathematically expressed as
\begin{align}
    f_k \sim \mathcal{GP}( \mu_{0,k}, \Sigma_{0,k} ), \qquad k = 1, \ldots, K,
\end{align}
where $\mu_{0,k} : \mathbb{R}^{n_{z,k}} \to \mathbb{R}$ is the prior mean function and $\Sigma_{0,k} : \mathbb{R}^{n_{z,k}} \times \mathbb{R}^{n_{z,k}} \to \mathbb{R}$ is the prior covariance (or kernel) function. Here, $n_{z,k}$ denotes the dimension of the input vector $\bs{z}_k$. The mean function captures any strong prior belief about the expected behavior of $f_k$, while the covariance function encodes assumptions about the smoothness, amplitude, and variability of the function across the input space. In particular, it defines how correlated the outputs are for two different input values -- points that are closer in input space are typically assumed to yield more similar outputs. A more detailed discussion on covariance functions and their role in shaping GP behavior can be found in \cite[Chapter 4]{Rasmussen2006}. It is worth noting that we have complete freedom in setting the prior mean and covariance functions. Thus, for any white-box function $f_k$ (whose structure is fully known), we can simply set $\mu_{0,k}(\bs{z}_k) = f_k( \bs{z}_k )$ and $\Sigma_{0,k}( \bs{z}_k, \bs{z}_k' ) = 0$ for all $\bs{z}_k$ and $\bs{z}_k'$ to exactly encode this knowledge. 

Let $y_{l,k} = f_k(\bs{z}_{l,k}) + \epsilon_{l,k}$ denote the observed value at the $l$th sample, where $\bs{z}_{l,k}$ is the corresponding input to node $k$ and $\epsilon_{l,k}$ is additive observation noise. In this work we assume $\epsilon_{l,k} \overset{\text{i.i.d.}}{\sim} \mathcal{N}(0,\sigma_k^2)$, although more general noise models (e.g., heteroscedastic or heavy-tailed) could also be incorporated. This formulation naturally covers the case of noise-free evaluations by setting $\sigma_{k}^2 = 0$. Let $\mathcal{D}_{n,k} = \{ (\bs{z}_{l,k}, y_{l,k}) \}_{l=1}^n$ denote the dataset of observations for the $k$th node.
A key property of GPs is that they are closed under conditioning: given data $\mathcal{D}_{n,k}$, the posterior distribution of $f_k$ remains a GP with updated mean and covariance functions $\mu_{n,k}$ and $\Sigma_{n,k}$. Because we assume independence across nodes and have access to all intermediate outputs, each node’s GP can be updated independently. The posterior mean and covariance functions for each $f_k$ are thus given by
\begin{subequations} \label{eq:posterior-gp}
\begin{align}
    \mu_{n,k}(\bs{z}_k) &= \mu_{0,k}( \bs{z}_k ) + \Sigma_{0,k}( \bs{z}_k, \bs{z}_{1:n,k} ) \bs{K}_{n, k}^{-1} ( y_{1:n,k} - \mu_{0,k}( \bs{z}_{1:n,k} ) ), \\
    \Sigma_{n,k}( \bs{z}_k, \bs{z}_k' ) &= \Sigma_{0,k}( \bs{z}_k, \bs{z}_k') - \Sigma_{0,k}( \bs{z}_k, \bs{z}_{1:n,k} ) \bs{K}_{n, k}^{-1} \Sigma_{0,k}( \bs{z}_{1:n,k}, \bs{z}_k' ),
\end{align}
\end{subequations}
where we use standard operator overloading to apply the covariance function element-wise to the vector of inputs $\bs{z}_{1:n,k}$ and $\bs{K}_{n, k} = \Sigma_{0,k}(\bs{z}_{1:n,k}, \bs{z}_{1:n,k}) + \sigma^2_k \bs{I}$.

The resulting posterior over $\{f_k\}_{k=1}^K$ defines an implicit posterior over the objective $g(\bs{x}, \bs{w})$ via the system in \eqref{eq:compact-network}-\eqref{eq:network-objective}. However, this posterior is generally non-Gaussian due to nonlinear composition and potential cycles in the function network, which define $\bs{H}^\star(\bs{x}, \bs{w})$ via a stochastic fixed-point equation. In principle, one could approximate this distribution using sampling-based methods such as Markov chain Monte Carlo (MCMC) \cite{jones2022markov}, but doing so within a sequential optimization framework is computationally expensive, as it requires generating posterior samples at many candidate inputs. In the next section, we develop an efficient strategy to circumvent this challenge. 

To build intuition for this modeling paradigm, we illustrate its impact on a simple example problem that depends on a single design variable $x$ and single uncertainty $w$ in Figure~\ref{fig:grey_box}. The left column shows the exact (white-box) function $g(x, w)$ and its (implicitly-defined) projected objective function $G(x) = \min_{w \in \mathcal{W}} g(x, w)$. In the middle, we fit a standard black-box GP model directly to $g$ using a fixed number of samples (black dots). While the black-box model captures broad trends, it exhibits poor uncertainty quantification and misses fine-scale features, resulting in inaccurate estimation of $G(x)$. On the right, we show a ``grey-box'' GP model constructed by modeling $g$ via a three-node function network. Despite being trained on the same data, the grey-box model yields substantially improved predictions and better-calibrated confidence intervals. This improvement arises from its ability to encode structural prior knowledge, which constrains and guides the surrogate to more effectively infer the system’s response surface from limited data. 

\begin{figure}[htb!]
  \centering
  \includegraphics[width=0.9\textwidth]{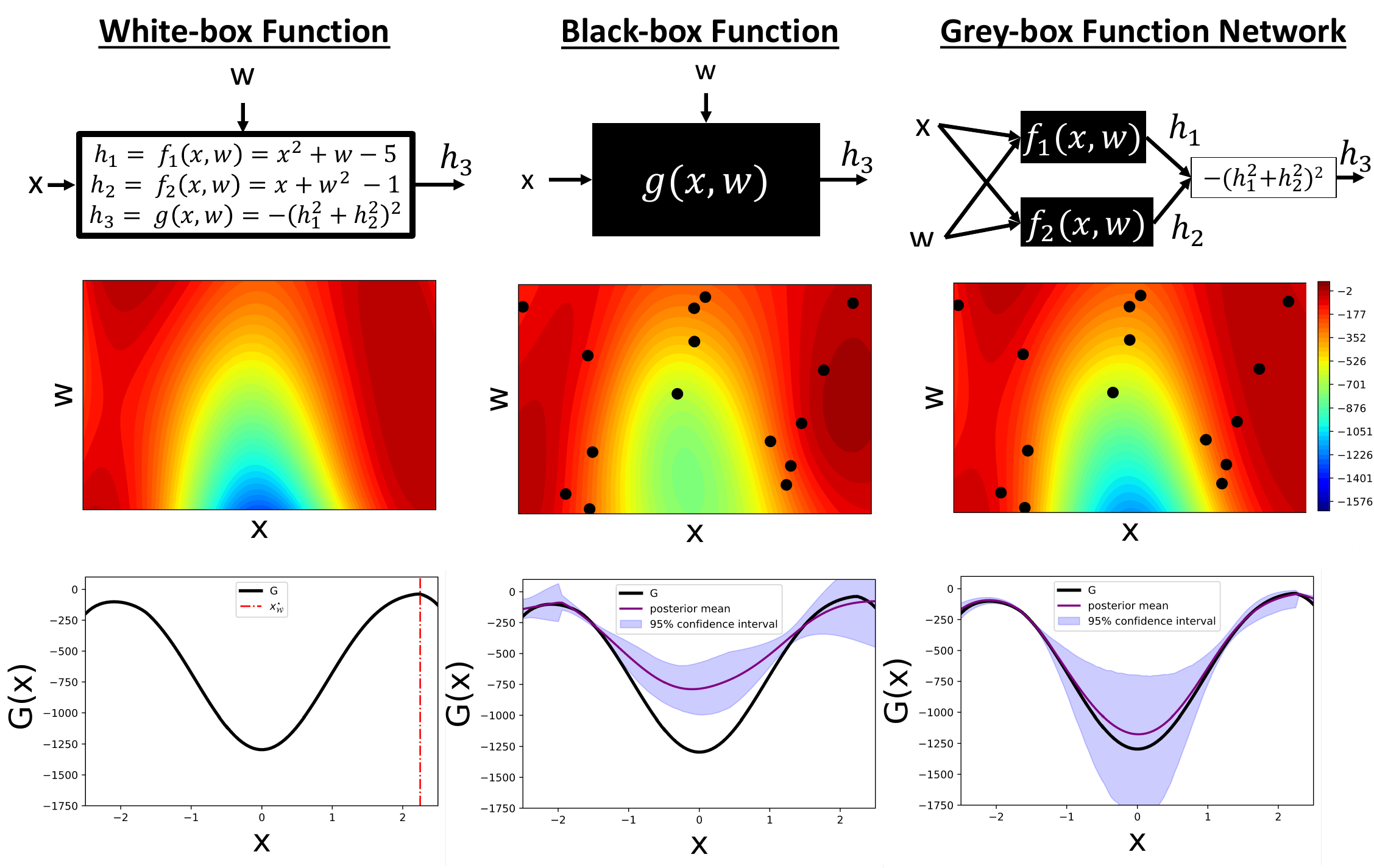}
  \caption{Comparison of white-box, black-box, and the function network (grey-box or hybrid) modeling approaches for the objective function \(g(x, w) = -\left(f_1(x, w)^2 + f_2(x, w)^2\right)^2\) where \(f_1(x, w) = x^2 + w - 5\) and \(f_2(x, w) = x + w^2 - 1\), with \(x \in [-2.5,2.5]\) and \(w \in [-1,1]\). \textbf{Top:} Illustration of function representations used in each approach, reflecting different amounts of prior knowledge. \textbf{Middle:} Contour plots of the true objective (left) and the posterior mean surfaces from the black-box (center) and grey-box (right) models. Black dots indicate sample locations. \textbf{Bottom:} The projected objective function \(G(x) = \min_{w} g(x, w)\), showing the true value (black), posterior mean (violet), and 95\% confidence band (violet cloud). The grey-box model provides substantially improved prediction and uncertainty estimates compared to the black-box baseline. The red vertical line in the white-box plot indicates the true max-min solution.}
  \label{fig:grey_box}
\end{figure}

\subsection{Overview of the BONSAI algorithm}

Given the posterior distributions over the node functions $f_1, \ldots, f_K$ induced by the observed data $\mathcal{D}_{t,1},\ldots,\mathcal{D}_{t,K}$ at any iteration $t$, the core challenge is to decide where to evaluate next in order to efficiently solve the robust optimization problem \eqref{eq:max-min}. A natural but inefficient strategy would be to select samples that maximize the posterior variance of $g$, aiming to reduce global uncertainty in the objective function. While such exploration-based approaches are useful in Bayesian experimental design, they waste evaluations in regions of the domain that do not influence the max-min solution. On the other hand, a purely exploitative strategy that plugs in the posterior mean of $g$ and solves the resulting deterministic max-min problem can be overly greedy, particularly under adversarial uncertainty. It tends to overfit to high-mean regions without adequately exploring the worst-case behavior, often leading to suboptimal or brittle solutions.

A principled BO method must balance exploration and exploitation. However, for the function network structure considered in this work, computing standard acquisition functions is particularly difficult. The posterior over $g$ is not Gaussian due to nonlinear compositions and potential cycles in the network, which define $\bs{H}$ implicitly through a stochastic fixed-point equation. Evaluating the mean or variance of $g$ requires sampling or approximating this distribution, which is computationally intensive and does not scale easily.
To address these challenges, BONSAI employs a two-stage Thompson sampling (TS) strategy \cite{pmlr-v84-kandasamy18a}. TS is a simple yet powerful Bayesian sampling heuristic that selects actions in proportion to their probability of being optimal. For our setting, TS is especially attractive because it avoids the need to explicitly characterize the posterior over $g$. Instead, we sample directly from each node's GP posterior, resulting in a stochastic realization of the full function network. This sample, once drawn, defines a deterministic surrogate model $\widehat{\bs{F}}_t$, and in turn a deterministic realization of the objective, which we denote as $\widehat{g}_t(\bs{x}, \bs{w}) = \bs{c}^\top \bs{H}$ with $\bs{H} = \widehat{\bs{F}}_t(\bs{x}, \bs{w}, \bs{H})$ (i.e., $\widehat{g}_t \sim p( g | \mathcal{D}_{t,1}, \ldots, \mathcal{D}_{t,K} )$).

At each iteration $t$, BONSAI draws two independent realizations of the function network: one for optimizing over design variables and one for optimizing over uncertainty, which are denoted as
\begin{align*}
    \widehat{g}_t^{(x)}(\bs{x}, \bs{w}) &= \bs{c}^\top \bs{H} ~~ \text{where} ~~ \bs{H} = \widehat{\bs{F}}_t^{(x)}(\bs{x}, \bs{w}, \bs{H}), \\
    \widehat{g}_t^{(w)}(\bs{x}, \bs{w}) &= \bs{c}^\top \bs{H} ~~ \text{where} ~~ \bs{H} = \widehat{\bs{F}}_t^{(w)}(\bs{x}, \bs{w}, \bs{H}).
\end{align*}
The first function $\widehat{g}_t^{(x)}$ is used to select the next design point $\bs{x}_{t+1}$ via a max-min optimization. The second function $\widehat{g}_t^{(w)}$ is used to find the corresponding worst-case uncertainty $\bs{w}_{t+1}$ and induced network state $\bs{H}_{t+1}$. Using two independent samples (rather than reusing the same realization)  prevents the algorithm from deterministically ``chasing its tail'' (a risk if the same realization drives both the max and the min) and further encourages exploration of the entire uncertainty set.

\begin{algorithm}[htb!]
\caption{BONSAI}
\textbf{Required:} Sets $\mathcal{X} \subset \mathbb{R}^{n_x}$, $\mathcal{W} \subset \mathbb{R}^{n_w}$, and $\mathcal{H} \subset \mathbb{R}^K$; GP priors $f_k \sim \mathcal{GP}( \mu_{0,k}, \Sigma_{0,k} )$ for all $k = 1,\ldots,K$; Graph representation $(V, E)$ of \eqref{eq:network-functions}; Vector $\bs{c} \in \mathbb{R}^{K}$ in \eqref{eq:network-objective}; Initialization budget $n$; and Total evaluation budget $N$.

\begin{algorithmic}[1]
\State \textbf{Initialize data:} For all $k = 1,\ldots,K$, generate $n$ initial sampling points $\bs{z}_{1:n, k}$ inside of compact set $\mathcal{Z}_k = \mathcal{X}_{I_x(k)} \times \mathcal{W}_{I_w(k)} \times \mathcal{H}_{J(k)}$. Evaluate $f_k$ at these points using \eqref{eq:network-functions} to construct initial dataset $\mathcal{D}_{n,k} = \{ (\bs{z}_{i,k}, h_{i,k}) \}_{i=1}^n$. 
\State \textbf{Train GPs:} Train GP surrogate for $f_k$ using $\mathcal{D}_{n,k}$ for all $k = 1,\ldots, K$. Let $\mathcal{GP}_{n,k} \leftarrow \mathcal{GP}( \mu_{n,k}, \Sigma_{n,k} )$ denote the GP posterior for $f_k$.
\For{$t=n, n+1, \ldots, N-1$}
\For{$k=1,\ldots,K$}
\State Generate two Thompson samples $\widehat{f}^{(x)}_{t,k}, \widehat{f}^{(w)}_{t,k} \sim \mathcal{GP}_{t,k}$.
\EndFor
\State Construct $\widehat{\bs{F}}^{(x)}_t$ and $\widehat{\bs{F}}^{(w)}_t$ using $\widehat{f}^{(x)}_{t,k}$ and $\widehat{f}^{(w)}_{t,k}$ to replace true $f_k$ in \eqref{eq:compact-network}.
\State Solve the following max-min optimization problem:
\begin{align} \label{eq:max-min-acq}
    \bs{x}_{t+1} = \argmax_{\bs{x} \in \mathcal{X}} \min_{(\bs{w}, \bs{H}) \in \mathcal{W} \times \mathbb{R}^K} \bs{c}^\top \bs{H}  ~~ \text{s.t.} ~~ \bs{H} = \widehat{\bs{F}}^{(x)}_t(\bs{x}, \bs{w}, \bs{H}).
\end{align}
\State Solve the following minimization problem:
\begin{align} \label{eq:min-acq}
    ( \bs{w}_{t+1}, \bs{H}_{t+1} ) = \argmin_{(\bs{w}, \bs{H}) \in \mathcal{W} \times \mathbb{R}^K} \bs{c}^\top \bs{H}  ~~ \text{s.t.} ~~ \bs{H} = \widehat{\bs{F}}^{(w)}_t(\bs{x}_{t+1}, \bs{w}, \bs{H}).
\end{align}
\For{$k=1,\ldots,K$}
\State Construct $\bs{z}_{t+1, k}$ from node $k$ subsets of $( \bs{x}_{t+1}, \bs{w}_{t+1}, \bs{H}_{t+1} )$.
\State Evaluate $f_k$ at $\bs{z}_{t+1, k}$ to get $y_{t+1,k}$ (noisy observation).
\State Update data: $\mathcal{D}_{t+1,k} \leftarrow \mathcal{D}_{t,k} \cup \{ (\bs{z}_{t+1, k}, y_{t+1,k}) \}$. 
\State Update GP posterior: $\mathcal{GP}_{t+1,k} \leftarrow \mathcal{GP}( \mu_{t+1,k}, \Sigma_{t+1,k} )$. See \eqref{eq:posterior-gp}.
\EndFor
\EndFor
\State \textbf{Return:} The recommended robust design (approximately solving \eqref{eq:max-min}):
\begin{align} \label{eq:recommender}
    \widetilde{\bs{x}}_N^\star = \argmax_{\bs{x} \in \mathcal{X}} \min_{(\bs{w}, \bs{H}) \in \mathcal{W} \times \mathbb{R}^K} \bs{c}^\top \bs{H}  ~~ \text{s.t.} ~~ \bs{H} = \bs{F}^\mu_N(\bs{x}, \bs{w}, \bs{H}),
\end{align}
where $\bs{F}^\mu_N$ is defined by replacing true $f_k$ with $\mu_{N,k}$ in \eqref{eq:compact-network}.
\end{algorithmic}
\label{alg:bonsai}
\end{algorithm}

This full process is described in Algorithm~\ref{alg:bonsai}. Lines 1 and 2 initialize the GP models from a set of initial evaluations. The main loop (starting at Line 3) iteratively draws the TS realizations (Lines 4 to 7), solves the optimization problems defined in terms of $\widehat{g}_t^{(x)}$ and $\widehat{g}_t^{(w)}$ (Lines 8 and 9), and updates the node-level GPs using the newly collected data (Lines 10 to 16). This procedure continues until the evaluation budget $N$ is exhausted.

After the final iteration, BONSAI returns a recommended design $\widetilde{\bs{x}}_N^\star$ that approximates the solution to \eqref{eq:max-min} (Line 17). In practice, this recommendation must be made using only the surrogate models, since the true objective remains inaccessible outside of data collected through function network evaluations. One reasonable strategy is to define a surrogate objective function by replacing each node function $f_k$ with its posterior mean $\mu_{N,k}$, and then solve the deterministic problem in \eqref{eq:recommender}, i.e., $\bs{c}^\top \bs{H}$ with $\bs{H} = \bs{F}_N^\mu(\bs{x}, \bs{w}, \bs{H})$ approximately equals $\mathbb{E}\{ g(\bs{x},\bs{w}) | \mathcal{D}_{N,1}, \ldots, \mathcal{D}_{N,K} \}$ where $\bs{F}_N^\mu$ denotes the function defining the compact network using node-wise posterior means instead of the true node functions. While this does not yield the exact posterior expectation of $g$, it provides a robust and computationally tractable approximation. This surrogate-based recommendation corresponds to a \textit{risk-neutral} decision rule.

In many settings, however, risk preferences may differ. A risk-averse decision maker may prefer to hedge against worst-case outcomes by evaluating a lower quantile of the objective, while a risk-seeking user may prefer to target more optimistic upper bounds. This can be handled in BONSAI by replacing the recommender with a $\tau$-quantile function for $g$. Lower $\tau$ values correspond to more pessimistic assumptions about the function’s value, e.g., $\tau$ values of 0.05, 0.5, and 0.95 would be considered risk-averse, risk-neutral, and risk-seeking, respectively. 
A differentiable quantile-based composite GP model was recently developed in \cite{Lu2023}, which can be extended to the function network-based setting considered here.

We provide a detailed illustration of BONSAI on the two-dimensional white-box test case in Figure~\ref{fig:grey_box} in \ref{app:bonsai}. 

\begin{remark}
    Although BONSAI is presented for solving max-min problems of the form \eqref{eq:max-min}, the framework can in principle be extended to handle feasibility constraints such as $\max_{\bs{w} \in \mathcal{W}} g_i(\bs{x}, \bs{w}) \leq 0$ where $g_i$ are expressed as function networks (similarly to the objective $g$) using the exact penalty function approach presented in our CARBO method \cite{Kudva2022}. In this case, worst-case constraints would be incorporated through a penalized TS surrogate in \eqref{eq:max-min-acq} and Line 9 of Algorithm \ref{alg:bonsai} would be modified to find the worst-case uncertainty for each unknown function. It is important to note that CARBO relies on rigorous upper/lower confidence bounds, which enable it to achieve strong convergence guarantees that would not carry over directly to BONSAI. The development of such rigorous yet tractable bounds for the function network setting remains an interesting and important direction for future work.
\end{remark}

\begin{remark}
    This work focuses on pure worst-case (adversarial) robustness, where the objective is to perform well even under the most unfavorable uncertainty values. Extensions to the stochastic ($\max_{\bs{x}} \mathbb{E}_{\bs{w} \sim P} \{ g(\bs{x}, \bs{w}) \}$) \cite{kirschner2019stochastic, toscano2022bayesian, cakmak2020bayesian} and distributionally robust ($\max_{\bs{x}} \inf_{ \mathcal{Q} \sim \mathcal{U} } \mathbb{E}_{\bs{w} \sim \mathcal{Q}}\{ g(\bs{x}, \bs{w}) \}$) \cite{kirschner2020distributionally, husain2023distributionally} formulations are conceptually straightforward by simply replacing the black-box assumption on $g$ with a function network \eqref{eq:network-functions}-\eqref{eq:network-objective}; however, they are practically non-trivial. The key difficulty is that this structure induces a non-Gaussian posterior, which breaks many acquisition strategies that rely on closed-form GP-based uncertainty propagation. Moreover, optimizing over expectations or worst-case distributions further increases computational cost. These settings are important directions for future work, especially in applications where uncertainty shifts during deployment (e.g., operational decision-making).
\end{remark}

\begin{remark}
    The choice of the uncertainty set $\mathcal{W}$ in the original problem formulation \eqref{eq:max-min} plays an important role in the practical performance of BONSAI. Larger sets lead to more conservative designs by enforcing stronger robustness guarantees. In practice, $\mathcal{W}$ should be chosen based on the application's tolerance to uncertainty -- this could be derived from physical intuition, expert knowledge, or data-driven confidence sets. Since BONSAI relies on construction of a surrogate model for $g(\bs{x}, \bs{w})$, it is straightforward to assess sensitivity to $\mathcal{W}$ by running post-hoc analyses (e.g., evaluating how the robust solution shifts under different choices of $\mathcal{W}$). Developing systematic ways to learn or adapt $\mathcal{W}$ from data within this framework and/or user preferences is another promising direction for future work. 
\end{remark}

\subsection{Regret and convergence analysis for the nominal setting}

To provide theoretical insight into BONSAI, we analyze a standard performance measure in the BO literature, the Bayesian cumulative regret (BCR):
\begin{align}
    \text{BCR}_T = \mathbb{E}\!\left\lbrace \sum_{t=1}^T g(\bs{x}^\star) - g(\bs{x}_t) \right\rbrace,
\end{align}
where $\bs{x}^\star = \argmax_{\bs{x}\in\mathcal{X}} g(\bs{x})$ is the global maximizer and $\bs{x}_t$ is the point selected at iteration $t$. 
We focus here on the \textit{nominal case}, where there is no uncertainty set (i.e., $\mathcal{W}$ is a singleton), so the inner minimization in \eqref{eq:max-min} is avoided. This isolates the function network aspect of BONSAI and already represents a non-trivial extension beyond existing analyses. Our main result shows that in this setting, BONSAI achieves a finite-time regret bound that cleanly separates into (i) a constant that depends on the network structure and (ii) an information gain term characterizing statistical complexity.

\textbf{Theorem} (Informal). Consider the nominal version of \eqref{eq:max-min} over a finite design space. Let each node function $f_k$ follow an independent GP prior with associated maximum information gain (MIG) $\gamma_{T,k}$, and define $\gamma^{\mathrm{sum}}_T = \sum_{k=1}^K \gamma_{T,k}$. Suppose all queries are corrupted with i.i.d. Gaussian noise and the network satisfies some mild technical regularity conditions. Then, BONSAI (Algorithm~\ref{alg:bonsai}) achieves
\begin{align}\label{eq:bcr-bound}
    \text{BCR}_T = O\!\left(\sqrt{T \gamma^{\mathrm{sum}}_T \log|\mathcal{X}|}\right).
\end{align}
% \textbf{Theorem} (Informal, BCR bound for nominal case). Consider the nominal version of \eqref{eq:max-min} over a finite design space. Let node functions $f_k$ be independent GPs with associated maximum information gain (MIG) $\gamma_{T,k}$ and summed version $\gamma^\text{sum}_{T} = \textstyle\sum_{k=1}^K \gamma_{T,k}$. Also, assume all network output queries are subject to i.i.d. Gaussian noise and the network satisfies certain (relatively mild) regularity assumptions. Then, running the nominal version of BONSAI (Algorithm \ref{alg:bonsai}) results in the following BCR bound:
% \begin{align} \label{eq:bcr-bound}
%     \text{BCR}_T = O\left( \sqrt{T} \gamma^\text{sum}_{T} \log|\mathcal{X}| \right).
% \end{align}

The detailed setup, assumptions, lemma/theorem statements, and proofs are given in \ref{app:theoretical-conv}. Importantly, \eqref{eq:bcr-bound} reduces to the classical TS regret bound in the black-box case ($K=1$) \cite[Theorem~3.1]{takeno2023posterior}, and thus properly generalizes existing results to function networks.

This result is useful because it directly implies \textit{convergence in probability to the global optimum} as long as the MIG decays quickly enough. Specifically, Lemma \ref{lem:BSRvsBCR} in \ref{app:theoretical-conv} shows that if $\text{BCR}_T$ grows sublinearly with $T$, then the Bayesian simple regret (i.e., expected difference between global solution and best-found solution) vanishes. From~\eqref{eq:bcr-bound}, this holds whenever $\gamma^{\mathrm{sum}}_T$ is sublinear in $T$, which is known for many popular kernels such as the squared exponential and Matern-$\nu$ families~\cite{srinivas2010gaussian}. To our knowledge, this is the first finite-time regret bound established for optimization over function networks, and it applies to a broad class of structures, including networks with cycles (which have not been previously analyzed).

Finally, we comment on why this analysis does not yet extend to the robust case. When $\mathcal{W}$ is non-trivial, regret is no longer directly measurable because of the inner minimization, and one must instead define and bound \textit{robust regret}. Existing approaches (e.g., \cite{Stableopt}, \cite{Kudva2022}) rely on confidence bounds, which can be analytically derived in the single-node case but are much harder to obtain for function networks since $g(\bs{x},\bs{w})$ is no longer Gaussian. Optimization-based approaches to construct such bounds, e.g., \cite{MCBO} may provide a practical route, but a full theoretical treatment is beyond the scope of this work. Nonetheless, the nominal regret bound motivates why TS can perform well in practice even in the robust two-stage setting, which is a conclusion we also empirically observe in our numerical experiments (Section \ref{sec:numerical-experiments}). 

\subsection{Implementation details}

The BONSAI method (Algorithm \ref{alg:bonsai}) involves a number of internal steps that we discuss in more detail in this section.

\paragraph{GP model selection and training} The choice of GP priors plays a central role in the performance of BONSAI, as each surrogate model must accurately represent its corresponding node function given limited observations. Unless otherwise specified, we adopt a zero-mean prior, $\mu_0(\bs{z}) = 0$, which is a standard assumption that does not limit generality when the training data are normalized (typically using z-score scaling) \cite{Bradford2018}. This prior simplifies inference while centering the surrogate on the data-driven signal.

For the covariance function, we use the Mat\'ern 5/2 kernel as the default choice due to its favorable balance between smoothness and flexibility, i.e., 
\begin{align*}
    \Sigma_{\text{Mat\'ern},5/2}(\bs{z}, \bs{z}') = \zeta^2 \left( 1 + \sqrt{5} r_{\bs{z},\bs{z}'} + \frac{5}{3} r_{\bs{z},\bs{z}'}^2 \right) \exp\left( -\sqrt{5} r_{\bs{z},\bs{z}'} \right),
\end{align*}
where $\zeta^2$ is the output variance and $r_{\bs{z},\bs{z}'} = \sqrt{ (\bs{z} - \bs{z}')^\top L^{-2} (\bs{z} - \bs{z}') }$ is the scaled Euclidean distance between inputs $\bs{z}$ and $\bs{z}'$, with $L = \text{diag}(\ell_1, \ldots, \ell_{n_z})$ being a diagonal matrix of lengthscale hyperparameters. Each lengthscale $\ell_i$ controls the sensitivity of the GP to variation in the $i$th input dimension, while the scale parameter $\zeta$ governs the overall magnitude of fluctuations in the function.
The covariance kernel encodes key assumptions about the smoothness and structure of the target function (often referred to as inductive biases) and should be selected carefully based on prior knowledge or validated through model performance. More expressive (e.g., non-stationary) covariance functions, such as those induced by the deep kernel learning framework \cite{wilson2016deep}, can be beneficial in high-dimensional or highly nonlinear settings.

All kernel hyperparameters (e.g., $\zeta$, $\ell_1,\ldots,\ell_{n_z}$) are learned individually for each node function $f_k$ by maximizing the marginal log-likelihood (MLL) of the observed data (see, e.g., \cite{Rasmussen2006} for details). While BONSAI supports heterogeneous GP surrogates -- allowing different kernels and hyperparameters per node -- the default configuration provides a robust and scalable baseline that works well across a broad range of applications.

\paragraph{Efficient Thompson sampling} A core component of BONSAI is solving the acquisition subproblems \eqref{eq:max-min-acq} and \eqref{eq:min-acq}, both of which require drawing realizations from the posterior distribution of each node function. To enable scalable and differentiable sampling in this setting, we adopt the method proposed by \cite{wilson2020efficiently}, which generates approximate posterior samples using a combination of random Fourier features (RFF) and exact GP updates.

The method proceeds in two stages. First, the GP prior for any function $f$, i.e.,  $f(\bs{z}) \sim \mathcal{GP}(0, \Sigma(\bs{z}, \bs{z}'))$ is expressed in weight-space form using Bochner’s theorem, which states that any stationary kernel $\kappa(\bs{z} - \bs{z}')$ can be represented as the expected dot product of sinusoidal features. Specifically, the kernel is approximated by a set of $D$ features as \cite{rahimi2007random}
\begin{align*}
    \kappa(\bs{z} - \bs{z}') &\approx \bs{\phi}(\bs{z})^\top \bs{\phi}(\bs{z}'), \\
    \bs{\phi}(\bs{z}) &= \sqrt{\frac{2}{D}} \left[ \cos(\bs{\omega}_1^\top \bs{z} + b_1), \ldots, \cos(\bs{\omega}_D^\top \bs{z} + b_D) \right]^\top,
\end{align*}
where each $\bs{\omega}_d \in \mathbb{R}^{n_z}$ is drawn from the spectral density of the kernel and $b_d \sim \text{Uniform}[0, 2\pi]$. Sampling a weight vector $\bs{\theta} \sim \mathcal{N}(0, I_D)$ then defines a prior function draw as $f_{\text{prior}}(\bs{z}) = \bs{\phi}(\bs{z})^\top \bs{\theta}$.

The second stage performs a function-space correction to ensure that the sample is consistent with any observed training data $\{ \bs{Z}, \bs{y} \} = \{ \bs{z}_i, y_i \}_{i=1}^{N}$. Let $\bs{\Phi} = [ \bs{\phi}(\bs{z}_1), \ldots, \bs{\phi}(\bs{z}_{N}) ]^\top \in \mathbb{R}^{N \times D}$ be the feature matrix. The posterior function sample $\widehat{f} \sim f | \bs{Z}, \bs{y}$ is then given by
\begin{align*}
    \widehat{f}(\bs{z}) = \bs{\phi}(\bs{z})^\top \bs{\theta} 
    + \Sigma(\bs{z}, \bs{Z})^\top \Sigma( \bs{Z}, \bs{Z} )^{-1} ( \bs{y} - \bs{\Phi} \bs{\theta} ).
\end{align*}
This expression consists of a global sample from the approximate prior plus a corrective term that exactly interpolates the training data using the full GP covariance function. The result is a fully differentiable posterior sample $\widehat{f}(\bs{z})$, which can be used within the BONSAI acquisition problems. Because the procedure is linear in the number of features and highly parallelizable across GPs, it scales well to multiple, repeated samples.

\paragraph{Optimizing the acquisition functions} Each BONSAI iteration requires solving two acquisition subproblems: a two-stage robust design problem \eqref{eq:max-min-acq} (Line 8) and a single-level uncertainty minimization problem \eqref{eq:min-acq} (Line 9). The second subproblem (i.e., minimizing over $\bs{w}$ for a fixed design $\bs{x}$) is relatively straightforward and can be solved using standard multi-start gradient-based optimization methods. We follow the default strategy from BoTorch \cite{balandat2020botorch}, which combines quasi-random Sobol sequence-based initialization with L-BFGS-B \cite{zhu1997algorithm} as the local solver. Since this problem is typically low-dimensional and smooth, it is well-suited to such solvers.

A practical challenge arises when the function network includes equality constraints, i.e., when the node outputs $\bs{H}$ are defined implicitly for cyclic graphs. In the \textit{acyclic} case, we can avoid explicitly solving the constraints by topologically sorting the graph and sequentially evaluating each node, using previously computed values for parent nodes. In contrast, the \textit{cyclic} case requires solving the system $\bs{H} = \widehat{\bs{F}}_{t}(\bs{x}, \bs{w}, \bs{H})$ ($\widehat{\bs{F}}_{t}$ is a generic TS for the function network, which will be different for the two acquisition problems) using a fixed-point solver or root-finding routine that must be embedded inside the acquisition optimization. In such cases, we need to replace L-BFGS-B with a general-purpose nonlinear programming solver such as IPOPT \cite{wachter2006implementation}, which can directly handle nonlinear equality constraints.

The robust design step (i.e., maximizing over $\bs{x} \in \mathcal{X}$) is much more computationally demanding due to its two-stage structure. Although this problem can be formulated as a generalized semi-infinite program (GSIP) \cite{vazquez2008generalized}, such formulations are impractical in the robust BO setting, as they require globally solving the inner problem at every outer iteration. Instead, we approximate the inner minimization over uncertainty by discretizing the uncertainty set $\mathcal{W}$ into a finite subset $\mathcal{W}_{\text{disc}} = \{ \bs{w}_1, \ldots, \bs{w}_m \}$, which could be predefined, sampled randomly, or constructed using deterministic quadrature. This results in a surrogate acquisition function
\begin{align} \label{eq:discrete-max-approx}
    \widehat{G}_t^{(x)}(\bs{x}) = \min_{j = 1,\ldots,m} \widehat{g}_t^{(x)}(\bs{x}, \bs{w}_j).
\end{align}
Thanks to efficient tensorized implementations in PyTorch \cite{NEURIPS2019_bdbca288}, this discrete form can be evaluated for several thousands or more $\bs{w}_j$ values with minimal overhead. However, the hard minimum in \eqref{eq:discrete-max-approx} is non-differentiable, which hinders the use of gradient-based optimization.

In the acyclic case, we can exploit the fact that $\widehat{g}_t^{(x)}(\bs{x}, \bs{w}_j)$ is differentiable with respect to $\bs{x}$ by construction since the computational graph is fully unrolled during the forward propagation step. We can then approximate \eqref{eq:discrete-max-approx} using the smooth ``fat minimum'' operator proposed in \cite{AmentUnexpected2023}. Let $\bs{q} = [q_1, \ldots, q_m]^\top \in \mathbb{R}^m$ be a vector of $m$ values. The fat maximum is defined as
\begin{align}
    \varphi_{\max}(\bs{q}) = \max_{j} q_j + \tau \log \left( \sum_{i=1}^m \left[ 1 + \left( \frac{q_i - \max_j q_j}{\tau} \right)^2 \right]^{-1} \right),
\end{align}
and the corresponding fat minimum is given by \( \varphi_{\min}(\bs{q}) = -\varphi_{\max}(-\bs{q}) \). This allows us to define a smooth surrogate
\begin{align}
    \widetilde{G}_t^{(x)}(\bs{x}) = \varphi_{\min}\left( \left[ \widehat{g}_t^{(x)}(\bs{x}, \bs{w}_1), \ldots, \widehat{g}_t^{(x)}(\bs{x}, \bs{w}_m) \right]^\top \right),
\end{align}
which can be optimized using multi-start L-BFGS, as the gradients of the objective with respect to \( \bs{x} \) are well-defined and efficient to compute.

In the cyclic case, however, the node outputs $\bs{H}$ are defined implicitly through fixed-point equations. As a result, even though $\widehat{g}_t^{(x)}(\bs{x}, \bs{w})$ can be evaluated for any $\bs{w}$, it is no longer differentiable with respect to $\bs{x}$ using standard automatic differentiation tools. In this setting, the fat minimum approximation is no longer usable with gradient-based methods. Instead, we suggest utilizing derivative-free optimization in this case where the outer maximization is solved using a global black-box solver such as a genetic algorithm \cite{holland1992genetic} or CMA-ES \cite{hansen2001completely}. This approach is slower but avoids the need to backpropagate through the fixed-point solver. We plan to explore more efficient robust optimization approaches for the cyclic case in future work.

Finally, we note that in the acyclic case, Line 11 of Algorithm~\ref{alg:bonsai} (which constructs the new input points $\bs{z}_{t+1,k}$ at which the potentially expensive node functions should be evaluated) can also be simplified. Since node outputs are evaluated sequentially, we do not need to build the full vector $\bs{H}_{t+1}$ in advance. Instead, we can recursively compute each node’s output and record its input-output pair as the topological sort is traversed. This improves both memory and implementation efficiency in the case of acyclic graphs.

% \subsection{Extension to constrained robust optimization}

\section{Numerical Experiments}
\label{sec:numerical-experiments}

\subsection{Baseline methods and performance metrics}

We evaluate the performance of BONSAI against three robust optimization baselines and two nominal BO strategies. These baselines are described below:
\begin{itemize}
    \item \textbf{Random}. The next sample point \(( \bs{x}_{t+1}, \bs{w}_{t+1} )\) is drawn uniformly at random from the product space \(\mathcal{X} \times \mathcal{W}\). The final recommended design follows the same strategy used by BONSAI in \eqref{eq:recommender}. This baseline highlights the performance losses incurred when data is passively collected rather than actively chosen to solve the optimization task.
    \item \textbf{ARBO-GP-Quantile}. This is a fully black-box robust BO method based on alternating confidence bounds, as proposed in \cite[Algorithm 1]{Paulson2022}. It uses the same two-stage acquisition optimization structure as BONSAI but applies it to a GP model defined directly over $g(\bs{x}, \bs{w})$. Thompson sampling is replaced with an alternating upper/lower confidence bound strategy, where we fix the standard deviation multiplier at 2 to correspond to an approximate 95\% confidence level. A Mat\'ern 5/2 covariance function is used throughout.
    \item \textbf{ARBO-GP}. This variant uses the same acquisition procedure as ARBO-GP-Quantile, but replaces the robust recommendation rule with a mean-based rule akin to that used in BONSAI (except applied to the black-box GP model for $g$).
    \item \textbf{LogEI}. A standard nominal BO method based on the log-Expected Improvement acquisition function introduced in \cite{AmentUnexpected2023}. To apply logEI to the robust optimization setting considered in this work, we fix the uncertain variables at a nominal value $\bar{\bs{w}}$, resulting in the approximate problem $\max_{\bs{x} \in \mathcal{X}} g(\bs{x}, \bar{\bs{w}})$. The recommended design is selected as the best observed $\bs{x}$ with respect to this nominal objective.
    \item \textbf{EIFN}. A nominal BO method that leverages the EIFN acquisition function proposed in \cite{BOFN}, which builds a GP over a known function network. As with logEI, the uncertain variables are fixed to a nominal value and the recommended design is the best sampled \(\bs{x}\) based on the nominal objective.
\end{itemize}
The two nominal baselines (logEI and EIFN) are included to highlight the importance of incorporating uncertainty into the optimization process. We focus on these particular surrogate-based methods because many alternative approaches to robust black-box optimization (such as local or evolutionary search strategies) are not designed for sample efficiency, particularly in the low-dimensional, limited-budget settings considered here (see \ref{app:surrogate-free-comparison} for a further discussion and comparison).

All methods are initialized using $n = 2n_x + 2n_w + 1$ samples, drawn uniformly at random over the corresponding input space. The total evaluation budget is fixed at 100 evaluations for each method on each problem instance.
We assess performance using the worst-case objective for the recommended design, i.e., $\max_{\bs{w} \in \mathcal{W}} g(\widetilde{\bs{x}}_N^\star, \bs{w})$ where $\widetilde{\bs{x}}_N^\star$ is the final recommended design from the given method (e.g., BONSAI uses \eqref{eq:recommender}). 
% When $\bs{x}^\star$ is unknown (as in real-world case studies), we substitute the best-found solution across all replicates and methods as a proxy.
We evaluate this metric every 5 iterations to monitor convergence and report its trajectory throughout the optimization process. To assess the variability introduced by random initialization and sampling, each method is repeated over 30 independent trials using different random seeds. We report the mean worst-case recommended objective value along with 95\% confidence intervals computed using the standard error formula.

All BO related methods were implemented using the BoTorch \cite{balandat2020botorch} package. The code to reproduce all of our numerical experiments is available at \href{https://github.com/PaulsonLab/BONSAI}{https://github.com/PaulsonLab/BONSAI}.

\subsection{Case studies}

\subsubsection{Synthetic test functions}

We construct four synthetic test problems by adapting well-known benchmark functions from the global and robust optimization literature into function networks. These networks vary in dimensionality and complexity, offering a range of challenges to evaluate BONSAI’s performance. A visual summary of each function network is provided in Figure~\ref{fig:syntheic_function_network}, and full mathematical details are included in \ref{app:synthetic-test}.

\textbf{Polynomial.} This problem consists of $K = 4$ nodes, with $n_x = 2$ design variables and $n_w = 2$ uncertainties. The first three nodes are black-box components, and the final node is a white-box aggregation.

\textbf{Cliff.} This higher-dimensional network includes $K = 6$ nodes with $n_x = 5$ design variables and $n_w = 5$ uncertainties. The first five nodes are black-box components, followed by a white-box final node.

\textbf{Rosenbrock.} A compact network with $K = 4$ nodes, this problem uses $n_x = 1$ design variable and $n_w = 2$ uncertainties. As before, the last node is a white-box aggregator applied to three upstream black-box nodes.

\textbf{Modified Sine.} This example features $K = 7$ nodes, $n_x = 2$ design variables, and $n_w = 2$ uncertainties. The first six nodes are black-box functions, while the final node performs white-box aggregation.

\begin{figure}[htb!]
  \centering
  \includegraphics[width=1\textwidth]{./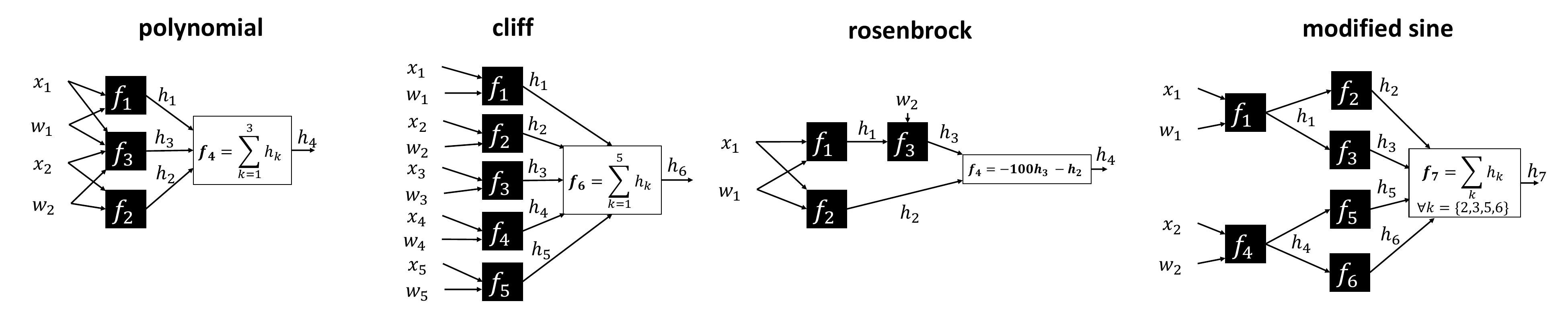}
  \caption{Function network structures for the synthetic test problems. Nodes correspond to component functions and directed edges indicate dependency. White-box aggregators are shown in white.}
  \label{fig:syntheic_function_network}
\end{figure}

To further illustrate the modeling challenges posed by these synthetic benchmarks, Figure~\ref{fig:two_D_true} shows contour plots of the projected worst-case objective $G(\bs{x}) = \min_{\bs{w} \in \mathcal{W}} g(\bs{x}, \bs{w})$ for the Polynomial and Modified Sine problems. These two cases were selected because they offer 2-dimensional design domains that are easily visualized. Both landscapes exhibit strong nonstationarity, with sharp variations in some regions and flat behavior in others, leading to objective functions that are highly nonlinear, anisotropic, and difficult to model using global GP surrogates. As seen in the figure, the robust optimum $\bs{x}^\star$ (marked by a star) lies near regions of high variability, further highlighting the importance of an adaptive, structure-aware optimization strategy.

\begin{figure}[htb!]
  \centering
  \includegraphics[width=1\textwidth]{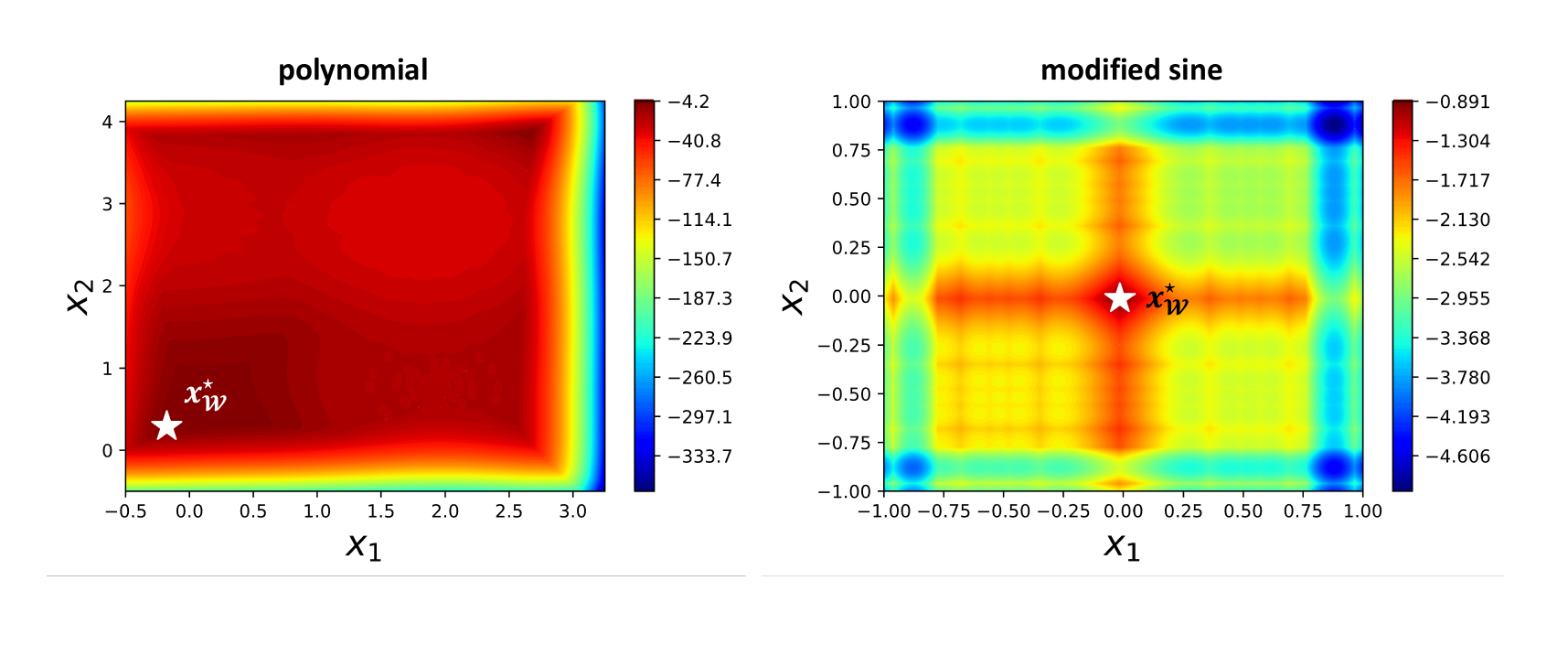}
  \caption{Contour plots of the robust objective $G(\bs{x}) = \min_{\bs{w} \in \mathcal{W}} g(\bs{x}, \bs{w})$ for the Polynomial (left) and Modified Sine (right) problems. The robust optimum $\bs{x}^\star$ is marked by a star. These plots reveal the strong nonstationarity and multimodality that make these functions particularly difficult for existing black-box robust optimization methods.}
  \label{fig:two_D_true}
\end{figure}

\subsubsection{Robot pushing under uncertainty}

We consider a robust control problem adapted from the robot pushing benchmark in \cite{Zi2017, LIS3}, where the goal is to determine a control policy that reliably pushes an object to an unknown target location on a 2-dimensional plane. The object begins at a fixed location $(-0.5, -2)$, while the robot's initial position $(x_1, x_2)$ and pushing duration $x_3$ are treated as design variables. The design domain is given by $\mathcal{X} = [-5, 5]^2 \times [1, 70]$.

The system is modeled as a function network with six black-box nodes and a white-box final node. The black-box nodes simulate the robot and object dynamics based on discrete-time integration at three time points: $x_3/3$, $2x_3/3$, and $x_3$. The output of this simulation is the final object position $r_f \in \mathbb{R}^2$. The final node computes the negative distance to an unknown target location $\bs{w}$ that represents our uncertainty in this problem. As such, our objective function can be expressed as $g(\bs{x}, \bs{w}) = -\left\| r_f(\bs{x}) - \bs{w} \right\|_2$.
The goal is to maximize the worst-case objective over $\mathcal{W}$, yielding the equivalent max-min formulation in \eqref{eq:max-min}. The uncertainty set $\mathcal{W}$ consists of 320 known target locations that represent potential object goals. These are generated from a mixture of two components: (i) 20 samples drawn from a bivariate normal distribution and (ii) 300 samples drawn uniformly from a rectangular region. This construction reflects both concentrated and dispersed uncertainties in the robot's operating environment.
To benchmark against nominal methods (e.g., logEI and EIFN), we define a representative nominal target as the centroid of the uncertainty set $\bar{\bs{w}} = \frac{1}{|\mathcal{W}|} \sum_{\bs{w} \in \mathcal{W}} \bs{w}$. 

An illustration of the function network structure for this problem is shown on the right side of Figure~\ref{fig:heatX_robot}. 

\begin{figure}[htb!]
  \centering
  \includegraphics[width=1\textwidth]{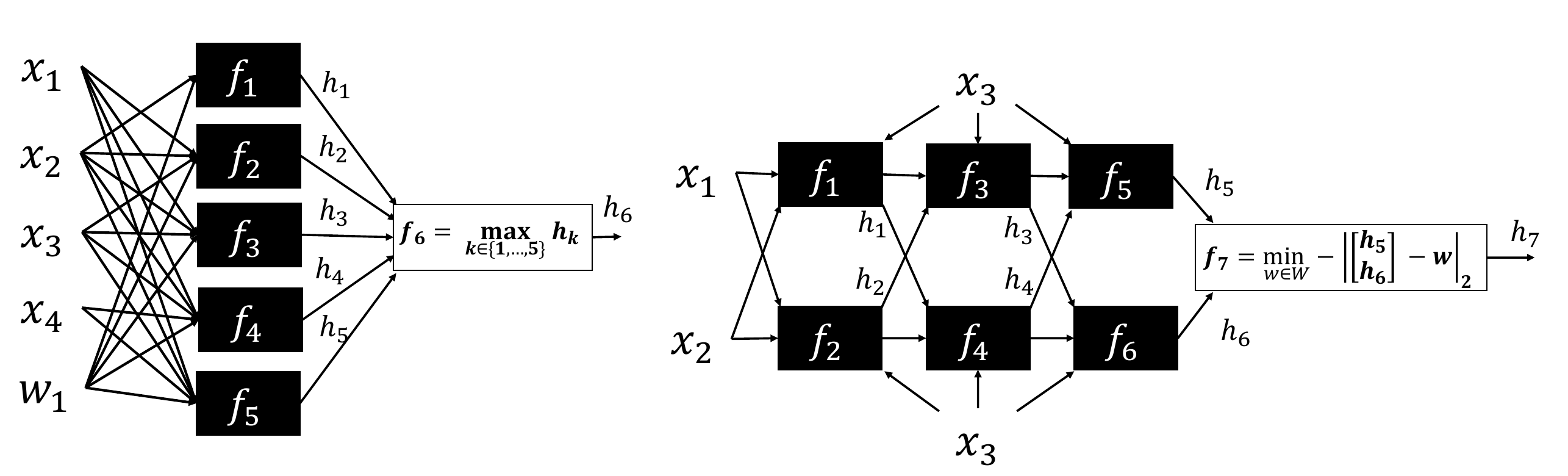}
  \caption{Function network representation of the heat exchanger network flexibility problem (left) and the robust robot pushing problem (right).}
  \label{fig:heatX_robot}
\end{figure}

\subsubsection{Heat exchanger network flexibility}

This case study adapts a classic heat exchanger network (HEN) flexibility test problem originally proposed in \cite{Grossmann1987}. The goal is to determine whether a proposed design can maintain feasible operation across a specified range of uncertainty, a critical concern in process system design. Traditionally, the flexibility test is cast as a tri-level (max–min–max) optimization problem: the outer maximization is over uncertain variables (mapped to $\bs{x}$ in our framework), the middle minimization is over recourse or control variables (mapped to $\bs{w}$), and the innermost maximization is over all inequality constraints that are required to be $\leq 0$ to be considered feasible. If the worst-case constraint violation can be driven to zero or below, the design is considered flexible.

In our BONSAI framework, we recast this tri-level formulation as a two-stage robust optimization problem over a function network. Specifically, we define the objective as $g(\bs{x}, \bs{w}) = \max_{k=1,\ldots,K} f_k(\bs{x}, \bs{w})$ where each $f_k$ corresponds to an inequality constraint function. While these constraints can be derived analytically from energy balances applied to the HEN, we treat them as black-box components in this study to simulate the more general setting of simulation-based constraints. The final objective function $g(\bs{x}, \bs{w})$ - the maximum constraint violation - is treated as a white-box aggregation node that composes the outputs of the five black-box nodes, as illustrated in the formulation of the function network (Figure~\ref{fig:heatX_robot}, left).

The design variables $\bs{x} \in \mathbb{R}^4$ represent inlet temperatures for four process streams: $T_1$, $T_3$, $T_5$, and $T_8$. The recourse variable $\bs{w}$ corresponds to the cooling heat load. All variable bounds and constraint formulations are taken from the original study in \cite{Grossmann1987}. We assume a nominal cooling heat load of 90 kW/K.
The optimal robust design is given by $\bs{x}^\star = [615, 383, 578, 318]^\top$ K. For this design, the worst-case objective is $\min_{\bs{w} \in \mathcal{W}} g(\bs{x}^\star, \bs{w}) = 0$, which indicates that the design is flexible. That is, for all realizations of $\bs{w}$ in the uncertainty set, there exists a feasible recourse action that satisfies all constraints.

% \begin{figure}[htb!]
%   \centering
%   \includegraphics[width=0.75\textwidth]{figures/HeatX.pdf}
%   \caption{Schematic of the heat exchanger network problem adapted from \cite{Grossmann1987}. Each unit corresponds to a specific heat exchange operation between process and utility streams.}
%   \label{fig:HeatX}
% \end{figure}

\subsubsection{Robust neural network hyperparameter tuning}

We consider the problem of robust neural network (NN) hyperparameter tuning under deployment-specific perturbations. This problem is motivated by practical scenarios, such as embedded vision systems or medical imaging diagnostics, where trained models are deployed in environments that may deviate from the clean training setting. Our goal is to identify design variables that yield high test accuracy under the worst-case deployment conditions.

The network architecture is fixed to a feedforward NN with two hidden layers. We optimize over three design variables: learning rate ($x_1$), number of nodes per hidden layer ($x_2$), and the $L_2$ regularization coefficient ($x_3$). The network is trained on clean grayscale images from the UCI Digits dataset \cite{scikit_learn}, where each image is an $8 \times 8$ grid with pixel intensities scaled to $[0, 16]$.

Deployment uncertainty is modeled via image perturbations, represented as two unknown environments: $f_1$ and $f_2$. In each case, additive uniform noise in the range $[0,5]$ is applied to a distinct subset of image rows. For environment $f_1$, the perturbation set $\mathcal{W}_1$ consists of four discrete cases in which noise is applied to adjacent row pairs: rows 1-2, 3-4, 5-6, and 7-8. For environment $f_2$, the perturbation set $\mathcal{W}_2$ defines four different cases where the same magnitude of noise is applied to alternating rows: (1,3), (2,4), (5,7), and (6,8). We express these as a finite uncertainty set
\begin{align*}
    \mathcal{W} = \{ w_1^{(1)}, w_1^{(2)}, w_1^{(3)}, w_1^{(4)}, w_2^{(1)}, w_2^{(2)}, w_2^{(3)}, w_2^{(4)} \},
\end{align*}
where each $w_k^{(i)}$ corresponds to a distinct test-time perturbation configuration. 

See Figure \ref{fig:classifier_network} for an illustration of the complete function network pipeline for this problem.
The function network includes three nodes: two black-box nodes representing the environment-specific classification accuracy functions and one white-box node computing the average accuracy. The objective is to identify the hyperparameters $\bs{x}$ that maximize the worst-case average classification accuracy over $\mathcal{W}$. The nominal uncertainty value $\bar{\bs{w}}$ simply neglects the noise in the two environments. This case study illustrates how BONSAI can be used to tune machine learning models for deployment across heterogeneous conditions.

\begin{figure}[htb!]
  \centering
  \includegraphics[width=1\textwidth]{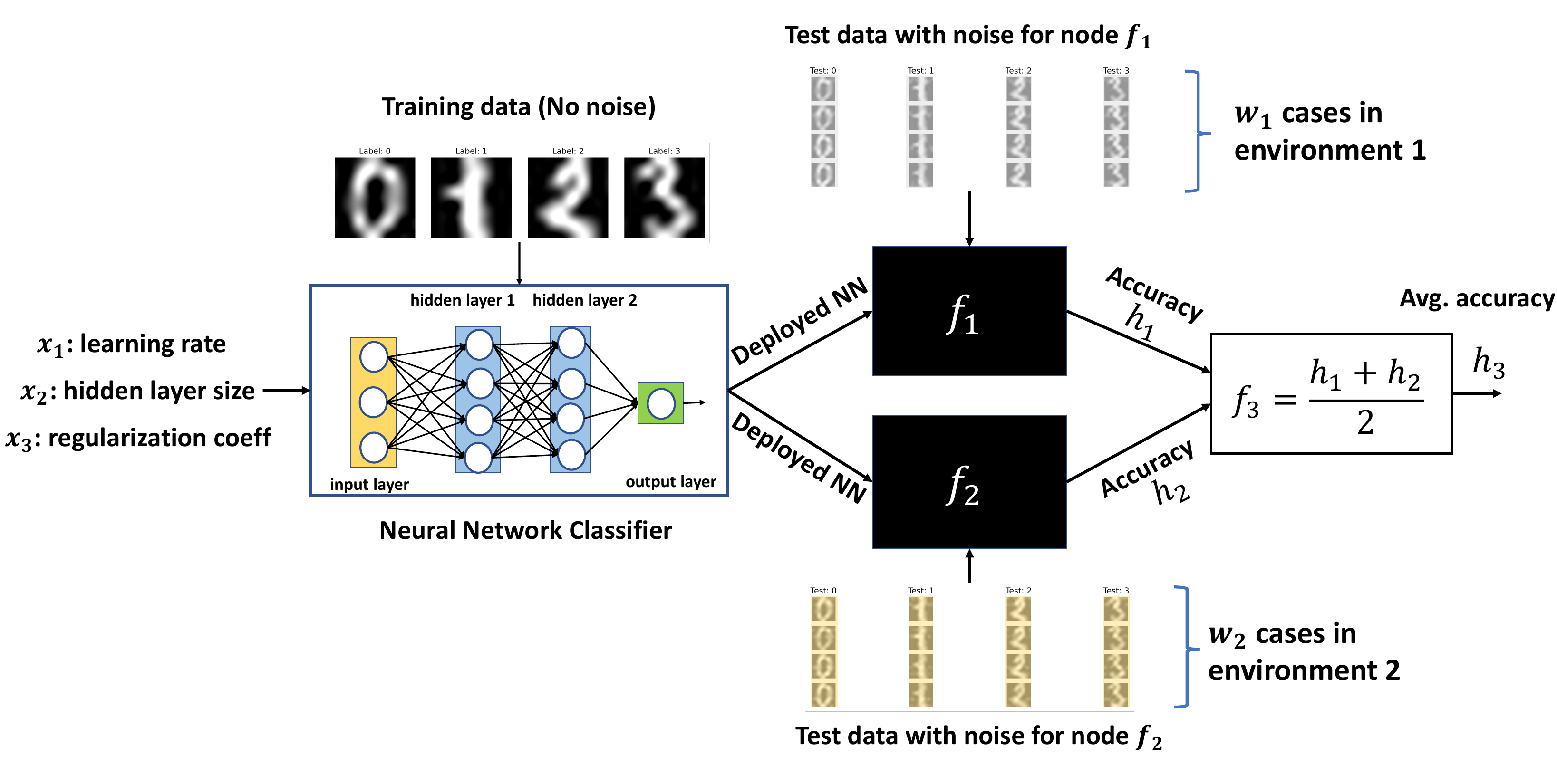}
  \caption{Function network representation of the robust neural network (NN) hyperparameter tuning problem. A NN is trained under specific hyperparameter settings on clean training data and evaluated under deployment-specific structured noise. Accuracy is computed separately in each environment and aggregated to form a scalar performance metric. BONSAI selects hyperparameters to maximize average accuracy under worst-case test-time perturbations across a discrete set of known noise cases.}
  \label{fig:classifier_network}
\end{figure}

\subsubsection{Robust design of a vibration absorber}

Dynamic vibration absorbers are widely used in engineering systems to attenuate resonant oscillations, such as roll motions in ships, structural vibrations in buildings, and torsional fluctuations in rotating machinery \cite{Pennestr1998}. In this case study, adapted from Example 4.2 in \cite{Marzat2015}, we consider a classical mechanical setup: a large primary mass is excited by a harmonic input of uncertain frequency and a secondary mass-spring-damper system is introduced to counteract the resulting vibrations.

The design objective is to tune two absorber parameters -- damping ratio $x_1$ and natural frequency $x_2$, so that the amplitude of oscillations in the primary mass is minimized under worst-case excitation frequency. That is, we seek a design that is robust to variations in the environmental disturbance frequency $w_1$. 
The system is modeled as a function network composed of three black-box nodes and a white-box aggregation node. The black-box nodes compute intermediate dynamical quantities derived from the physical equations of motion
\begin{align*}
    h_1 &= \sqrt{\left(1 - \frac{w_1^2}{x_2^2}\right)^2 + 4 \left( \frac{x_1 w_1}{x_2} \right)^2}, \\
    h_2 &= \frac{w_1^2}{x_2^2}(w_1^2 - 1) - w_1^2(1 + c_1) - 4\frac{x_1 c_2 w_1^2}{x_2} + 1, \\
    h_3 &= \frac{c_2 w_1^3}{x_2^2} + \frac{x_1 w_1^3(1 + c_1) - x_1 w_1}{x_2} - c_2 w_1,
\end{align*}
where $c_1 = c_2 = 0.1$ are fixed system constants. The outputs from these nodes are aggregated through a white-box function that computes the final performance index (negative amplitude of the primary mass)
\begin{align*}
    h_4 = -\frac{1}{\sqrt{h_2^2 + 4h_3^2}} h_1.
\end{align*}

The design space is given by $\mathcal{X} = [0.05, 0.5] \times [1.0, 2.0]$. The uncertainty set $\mathcal{W}$ is discretized over the interval of $[0.05, 2.5]$ as follows
\begin{align*}
    \mathcal{W} = \left\{ w_1 \in [0.05, 2.5] : w_1 = 0.05 + 0.05 k, \; k = 0, 1, \ldots, 49 \right\}.
\end{align*}
This corresponds to 50 evenly spaced frequency values, selected to ensure dense coverage of the uncertainty range while maintaining computational tractability in robust evaluation.

The resulting function network consists of $K = 4$ nodes (three black-box and one white-box), with $n_x = 2$ design variables and $n_w = 1$ uncertain variable. The true robust solution (computed via exhaustive search over a fine grid) is found to be $\bs{x}^\star \approx [0.199, 0.862]^\top$, with a corresponding worst-case objective of approximately $-2.623$. The nominal uncertainty value is 1.275, which is the midpoint of the uncertainty set.

\subsection{Results and discussion}

Figure~\ref{fig:robust-reward} shows the evolution of the worst-case recommended objective value over 100 evaluations for each method across all case studies. Since we are solving a max-min problem, higher values correspond to better performance. BONSAI consistently achieves the highest robust objective values in all domains, indicating its ability to identify designs that remain effective even under adversarial uncertainty.
We observe especially strong performance gains on the two real-world process systems engineering problems -- the HEN and vibration absorber design problems. These domains are characterized by complex physical models with coupled nonlinear constraints. BONSAI’s ability to exploit intermediate structure allows it to navigate these search spaces more efficiently, leading to substantially more robust designs than either nominal or black-box robust baselines.

\begin{figure}[htb!]
  \centering
  \includegraphics[width=1\textwidth]{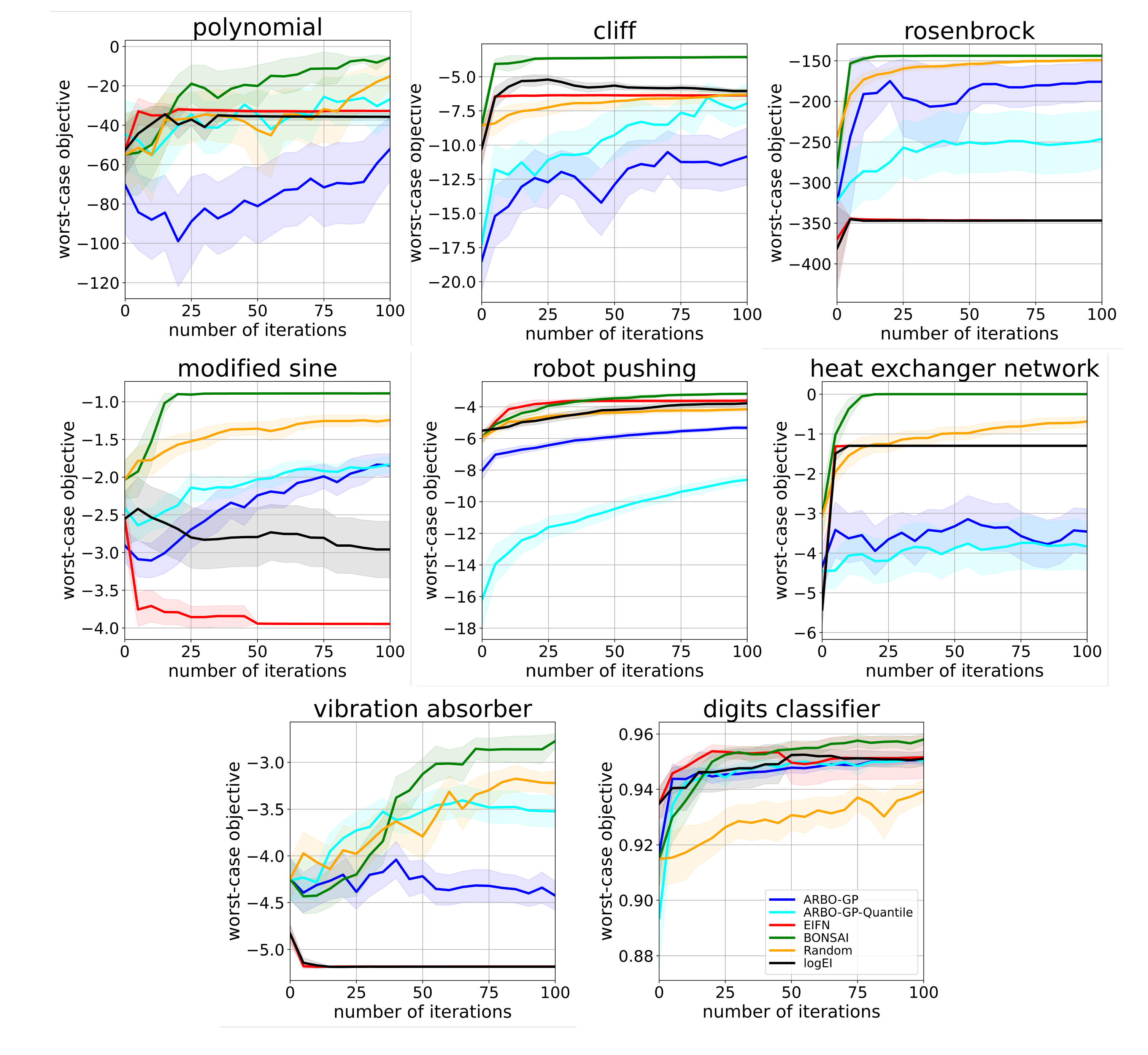}
  \caption{Evolution of the worst-case objective value for the recommended design over 100 evaluations for all algorithms. Curves denote the average performance over 30 random seeds; shaded regions correspond to the estimated 95\% confidence region. Higher values are better. BONSAI consistently outperforms both black-box robust and nominal baselines across all problems. Notably, it shows strong gains in structured engineering problems such as the heat exchanger network and vibration absorber.}
  \label{fig:robust-reward}
\end{figure}

Nominal optimization methods such as logEI and EIFN converge rapidly, but their performance plateaus early and remains far from the true robust optimum. EIFN benefits modestly from structural modeling, but like logEI, fails to account for variability due to the uncertain variables, making the recommended designs perform poorly when subjected to adversarial uncertainty. Robust black-box approaches such as ARBO-GP and ARBO-GP-Quantile capture uncertainty but remain sample-inefficient due to their inability to leverage the internal structural information in the function network. 
Interestingly, combining random sampling with the BONSAI recommendation rule (i.e., using GP function network posteriors without structured acquisition) still yields better results than all non-BONSAI baselines, emphasizing the importance of structure-aware (function network) recommendation. 

In \ref{app:ablation}, we conduct two ablation studies to further investigate the contributions of BONSAI’s acquisition function optimization and recommendation mechanisms. The first study evaluates different combinations of search strategies and recommendation rules, showing that BONSAI’s structure-aware search paired with a function network-based recommendation yields the most robust designs. The second study examines a quantile-based function network recommendation strategy, demonstrating that risk-averse recommendations are particularly valuable in early iterations when model uncertainty is high. These studies reinforce the importance of leveraging both structural information and uncertainty quantification when solving adversarial optimization problems.
In \ref{app:runtime}, we report the average per-iteration runtime for BONSAI and ARBO across all case studies to quantify the computational tradeoffs introduced by function network-based modeling. While BONSAI is moderately more expensive, this overhead remains negligible relative to the cost of real-world simulations where each function evaluation may take several minutes to hours or longer. Moreover, BONSAI’s improved robustness and sample efficiency often justify this additional cost. Future research could explore adaptive mechanisms to tune this tradeoff dynamically based on the cost of the underlying evaluation process.

\section{Conclusion}
\label{sec:conclusion}

This work introduced \textbf{BONSAI}, a new framework for robust Bayesian optimization of structured black-box systems. By representing the objective function as a function network (i.e., a directed graph of known and unknown components), BONSAI effectively leverages partial structural knowledge commonly available in engineering problems (especially those involving complex, high-fidelity simulators). This ``structure-aware'' surrogate modeling enables more accurate uncertainty quantification and more targeted acquisition strategies for robust optimization under limited evaluation budgets.
Our numerical results across eight case studies -- including both synthetic and real-world engineering problems -- demonstrate that BONSAI consistently outperforms existing robust BO methods such as ARBO. The most significant gains are observed in problems with hierarchical or modular structures, where intermediate outputs can be meaningfully reused during learning. Ablation studies further show that both the structure-aware surrogate and the proposed two-stage (max-min) Thompson sampling acquisition strategy substantially contribute to BONSAI’s improved efficiency and robustness.

Looking forward, several important directions remain for future work. First, the current BONSAI framework is heuristic and lacks theoretical guarantees; analyzing its convergence behavior and establishing robust regret bounds could provide valuable insights into its performance limits. Second, many real-world applications require balancing multiple objectives under uncertainty, motivating the need to extend BONSAI to robust multi-objective optimization settings. While our recent work has demonstrated how to perform efficient multi-objective optimization over function networks \cite{MOBONS}, combining this with adversarial robustness introduces new (significant) computational and algorithmic challenges that remain open. Third, many simulation-based design problems involve hard or soft constraints. Adapting the BONSAI framework to handle constraints -- by modeling them using parallel function networks and incorporating them into the acquisition process via penalty-based techniques, as in the CARBO method \cite{Kudva2022} -- would significantly broaden its practical utility. Fourth, this work assumes that the uncertainty set is fixed and known. Exploring how robust solutions evolve as the uncertainty set is expanded or perturbed could offer insights into the sensitivity of recommended robust optimal designs. Finally, BONSAI currently relies on standard GP priors for black-box nodes. Investigating alternative covariance function choices and developing adaptive strategies to modify the GP prior based on the structure or dimensionality of each node could help improve performance in high-dimensional or data-scarce settings.

\section*{Acknowledgment}

The authors gratefully acknowledge support from the National Science Foundation under Grant \#2237616.

%% References
\bibliographystyle{elsarticle-num}           % Include this if you use bibtex
\bibliography{references}

\appendix
% \numberwithin{equation}{section}
% \numberwithin{figure}{section}

\section{Illustration of BONSAI}
\label{app:bonsai}

\setcounter{figure}{0}
\renewcommand{\thefigure}{A\arabic{figure}}

To provide further intuition into how BONSAI operates, Figure~\ref{fig:bonsai_steps} illustrates its progression on the two-dimensional white-box test case in Figure~\ref{fig:grey_box}. The top row visualizes the true function $g(x, w)$, with contour lines plotted over the design variable $x$ and uncertainty $w$. The magenta star denotes the global robust solution, defined as the solution to $\max_x \min_w g(x,w)$, and the red vertical line indicates the robust design that would be selected given perfect knowledge of the system. The lower panels depict how BONSAI incrementally refines its understanding of the function network and its predictions across nine optimization iterations. Each row corresponds to one iteration. The left column shows the posterior mean of $g(x, w)$ based on the GP-based function network, where previously sampled points are shown in black and the current query point is highlighted with a magenta diamond. The center column displays the Thompson sample realization of the lower-level objective $\widehat{g}_t^{(x)}(x, w)$, which is used to identify the next design point by solving the max-min problem. The right column plots the mean and confidence bounds for the solution to the inner problem $\min_w g(x, w)$ derived from the posterior for $g$. The green vertical line indicates the current recommended design based on the surrogate model using the recommender in \eqref{eq:recommender}. As the BONSAI proceeds, it rapidly converges to the true robust solution.

\begin{figure}[htb!]
  \centering
  \includegraphics[width=0.75\textwidth]{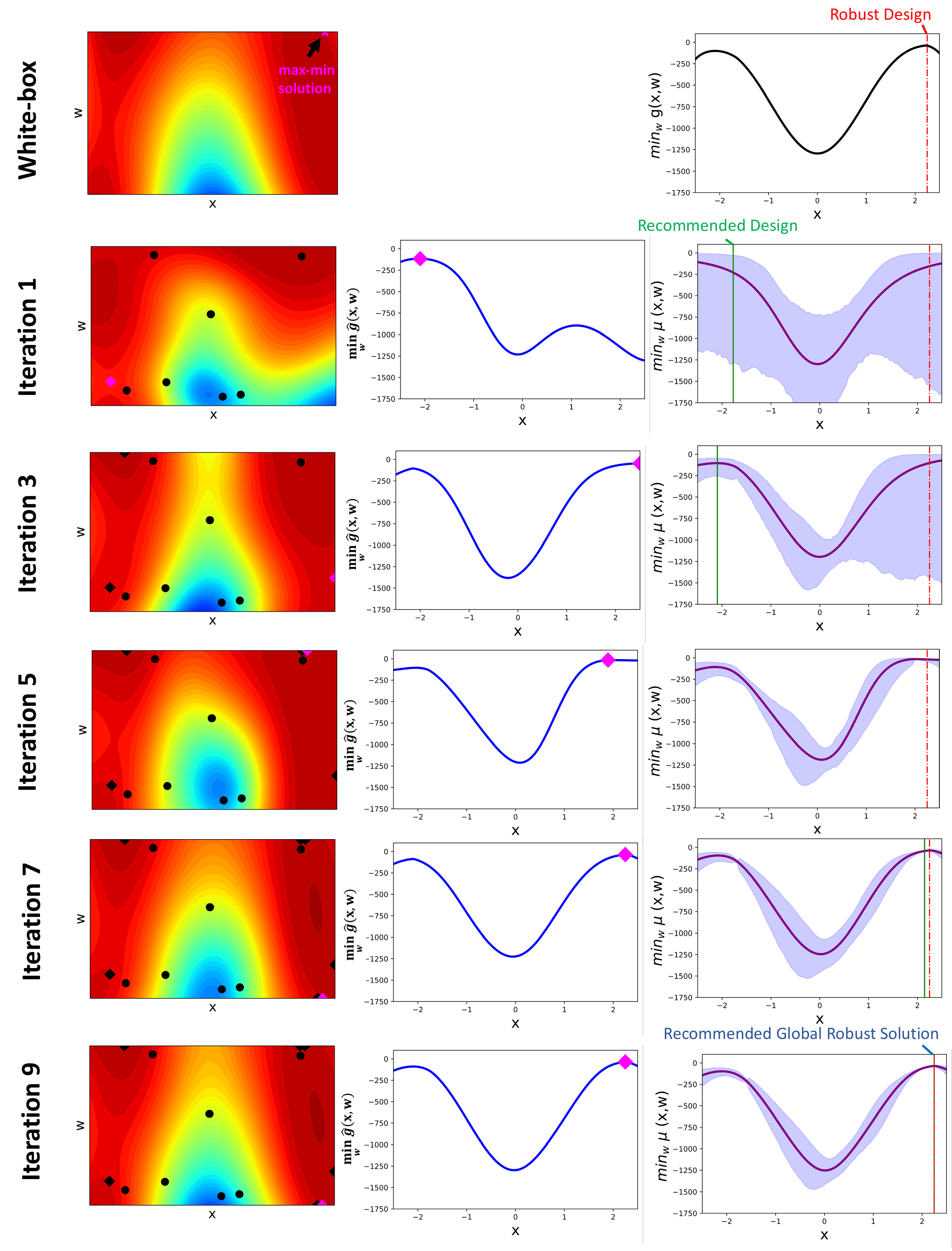}
  \caption{
    Step-by-step progression of BONSAI (Algorithm~\ref{alg:bonsai}) applied to an illustrative example problem.
    \textbf{Top row:} Contours of the true objective $g(x, w)$, as in Figure~\ref{fig:grey_box}. The magenta star denotes the global robust solution $x^\star$, and the red vertical line marks the robust design obtained with perfect knowledge of $g$.
    \textbf{Iterations 1-9:} Visualization of BONSAI across optimization steps.
    \textbf{Left column:} Posterior mean of the network GP surrogate for $g(x,w)$, with all past evaluation points (black dots) and current query (magenta diamond).
    \textbf{Center column:} Thompson-sampled inner objective \( \widehat{g}_t^{(x)}(x, w) \) used for robust design selection.
    \textbf{Right column:} Predicted solution to inner problem $\min_w g(x, w)$ using the posterior for $g$ given all past data and function network structure. The green vertical line indicates the recommended design at each step. By iteration 9, the recommendation matches the global robust optimum.
  }
  \label{fig:bonsai_steps}
\end{figure}

\section{Theoretical Regret Analysis for Nominal Case}
\label{app:theoretical-conv}

In this appendix, we analyze the Bayesian regret \cite{russo2014learning, pmlr-v84-kandasamy18a, takeno2023posterior} of a simplified version of BONSAI in which the uncertainty set $\mathcal{W}$ reduces to a singleton (so no inner minimization over $\bs{w}$ is required). While this corresponds to the nominal (non-robust) setting, it captures an essential difficulty of optimization over function networks. Establishing guarantees in this case provides insight into the performance of BONSAI and ensures that the method is theoretically well-grounded. To our knowledge, this is the first extension of Thompson sampling (TS) regret analysis from the classical black-box setting \cite{takeno2023posterior} to the more general function network framework. Moreover, we show that the results extend beyond directed acyclic graphs to cyclic function networks, thereby providing the first finite-time regret bounds (and associated convergence guarantees) for this broader and practically important class of models.

\subsection{Problem setup and notation}

We are interested in analyzing the nominal version of \eqref{eq:max-min} where $g$ is an expensive-to-evaluate objective function that only depends on design variables $\bs{x}$. Formally, we want to maximizing this function, i.e., identify $\bs{x}^\star = \argmax_{\bs{x} \in \mathcal{X}} g(\bs{x})$ over the design/input space $\mathcal{X} \subset \mathbb{R}^{D}$. We assume that $g$ is defined in terms of a ``function network'' such as \eqref{eq:network-functions}. To simplify our initial analysis, we will assume this network is a directed acyclic graph (DAG) with nodes $k=1,\dots,K$ in topological order:
\begin{align} \label{eq:fn-nom}
    h_k(\bs{x}) &= f_k\left( \bs{x}_{I(k)}, \{ h_j(\bs{x}) \}_{j \in J(k)} \right), \qquad k = 1, \ldots, K,
\end{align}
with $g(\bs{x}) = h_K(\bs{x})$, where $I(k) \subseteq \{ 1, \ldots, D \}$ is the subset of the design variables affecting node $k$ and $J(k) \subseteq \{ 1, \ldots, K \} \setminus \{ k \}$ is the subset of other node outputs that affect node $k$. Due to the acyclic structure of the graph, we can evaluate each function in the network as the values of its parent nodes become available, meaning that we can always evaluate \eqref{eq:fn-nom} through a forward iterative process. 

One evaluation of $g$ at a specific $\bs{x}$ reveals all node outputs at the induced inputs, i.e., the ``full network observation'' case. We focus on the Bayesian setting where all nodes are sample paths from \textit{independent} Gaussian processes (GPs) with zero mean and stationary covariance (or kernel) functions $\Sigma_{0,k}$, i.e., $f_k \sim \mathcal{GP}( 0, \Sigma_{0,k} )$ for all $k = 1,\ldots, K$. Thus, at each iteration $t$, our algorithm decides the next query point $\bs{x}_t$ at which we evaluate the function network based on the Bayesian model and our choice of acquisition function. We assume that the observations at each node $y_{t,k} = h_k( \bs{x}_t ) + \epsilon_{t, k} = f_k( \bs{z}_{t, k} ) + \epsilon_{t, k}$ is contaminated by Gaussian noise $\epsilon_{t, k} \sim \mathcal{N}( 0, \sigma^2 )$ where $\bs{z}_{t,k} = ( \bs{x}_{t, I(k)}, \{ h_j( \bs{x}_t ) \}_{j \in J(k)} )$ is our shorthand for the full set of inputs to unknown node function $f_k$ at iteration $t$. Let $\mathcal{D}_{t-1} = \cup_{k=1}^K \mathcal{D}_{t-1, k}$ be the complete set of data available at the beginning of the $t$-th iteration, with $\mathcal{D}_{t-1, k} = \{ (\bs{z}_{i,k}, y_{i,k}) \}_{i=1}^{ n_{t-1} }$ and $n_{t-1} > 0$. Then, the posterior distribution $p( f_k | \mathcal{D}_{t-1} ) = p( f_k | \mathcal{D}_{t-1, k} )$ (due to independence) remains a GP for each $k = 1,\ldots, K$, whose posterior mean $\mu_{t-1, k}(\bs{x})$ and variance $\sigma^2_{t-1, k}(\bs{x})$ functions can be derived in closed-form using standard Gaussian math (similar to \eqref{eq:posterior-gp}). It is important to note that the functions $h_k$ are in general not GPs (as they are compositions of GPs) such that $g$ is generally not a GP. 

Even though the prediction of $g(\bs{x}) | \mathcal{D}_{t-1}$ is not Gaussian, it is a random variable with a well-defined mean $\mu^{(g)}_{t-1}( \bs{x} )$ and standard deviation $\sigma^{(g)}_{t-1}( \bs{x} )$ for any inference point $\bs{x}$, which will be needed in our subsequent analysis. 

The bounds that we derive will depend on the \textit{maximum information gain} (MIG) of the node functions -- a fundamental quantity in the Bayesian experimental design literature \cite{srinivas2010gaussian}. MIG provides a measure of informativeness of any finite set of sampling points, and can be defined as follows:

\begin{definition}
    Let $f_k \sim \mathcal{GP}(0, \Sigma_{0,k})$ and $\mathcal{A} \subset \mathcal{X}$ be any subset of points sampled from $\mathcal{X}$. Then, the MIG after $T$ observations $\gamma_{T,k}$ of $f_k$ is defined as:
    \begin{align*}
        \gamma_{T, k} = \max_{\mathcal{A} \subset \mathcal{X}, |\mathcal{A}| = T} I( \bs{y}_{\mathcal{A}, k} ; \bs{f}_{\mathcal{A}, k} ),
    \end{align*}
    where $I$ denotes the (Shannon) mutual information of two random variables and $\bs{y}_{\mathcal{A}, k}$ and $\bs{f}_{\mathcal{A}, k}$ are, respectively, the collection of noisy observations and outcomes of the node function $f_k$ evaluated at the points in $\mathcal{A}$.
\end{definition}

Due to the Gaussian nature of the node functions, the mutual information admits a simple closed-form expression:
\begin{align*}
    I( \bs{y}_{\mathcal{A}, k} ; \bs{f}_{\mathcal{A}, k} ) = \frac{1}{2}
    \log\det \left( \bs{I} + \sigma^{-2} \bs{K}_{\mathcal{A}, k} \right),
\end{align*}
where $\bs{I}$ denotes the identity matrix and $\bs{K}_{\mathcal{A}, k} = [\Sigma_{0,k}(\bs{x}, \bs{x}')]_{\bs{x}, \bs{x}' \in \mathcal{A}}$ is the prior covariance matrix between all points in $\mathcal{A}$. It is known that the MIG $\gamma_{T,k}$ is sublinear in $T$ for many commonly used kernel functions including the radial basis function (RBF) and Matern-$\nu$ kernels \cite{srinivas2010gaussian, vakili2021information}.

\subsection{Performance measure}

We focus on Bayesian cumulative regret (BCR) and Bayesian simple regret (BSR) over a number of iterations $T$, which are defined as follows
\begin{align}
    \text{BCR}_T &= \mathbb{E} \left\lbrace \sum_{t=1}^T g(\bs{x}^\star) - g(\bs{x}_t) \right\rbrace, \\
    \text{BSR}_T &= \mathbb{E} \left\lbrace g(\bs{x}^\star) - \max_{ t=1,\ldots, T} g(\bs{x}_t) \right\rbrace,
\end{align}
where the expectation $\mathbb{E}\{ \cdot \}$ is taken over all randomness, i.e., $\{ f_k \}_{k=1}^K$, $\{ \epsilon_{t,k} \}_{t \geq 1, 1 \leq k \leq K}$, and the randomness of the optimization algorithm. Since we cannot identify $\argmax_{t = 1, \ldots, T} g(\bs{x}_t)$ in practice when we have noisy observations, it is also popular to analyze BSR with recommendation. The recommender in the nominal case is typically $\hat{\bs{x}}_T = \argmax_{\bs{x} \in \mathcal{X}} \mu^{(g)}_{T-1}( \bs{x} )$\footnote{We use the standard recommender that uses the posterior at the start of the last iteration for simplicity; this can easily be extended to incorporate the final observation $\hat{\bs{x}}_T^+ = \argmax_{\bs{x} \in \mathcal{X}} \mu^{(g)}_{T}( \bs{x} )$ by incorporating a final hypothetical TS step that leads to a slight further reduction in the modified BSR.}, leading to the modified BSR of the form $\overline{\text{BSR}}_T = \mathbb{E}\{ g(\bs{x}^\star) - g( \hat{\bs{x}}_T ) \}$. 

We can minorly extend \cite[Lemma 2.1]{takeno2023posterior} to show an important relationship between BSR, BCR, and the modified BSR (with recommendation).

\begin{lemma} \label{lem:BSRvsBCR}
    For any sequential sampling procedure and (measurable) recommender $\hat{\bs{x}}_t = \argmax_{\bs{x} \in \mathcal{X}} \mu^{(g)}_{t-1}( \bs{x} )$ at every iteration $t$, 
    \begin{align*}
        \overline{\textnormal{BSR}}_T \leq \frac{1}{T} \sum_{t=1}^T \overline{\textnormal{BSR}}_t \leq \frac{1}{T} \textnormal{BCR}_T.
    \end{align*}
\end{lemma}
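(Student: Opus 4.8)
The plan is to prove the two inequalities separately, in both cases reducing $\overline{\textnormal{BSR}}_t$ to a statement about the running maximum of the posterior-mean surface $\mu^{(g)}_{t-1}(\cdot) = \mathbb{E}\{ g(\cdot) \mid \mathcal{D}_{t-1} \}$, and then exploiting the fact that this surface is a martingale in $t$. The first step I would carry out is a rewriting: since the recommender $\hat{\bs{x}}_t = \argmax_{\bs{x}\in\mathcal{X}} \mu^{(g)}_{t-1}(\bs{x})$ is $\mathcal{D}_{t-1}$-measurable, the tower property gives $\mathbb{E}\{ g(\hat{\bs{x}}_t) \} = \mathbb{E}\{ \mu^{(g)}_{t-1}(\hat{\bs{x}}_t) \} = \mathbb{E}\{ \max_{\bs{x}\in\mathcal{X}} \mu^{(g)}_{t-1}(\bs{x}) \}$, so that $\overline{\textnormal{BSR}}_t = \mathbb{E}\{ g(\bs{x}^\star) \} - \mathbb{E}\{ \max_{\bs{x}\in\mathcal{X}} \mu^{(g)}_{t-1}(\bs{x}) \}$.

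For the right inequality $\tfrac{1}{T}\sum_{t=1}^T \overline{\textnormal{BSR}}_t \le \tfrac{1}{T}\textnormal{BCR}_T$, I would show the termwise bound $\overline{\textnormal{BSR}}_t \le \mathbb{E}\{ g(\bs{x}^\star) - g(\bs{x}_t) \}$. Writing the query $\bs{x}_t$ as a (possibly randomized) $\mathcal{D}_{t-1}$-measurable choice whose extra randomness is independent of $g$ given $\mathcal{D}_{t-1}$, we obtain $\mathbb{E}\{ g(\bs{x}_t) \mid \mathcal{D}_{t-1} \} \le \max_{\bs{x}\in\mathcal{X}} \mu^{(g)}_{t-1}(\bs{x})$, hence $\mathbb{E}\{ g(\bs{x}_t) \} \le \mathbb{E}\{ g(\hat{\bs{x}}_t) \}$, which is exactly the claimed termwise bound. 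Summing over $t = 1,\dots,T$ and dividing by $T$ finishes this half. Note this part only uses that $\hat{\bs{x}}_t$ maximizes the current posterior mean, so it holds for \emph{any} sequential sampling procedure, as the lemma asserts.

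For the left inequality $\overline{\textnormal{BSR}}_T \le \tfrac{1}{T}\sum_{t=1}^T \overline{\textnormal{BSR}}_t$, the plan is to show $t \mapsto \overline{\textnormal{BSR}}_t$ is non-increasing, equivalently that $t \mapsto \mathbb{E}\{ \max_{\bs{x}\in\mathcal{X}} \mu^{(g)}_{t-1}(\bs{x}) \}$ is non-decreasing. The key observation is that for each fixed $\bs{x}$, $\{ \mu^{(g)}_{t-1}(\bs{x}) \}_{t\ge 1}$ is a martingale with respect to the data filtration, since $\mathbb{E}\{ \mu^{(g)}_t(\bs{x}) \mid \mathcal{D}_{t-1} \} = \mathbb{E}\{ \mathbb{E}\{ g(\bs{x}) \mid \mathcal{D}_t \} \mid \mathcal{D}_{t-1} \} = \mu^{(g)}_{t-1}(\bs{x})$ (using that the algorithm's extra randomness is independent of $g$ given the data). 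Because $\bs{u} \mapsto \max_{\bs{x}\in\mathcal{X}} u_{\bs{x}}$ is convex, conditional Jensen gives $\mathbb{E}\{ \max_{\bs{x}} \mu^{(g)}_t(\bs{x}) \mid \mathcal{D}_{t-1} \} \ge \max_{\bs{x}} \mathbb{E}\{ \mu^{(g)}_t(\bs{x}) \mid \mathcal{D}_{t-1} \} = \max_{\bs{x}} \mu^{(g)}_{t-1}(\bs{x})$; taking total expectation yields the monotonicity. Consequently $\overline{\textnormal{BSR}}_T \le \overline{\textnormal{BSR}}_t$ for every $t \le T$, and averaging over $t$ gives $\overline{\textnormal{BSR}}_T \le \tfrac{1}{T}\sum_{t=1}^T \overline{\textnormal{BSR}}_t$, completing the chain.

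I expect the only delicate points to be measure-theoretic rather than conceptual: one needs $\max_{\bs{x}\in\mathcal{X}} \mu^{(g)}_{t-1}(\bs{x})$ to be well-defined, measurable, and integrable so that the tower property and Jensen's inequality are legitimate. Over the finite design space assumed in the accompanying theorem this is immediate (a maximum of finitely many bounded random variables); for a compact continuous $\mathcal{X}$ it would follow from continuity of the GP posterior means plus the assumed uniqueness and boundedness of $\bs{H}^\star(\bs{x})$, which makes $g$ integrable. I would also be careful with the off-by-one indexing of the recommender (the footnote's convention of recommending from the posterior at the \emph{start} of iteration $t$), since the martingale identity $\mathbb{E}\{ \mu^{(g)}_t \mid \mathcal{D}_{t-1} \} = \mu^{(g)}_{t-1}$ must line up with the definition of $\overline{\textnormal{BSR}}_t$; this bookkeeping is the main place where a careless choice would break the argument.
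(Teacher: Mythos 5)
Your proof is correct, and your handling of the right-hand inequality ($\tfrac{1}{T}\sum_t \overline{\textnormal{BSR}}_t \le \tfrac{1}{T}\textnormal{BCR}_T$) matches the paper's: both reduce to $\mu^{(g)}_{t-1}(\bs{x}_t) \le \mu^{(g)}_{t-1}(\hat{\bs{x}}_t)$ (equivalently, $\mathbb{E}\{g(\bs{x}_t)\mid\mathcal{D}_{t-1}\} \le \max_{\bs{x}}\mu^{(g)}_{t-1}(\bs{x})$) plus the tower property. Where you genuinely diverge is the left-hand inequality. The paper proves it by a single comparison under the \emph{final} posterior: since $\hat{\bs{x}}_T$ maximizes $\mu^{(g)}_{T-1}$, it dominates every earlier recommender $\hat{\bs{x}}_t$ under $\mu^{(g)}_{T-1}$, and because each $\hat{\bs{x}}_t$ is $\mathcal{D}_{T-1}$-measurable the tower property converts $\mathbb{E}\{\mu^{(g)}_{T-1}(\hat{\bs{x}}_t)\}$ back into $\mathbb{E}\{g(\hat{\bs{x}}_t)\}$, giving exactly the averaged bound and nothing more. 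You instead prove the stronger statement that $t\mapsto\overline{\textnormal{BSR}}_t$ is non-increasing, by observing that $\mu^{(g)}_{t}(\bs{x})$ is a martingale in $t$ (given an expanding filtration that absorbs the algorithm's extra randomness, conditionally independent of $g$ given the data) and that the max of the posterior-mean surface is therefore a submartingale (conditional Jensen, or even more simply $\mathbb{E}\{\max_{\bs{x}}\mu^{(g)}_t(\bs{x})\mid\mathcal{F}_{t-1}\} \ge \mathbb{E}\{\mu^{(g)}_t(\hat{\bs{x}}_t)\mid\mathcal{F}_{t-1}\} = \mu^{(g)}_{t-1}(\hat{\bs{x}}_t)$). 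Monotonicity then yields $\overline{\textnormal{BSR}}_T \le \overline{\textnormal{BSR}}_t$ for every $t\le T$, of which the claimed averaged inequality is a trivial consequence. The trade-off: your route delivers a more informative termwise result (and is the standard "knowledge-gradient nonnegativity" argument), but it requires setting up the filtration and the conditional-independence bookkeeping you flag at the end; the paper's argument is a shorter self-contained computation that uses only the maximality of the last recommender and the tower property, and needs no martingale machinery. Your measure-theoretic caveats (integrability and measurability of the max, and the off-by-one indexing of $\hat{\bs{x}}_t$ against $\mu^{(g)}_{t-1}$) are exactly the right ones and are harmless under the finite-$\mathcal{X}$ assumption (A1) used in the accompanying theorem.
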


The proof is provided in \ref{subsec:prooflem1} (at the end of this section) since the details are not important for our main message. This lemma, however, is important since it shows, if we can prove BCR grows sublinearly with $T$, then it immediately follows $\overline{\text{BSR}}_T \leq \text{BCR}_T / T \to 0$ such that the expected optimality gap for the recommender's value goes to 0 as $T \to \infty$. By Markov's inequality, we see that $\mathbb{P}\{ g(\bs{x}^\star) - g(\hat{\bs{x}}_T) \geq \varepsilon \} \leq \overline{\text{BSR}}_T / \varepsilon \to 0$ for all $\varepsilon > 0$ (i.e., convergence in probability). Therefore, we focus on showing this sublinearity property under reasonably mild conditions in the rest of this appendix.

\subsection{Algorithm: Thompson sampling for function networks}

We focus on a special case of BONSAI (Algorithm \ref{alg:bonsai}) where we ignore the uncertainty. In this case, we simply generate a sample path (or TS) of the unknown objective $g$ from the posterior $g_t \sim p( g \mid \mathcal{D}_{t-1} )$ at the start of the $t$-th iteration and then select $\bs{x}_t \leftarrow \argmax_{ \bs{x} \in \mathcal{X} } g_t(\bs{x})$. We then query the network at $\bs{x}_t$, update the posteriors, and repeat this process.

It is easy to generate this posterior sample by simply drawing independent realizations of $f_k$ from their posteriors and substituting them into \eqref{eq:fn-nom}.

\subsection{Assumptions}

Our analysis requires some assumptions on the problem setup. Note that we have stated some of these previously, but do our best to clearly and exhaustively list them all here for ease of reference. 

\begin{description}
    \item[(A1) Finite design.] The design space $\mathcal{X}$ has finite cardinality $|\mathcal{X}| < \infty$. 
    \item[(A2) Independent GPs and Gaussian noise.] Each node $f_k$ has an independent GP prior with zero mean; node observations have i.i.d. Gaussian noise $\mathcal{N}(0, \sigma^2)$ with $\sigma^2 > 0$.
    \item[(A3) DAG with full network observations.] Querying the network at $\bs{x}$ reveals $g(\bs{x})$ and all node outputs $h_k(\bs{x})$ at the inputs induced by the DAG recursion defined in \eqref{eq:fn-nom}. 
    \item[(A4) Network regularity condition.] For each node $k$ and iteration $t$, the posterior mean is Lipschitz continuous in its parent vector with per-parent constants:
    \begin{align*}
        | \mu_{t-1, k}( \bs{x}_{I(k)}, \bs{u} ) - \mu_{t-1, k}( \bs{x}_{I(k)}, \bs{v} ) | \leq \sum_{j \in J(k)} L_{k \leftarrow j} | u_j - v_j |, ~~~ \forall (\bs{u}, \bs{v}). 
    \end{align*}
    \item[(A5) Normalized sub-Gaussian marginals.] There exists $\kappa \geq 1$ such that for all $t$ and $\bs{x} \in \mathcal{X}$, the following holds:
    \begin{align*}
        \mathbb{E}\left\lbrace \exp\left( \lambda \frac{g_t(\bs{x}) - \mu_{t-1}^{(g)}(\bs{x})}{\sigma_{t-1}^{(g)}(\bs{x})} \right) \mid \mathcal{D}_{t-1} \right\rbrace \leq \exp\left( \frac{\kappa^2 \lambda^2}{2} \right), ~~~ \forall \lambda \in \mathbb{R}
    \end{align*}
    \item[(A6) Measurable argmax function.] The argmax selection rule $\pi(f)$, which returns a maximizer $\pi(f) \in \argmax_{\bs{x} \in \mathcal{X}} f(\bs{x})$ of any function $f : \mathcal{X} \to \mathbb{R}$ using a deterministic tie-breaking rule, is measurable. 
\end{description}

(A1) is made for simplicity and can be relaxed to continuous spaces using an approach like that shown in \cite{srinivas2010gaussian}, which makes some additional assumptions about the smoothness of the sample paths (which are again valid for many common kernels). 

(A2) is standard and can be easily modified to allow for different sub-Gaussian distributions and/or different noise variances per node (with only slight changes in the derived constants). 

(A3) matches the original work that introduced BOFN \cite{BOFN}. It is worth noting that other forms of function networks exist, e.g., \cite{BOFN_PE} and we expect the many of the ideas used here can be extended to those settings (but we save that for future work). We relax the DAG assumption in the final subsection, as we show the relevant proofs carry over with a minor change to the constant in our derived BCR bound. 

(A4) is a key assumption that ensures the network behaves in a reasonable fashion. A sufficient condition for this to hold is sample-path regularity of $f_k$: if the sample paths are $C^1$ with almost surely bounded parent-derivatives, then the posterior mean function inherits uniform Lipschitz constants by the dominated convergence theorem. 

(A5) states that the standardized Thompson sample/draw of the scalar objective (output of the function network) is uniformly sub-Gaussian; this means that we can find a constant $\kappa \geq 1$ that does not depend on $t$ or $\bs{x}$, which will help us obtain the BCR bound. Note that in the single-node GP case (standard black-box BO setting), $\kappa = 1$. A sufficient condition for (A5) to hold is that the function mapping $\Phi_{t, \bs{x}}$ from a standard normal $U \sim \mathcal{N}(0,\bs{I})$ to scalar objective $g_t(\bs{x}) = \Phi_{t,\bs{x}}(U)$ is globally Lipschitz continuous in $U$ with constant $L_{t, \bs{x}}$ (constructed through whitening transformation across the nodes). We can then show $\kappa \leq \sup_{t, \bs{x}} L_{t,\bs{x}} / \sigma^{(g)}_{t-1}(\bs{x}) < \infty$. This is intuitively reasonable because, for any fixed $t\geq 1$ and $\bs{x} \in \mathcal{X}$, $g_t(\bs{x})$ is obtained by recursively composing node-level Gaussian samples through a DAG. Although the resulting scalar is generally \textit{non-Gaussian}, it is a smooth function of finitely many underlying Gaussian innovations; under mild smoothness, Gaussian concentration implies sub-Gaussian tails for such transforms.

(A6) is needed to avoid any pathological issues that could arise from optimizing over complex functions or spaces; lack of measurability will result in the expectations in our regret analysis to not be well-defined.

\subsection{Network sensitivity}

We need to account for the impact the network can have on the node GPs as they are propagated through the DAG. We first define the sum of squares of edge Lipschitz constants for node $k$:
\begin{align*}
    \bar{L}_k^2 = \sum_{j \in J(k)} L_{k \leftarrow j}^2.
\end{align*}
We then define the parent-sensitivity matrix $\bs{A} \in \mathbb{R}^{K \times K}$ with elements:
\begin{align*}
    A_{kj} = \begin{cases}
        \bar{L}_k^2, &\text{if } j \in J(k), \\
        0, &\text{otherwise}.
    \end{cases}
\end{align*}
Because the graph is assumed to be acyclic, $\bs{A}$ is a strictly lower triangular matrix such that $( \bs{I} - \bs{A} )^{-1} = \bs{I} + \bs{A} + \cdots + \bs{A}^{K-1}$ with non-negative entries. We also define an overall network amplification constant:
\begin{align*}
    L_{\mathrm{net}} = \| \bs{e}_K^\top ( \bs{I} - \bs{A} )^{-1} \|_\infty,
\end{align*}
where $\bs{e}_K \in \mathbb{R}^K$ is a vector whose last element is 1 and all others are 0. 
% We will see that $L_{\mathrm{net}}$ is the largest total path gain from any single upstream node's variance to the output. It captures how much uncertainty can be amplified by the network structure before it reaches the final node: $g(\bs{x}) = h_K(\bs{x})$.

\subsection{Supporting lemmas}

We start by proving the following lemma that relates the individual terms in BCR to useful quantities:

\begin{lemma}[Posterior-sampling identity] \label{lem:psi}
    Let $\bs{x}^\star \in \argmax_{\bs{x} \in \mathcal{X}} g(\bs{x})$ and $\bs{x}_t = \pi( \bs{g}_t ) = \argmax_{\bs{x} \in \mathcal{X}} g_t(\bs{x})$ for $\bs{g}_t \sim p( \bs{g} \mid \mathcal{D}_{t-1} )$. Then, the following equalities hold
    \begin{align*}
        \mathbb{E}\left\lbrace g(\bs{x}^\star)-g(\bs{x}_t) \mid \mathcal{D}_{t-1} \right\rbrace &= \mathbb{E}\left\lbrace g_t(\bs{x}_t)-g(\bs{x}_t) \mid \mathcal{D}_{t-1} \right\rbrace, \\
        &= \mathbb{E}\left\lbrace g_t(\bs{x}_t)-\mu_{t-1}^{(g)}(\bs{x}_t) \mid \mathcal{D}_{t-1} \right\rbrace.
    \end{align*}
    The same equalities also hold unconditionally (i.e., with total $\mathbb{E}\{ \cdot \}$). 
\end{lemma}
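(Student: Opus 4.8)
The plan is to establish both equalities by conditioning on $\mathcal{D}_{t-1}$ and exploiting the defining property of Thompson sampling: given the data, the posterior sample $\bs{g}_t$ has the \emph{same distribution} as the (unknown) true function $\bs{g}$. Consequently, the random design $\bs{x}^\star = \pi(\bs{g})$ and the Thompson design $\bs{x}_t = \pi(\bs{g}_t)$ are identically distributed conditional on $\mathcal{D}_{t-1}$, where $\pi$ is the (measurable, by (A6)) argmax map applied to the corresponding sample path. This is the classical ``posterior sampling'' argument of Russo--Van Roy, and the only subtlety relative to the black-box case is that ``the function'' here is the vector-valued network $\bs{g}$ whose scalar output $g$ is an implicit (nonlinear, possibly cyclic) functional of the node samples; since (A2) gives us independent GP priors on the $f_k$ and the sample $\bs{g}_t$ is generated by substituting independent posterior draws of each $f_k$ into the network equations, the conditional law of $\bs{g}_t$ given $\mathcal{D}_{t-1}$ is exactly $p(\bs{g}\mid\mathcal{D}_{t-1})$ by construction.

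For the first equality, I would argue as follows. Fix $\mathcal{D}_{t-1}$. Because $(\bs{x}^\star,\bs{g})$ and $(\bs{x}_t,\bs{g}_t)$ are defined by applying the same deterministic measurable map — ``take the sample path, output its argmax'' — to two conditionally identically-distributed random elements, the pairs $(\bs{x}^\star,\bs{g})$ and $(\bs{x}_t,\bs{g}_t)$ are themselves conditionally identically distributed. Hence for any measurable functional $\Psi$ we have $\mathbb{E}\{\Psi(\bs{x}^\star,\bs{g})\mid\mathcal{D}_{t-1}\}=\mathbb{E}\{\Psi(\bs{x}_t,\bs{g}_t)\mid\mathcal{D}_{t-1}\}$; applying this with $\Psi(\bs{x},\bs{g})=g(\bs{x})$ (evaluation of the scalar network output at the argmax) gives $\mathbb{E}\{g(\bs{x}^\star)\mid\mathcal{D}_{t-1}\}=\mathbb{E}\{g_t(\bs{x}_t)\mid\mathcal{D}_{t-1}\}$. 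Subtracting the common term $\mathbb{E}\{g(\bs{x}_t)\mid\mathcal{D}_{t-1}\}$ from both sides yields the first claimed identity. The second equality is then immediate by the tower property: conditional on $\mathcal{D}_{t-1}$, the design $\bs{x}_t$ depends only on $\bs{g}_t$, but more importantly $g(\bs{x})$ for any \emph{fixed} $\bs{x}$ has conditional mean $\mu_{t-1}^{(g)}(\bs{x})$ by definition of the induced posterior; a careful statement uses that $\bs{x}_t$ is $\sigma(\bs{g}_t,\mathcal{D}_{t-1})$-measurable while $\bs{g}$ (the true network) is conditionally independent of $\bs{g}_t$ given $\mathcal{D}_{t-1}$, so $\mathbb{E}\{g(\bs{x}_t)\mid\bs{g}_t,\mathcal{D}_{t-1}\}=\mu_{t-1}^{(g)}(\bs{x}_t)$, and taking expectations over $\bs{g}_t$ gives $\mathbb{E}\{g(\bs{x}_t)\mid\mathcal{D}_{t-1}\}=\mathbb{E}\{\mu_{t-1}^{(g)}(\bs{x}_t)\mid\mathcal{D}_{t-1}\}$. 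Combined with the first equality this produces the second. Finally, the unconditional versions follow by taking $\mathbb{E}\{\cdot\}$ over $\mathcal{D}_{t-1}$ on both sides and using the tower property once more.

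The main obstacle is purely measure-theoretic rather than computational: I need to be careful that ``$\bs{g}_t$ and $\bs{g}$ are conditionally identically distributed'' is a statement about laws on an appropriate function space (here effectively $\mathbb{R}^{|\mathcal{X}|\times K}$ by (A1), or a separable path space if (A1) is relaxed), that $\pi$ is measurable on that space (guaranteed by (A6)), and that evaluation $(\bs{x},\bs{g})\mapsto g(\bs{x})$ is jointly measurable — the latter requires knowing $\bs{H}^\star(\bs{x},\bs{w})$ depends measurably on the underlying node samples, which holds under the uniqueness/contraction assumption stated in Section~\ref{sec:problem-setting}. I would also need the conditional independence of $\bs{g}$ and $\bs{g}_t$ given $\mathcal{D}_{t-1}$, which is exactly what ``$\bs{g}_t$ is a fresh independent draw from the posterior'' means and is built into the algorithm; I would simply state this explicitly. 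With (A1) in force everything reduces to finite-dimensional distributions and these points are routine, so the proof is short — essentially the finite-space posterior-sampling lemma specialized to the network-induced posterior on $g$.
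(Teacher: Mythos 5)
Your proposal is correct and follows essentially the same route as the paper's proof: both exploit that, conditional on $\mathcal{D}_{t-1}$, the true network objective and the Thompson draw are identically distributed, so the conditional expectations of their maxima coincide (paper: i.i.d.\ posterior sample paths $g$, $g'$; you: equality of laws of the pairs $(\bs{x}^\star,\bs{g})$ and $(\bs{x}_t,\bs{g}_t)$), after which subtracting $\mathbb{E}\{ g(\bs{x}_t)\mid\mathcal{D}_{t-1}\} = \mathbb{E}\{\mu^{(g)}_{t-1}(\bs{x}_t)\mid\mathcal{D}_{t-1}\}$ and applying the tower property gives both equalities and their unconditional versions. Your explicit use of the conditional independence of $\bs{g}$ and $\bs{g}_t$ given $\mathcal{D}_{t-1}$ to justify that last substitution is a welcome bit of extra care that the paper leaves implicit, but it is not a different argument.
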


\begin{proof}
    Let $g$ and $g'$ be i.i.d. sample paths from the posterior; by symmetry, $\mathbb{E}\left\lbrace \max_{\bs{x}} g(\bs{x}) \mid \mathcal{D}_{t-1} \right\rbrace = \mathbb{E}\left\lbrace \max_{\bs{x}} g'(\bs{x}) \mid \mathcal{D}_{t-1} \right\rbrace$. Since $g(\bs{x}^\star) = \max_{\bs{x}} g(\bs{x})$ and $g_t(\bs{x}_t) = \max_{\bs{x}} g_t(\bs{x})$ and measurability holding by (A6), we know that $\mathbb{E}\left\lbrace g(\bs{x}^\star) \mid \mathcal{D}_{t-1} \right\rbrace = \mathbb{E}\left\lbrace g_t(\bs{x}_t) \mid \mathcal{D}_{t-1} \right\rbrace$. The two stated relationships then follow by subtracting $\mathbb{E}\left\lbrace g(\bs{x}_t) \mid \mathcal{D}_{t-1} \right\rbrace = \mathbb{E}\{ \mu^{(g)}_{t-1}(\bs{x}_t) \mid \mathcal{D}_{t-1} \}$ from this equality. The unconditional version then follows from the tower property of expectations. 
\end{proof}

% $\mathbb{E}\left\lbrace  \right\rbrace$
% $\mathbb{E}\left\lbrace  \mid \mathcal{D}_{t-1} \right\rbrace$

Next, we develop bounds on the variance of the nodes of the network and relate them to the (conditional) predictive variance terms of the individual node functions. Intuitively, we can think of this as finding a way to bound an important property of interest (the total variance of the objective) in terms of quantities that we know how to bound using the MIG. 

\begin{lemma}[Variance recursion and network bound] \label{lem:varrec}
    For iteration $t$, define the following quantities:
    \begin{align*}
        V_k(t) &= \textnormal{Var}_{t-1}( h_k(\bs{x}_t) ), \\
        \bs{Z}_{t,k} &= ( \bs{x}_{t, I(k)}, \{ h_j(\bs{x}_t) \}_{j \in J(k)} ), \\
        S_k(t) &= \mathbb{E}_{t-1}\left\lbrace \sigma^2_{t-1,k}( \bs{Z}_{t,k} ) \right\rbrace,
    \end{align*}
    where the subscript $t-1$ implies being conditioned on $\mathcal{D}_{t-1}$. Then, for each node $k$, the following inequality holds:
    \begin{align*}
        V_k(t) \leq S_k(t) + \bar{L}_k^2 \sum_{j \in J(k)} V_j(t).
    \end{align*}
    We can equivalently state this in matrix-vector form: $\bs{V}(t) \leq \bs{S}(t) + \bs{A} \bs{V}(t)$ where $\bs{Q}(t) = [Q_1(t), \ldots, Q_K(t)]^\top$ for $Q \in \{ V, S \}$, hence:
    \begin{align*}
        \bs{V}(t) \leq (\bs{I} - \bs{A})^{-1} \bs{S}(t). 
    \end{align*}
    Thus, for the objective node $K$, we specifically have:
    \begin{align*}
        ( \sigma^{(g)}_{t-1}(\bs{x}_t) )^2 = \textnormal{Var}_{t-1}( g(\bs{x}_t) ) & = V_K(t) \leq \bs{e}_K^\top (\bs{I} - \bs{A})^{-1} \bs{S}(t) \leq L_\textnormal{net} \sum_{k=1}^K S_k(t).
    \end{align*}
\end{lemma}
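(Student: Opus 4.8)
The plan is to derive the scalar recursion by a single application of the law of total variance, conditioning on the outputs of node $k$'s parents, and then to ``unroll'' it along the DAG using non-negativity of $(\bs{I}-\bs{A})^{-1}$. Throughout I fix the realized query point $\bs{x}_t$: given $\mathcal{D}_{t-1}$, the point $\bs{x}_t=\pi(g_t)$ is a function of the auxiliary Thompson draw $g_t$, which is an independent posterior sample and hence independent of the true node functions, so conditioning on $\bs{x}_t$ does not alter the node posteriors and I may argue as if $\bs{x}_t$ were an arbitrary fixed $\bs{x}\in\mathcal{X}$. I write $h_k=h_k(\bs{x})$, $\bs{H}_{J(k)}=\{h_j\}_{j\in J(k)}$, and $\bs{Z}_k=(\bs{x}_{I(k)},\bs{H}_{J(k)})$; under (A4)'s sample-path regularity each $h_k$ has finite posterior variance, so all conditional moments below are well-defined.

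First I would fix a node $k$ and write $V_k(t)=\textnormal{Var}_{t-1}(h_k)=\mathbb{E}_{t-1}[\textnormal{Var}_{t-1}(h_k\mid\bs{H}_{J(k)})]+\textnormal{Var}_{t-1}(\mathbb{E}_{t-1}[h_k\mid\bs{H}_{J(k)}])$. For the first term --- the step I expect to need the most care --- I would use that by the DAG ordering $\bs{H}_{J(k)}$ is a measurable function of $\{f_j\}_{j\ne k}$ alone, hence independent of $f_k$ by (A2); therefore conditioning on $\bs{H}_{J(k)}$ fixes $\bs{Z}_k$ and leaves $f_k(\bs{Z}_k)$ with its marginal posterior law $\mathcal{N}(\mu_{t-1,k}(\bs{Z}_k),\sigma_{t-1,k}^2(\bs{Z}_k))$, so the first term equals $\mathbb{E}_{t-1}[\sigma_{t-1,k}^2(\bs{Z}_k)]=S_k(t)$ and $\mathbb{E}_{t-1}[h_k\mid\bs{H}_{J(k)}]=\mu_{t-1,k}(\bs{x}_{I(k)},\bs{H}_{J(k)})$.

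Next I would bound the second term with (A4). Setting $\phi(\bs{u})=\mu_{t-1,k}(\bs{x}_{I(k)},\bs{u})$, which satisfies $|\phi(\bs{u})-\phi(\bs{v})|\le\sum_{j\in J(k)}L_{k\leftarrow j}|u_j-v_j|$, I would introduce an independent copy $\bs{H}'_{J(k)}$ and combine the identity $\textnormal{Var}(\phi(\bs{H}_{J(k)}))=\tfrac12\mathbb{E}[(\phi(\bs{H}_{J(k)})-\phi(\bs{H}'_{J(k)}))^2]$ with Cauchy--Schwarz, $\big(\sum_j L_{k\leftarrow j}|H_j-H'_j|\big)^2\le\bar L_k^2\sum_j(H_j-H'_j)^2$, and $\mathbb{E}[(H_j-H'_j)^2]=2V_j(t)$, to obtain $\textnormal{Var}_{t-1}(\mathbb{E}_{t-1}[h_k\mid\bs{H}_{J(k)}])\le\bar L_k^2\sum_{j\in J(k)}V_j(t)$. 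Adding the two bounds gives the scalar recursion $V_k(t)\le S_k(t)+\bar L_k^2\sum_{j\in J(k)}V_j(t)$.

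Finally I would pass to matrices. Stacking yields $\bs{V}(t)\le\bs{S}(t)+\bs{A}\bs{V}(t)$, i.e. $(\bs{I}-\bs{A})\bs{V}(t)\le\bs{S}(t)$ entrywise. Ordering nodes topologically makes $\bs{A}$ strictly lower triangular, hence nilpotent, so $(\bs{I}-\bs{A})^{-1}=\sum_{m=0}^{K-1}\bs{A}^m$ has non-negative entries; left-multiplying the entrywise inequality by this non-negative matrix preserves it, giving $\bs{V}(t)\le(\bs{I}-\bs{A})^{-1}\bs{S}(t)$. Taking the last coordinate and using $g=h_K$, so that $(\sigma^{(g)}_{t-1}(\bs{x}_t))^2=\textnormal{Var}_{t-1}(g(\bs{x}_t))=V_K(t)$, yields $V_K(t)\le\bs{e}_K^\top(\bs{I}-\bs{A})^{-1}\bs{S}(t)$, and since every entry of both $\bs{e}_K^\top(\bs{I}-\bs{A})^{-1}$ and $\bs{S}(t)$ is non-negative, this is at most $\|\bs{e}_K^\top(\bs{I}-\bs{A})^{-1}\|_\infty\sum_{k=1}^K S_k(t)=L_{\textnormal{net}}\sum_{k=1}^K S_k(t)$, which is the claim. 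The only genuinely delicate point is the conditional-independence argument for the first variance term; everything else is routine manipulation with the law of total variance, Cauchy--Schwarz, and non-negative matrices.
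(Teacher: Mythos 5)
Your proof is correct and follows essentially the same route as the paper's: the law of total variance conditioned on the node's inputs, identification of the conditional-variance term with $S_k(t)$, an independent-copy (Efron--Stein) identity combined with (A4) and Cauchy--Schwarz for the conditional-mean term, and then strict lower triangularity of $\bs{A}$, non-negativity of $(\bs{I}-\bs{A})^{-1}$, and a H\"older-type bound to conclude. The only differences are presentational: you fix $\bs{x}_t$ up front by invoking independence of the Thompson draw from the true node functions and you spell out why conditioning on $\bs{H}_{J(k)}$ leaves the posterior of $f_k$ unchanged (so the conditional variance is exactly $\sigma^2_{t-1,k}(\bs{Z}_k)$), whereas the paper folds $\bs{x}_{t,I(k)}$ into the conditioning variable $\bs{Z}_{t,k}$ and states these steps more tersely.
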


\begin{proof}
    We start with the law of total variance:
    \begin{align*}
        & \textnormal{Var}_{t-1}( h_k(\bs{x}_t) ) = \mathbb{E}_{t-1}\{ \textnormal{Var}_{t-1}( h_k(\bs{x}_t) \mid \bs{Z}_{t,k} ) \} + \text{Var}_{t-1}\left( \mathbb{E}_{t-1}\{ h_k(\bs{x}_t) \mid \bs{Z}_{t,k} \} \right),
    \end{align*}
    where the outer operators must also marginalize over the randomness of $\bs{Z}_{t,k}$.  
    The first term is exactly equal to $S_k(t)$ since, conditioning on the random inputs, we get that $\textnormal{Var}_{t-1}( h_k(\bs{x}_t) \mid \bs{Z}_{t,k} ) = \sigma_{t-1, k}^2( \bs{Z}_{t,k} )$. The second term is a little more complicated and so we instead look to bound it using the Lipschitz regularity condition in (A4). We first notice that:
    \begin{align*}
        \text{Var}_{t-1}\left( \mathbb{E}_{t-1}\{ h_k(\bs{x}_t) \mid \bs{Z}_{t,k} \} \right) = \text{Var}_{t-1}\left( \mu_{t-1,k}( \bs{Z}_{t,k} ) \right). 
    \end{align*}
    We write out $\bs{Z}_{t,k} = ( \bs{x}_{t, I(k)}, \bs{Y}_{t,k} )$ and define $\phi_k( \bs{y} ) = \mu_{t-1,k}( \bs{x}_{t, I(k)}, \bs{y} )$ as the posterior mean function for node $k$ as a function of only of the second argument (other network node outputs). By (A4), for any $\bs{y}$ and $\bs{y}'$, we have:
    \begin{align*}
        | \phi_k( \bs{y} ) - \phi_k( \bs{y}' ) | \leq \sum_{j \in J(k)} L_{k \leftarrow j} | y_j - y'_j |.
    \end{align*}
    Let $\bs{Y}_{t,k}'$ be an independent copy of $\bs{Y}_{t,k}$ conditional on $\mathcal{D}_{t-1}$ (i.e., another independent draw of parents at the same $\bs{x}_t$). The Efron-Stein identity gives:
    \begin{align*}
        \text{Var}_{t-1}( \phi_k( \bs{Y}_{t,k} ) ) = \frac{1}{2}\mathbb{E}_{t-1}\left\lbrace ( \phi_k( \bs{Y}_{t,k} ) - \phi_k( \bs{Y}_{t,k}' ) )^2 \right\rbrace.
    \end{align*}
    Applying the Lipschitz bound and Cauchy-Schwarz inequality, we can derive:
    \begin{align*}
        ( \phi_k( \bs{Y} ) - \phi_k( \bs{Y}' )^2 ) \leq \underbrace{\left( \sum_{j \in J(k)} L_{k \leftarrow j}^2 \right)}_{\bar{L}_k^2} \sum_{j \in J(k)} (Y_j - Y_j')^2.
    \end{align*}
    Taking the expectation $\mathbb{E}_{t-1}\{ \cdot \}$ and combining with previous expressions:
    \begin{align*}
        \text{Var}_{t-1}\left( \mu_{t-1,k}( \bs{Z}_{t,k} ) \right) &= \text{Var}_{t-1}( \phi_k( \bs{Y}_{t,k} ) ), \\
        & \leq \frac{1}{2} \bar{L}_k^2 \sum_{j \in J(k)}\mathbb{E}_{t-1}\left\lbrace ( [\bs{Y}_{t,k}]_j  - [\bs{Y}'_{t,k}]_j )^2  \right\rbrace,  \\
        & = \frac{1}{2} \bar{L}_k^2 \sum_{j \in J(k)} 2\text{Var}_{t-1}\left( h_j( \bs{x}_t ) \right), \\
        & = \bar{L}_k^2 \sum_{j \in J(k)} V_j(t).
    \end{align*}
    Going back to the law of total variance, we now see:
    \begin{align*}
        V_k(t) &= S_k(t) + \text{Var}_{t-1}\left( \mu_{t-1,k}( \bs{Z}_{t,k} ) \right) \leq S_k(t) + \bar{L}_k^2 \sum_{j \in J(k)} V_j(t).
    \end{align*}
    holds for every node $k = 1,\ldots,K$. 
    The matrix-vector form immediately follows. Because the graph is DAG, $\bs{A}$ is strictly lower triangular, so that $\bs{I} - \bs{A}$ is invertible. The final inequalities follow from the fact that $V_K(t) = \bs{e}_K^\top \bs{V}(t)$, $\bs{V}(t) \leq (\bs{I} - \bs{A})^{-1}\bs{S}(t)$, and Holder's inequality $V_K(t) \leq \bs{e}_K^\top (\bs{I} - \bs{A})^{-1}\bs{S}(t) \leq \| \bs{e}_K^\top (\bs{I} - \bs{A})^{-1} \|_\infty \| \bs{S}(t) \|_1 = L_\text{net} \| \bs{S}(t) \|_1$. 
\end{proof}

We now want to relate the variance sum that we derived in the previous lemma to the MIGs of the node functions (defined previously), which we do in the following lemma:

\begin{lemma}[Node variance sum $\leq$ MIG] \label{lem:mig}
    For each node $k$, we have:
    \begin{align*}
        \sum_{t=1}^T \mathbb{E}\{ S_k(t) \} \leq C_1 \gamma_{T, k}
    \end{align*}
    where $S_k(t) = \mathbb{E}_{t-1}\left\lbrace \sigma^2_{t-1,k}( \bs{Z}_{t,k} ) \right\rbrace$ (same as Lemma \ref{lem:varrec}) and $C_1 = 2/\log(1+\sigma^{-2})$ is a constant. 
\end{lemma}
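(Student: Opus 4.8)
The plan is to reduce the claim to the classical ``sum of posterior variances is controlled by information gain'' estimate from GP bandit theory; the only new wrinkle here is that node $k$ is queried at a \emph{random} input $\bs{Z}_{t,k}$ rather than at a deterministic design point, so I must be careful that the argument survives this extra randomness. First I would dispose of the outer expectation over $\bs{Z}_{t,k}$. Since $\sigma^2_{t-1,k}(\cdot)$ is $\mathcal{D}_{t-1}$-measurable and the realized value $\bs{z}_{t,k}$ of $\bs{Z}_{t,k}$ is exactly the location at which $f_k$ is observed in iteration $t$, the tower property gives
\begin{align*}
    \sum_{t=1}^T \mathbb{E}\{ S_k(t) \} = \sum_{t=1}^T \mathbb{E}\big\{ \mathbb{E}_{t-1}\{ \sigma^2_{t-1,k}(\bs{Z}_{t,k}) \} \big\} = \mathbb{E}\Big\{ \sum_{t=1}^T \sigma^2_{t-1,k}(\bs{z}_{t,k}) \Big\}.
\end{align*}
It therefore suffices to bound the bracketed sum \emph{pathwise} -- i.e., for every realization of the node functions, the observation noise, and the algorithm's randomness -- by $C_1 \gamma_{T,k}$, after which the deterministic constant $\gamma_{T,k}$ simply passes back through the expectation.

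For the pathwise bound I would follow the standard chain specialized to node $k$. Using the uniform bound $\sigma^2_{t-1,k}(\bs{z}) \le \Sigma_{0,k}(\bs{z},\bs{z}) \le 1$ (the usual normalization; any other uniform prior-variance bound only rescales $C_1$) together with monotonicity of $s \mapsto s/\log(1+s)$ applied to $s = \sigma^{-2}\sigma^2_{t-1,k}(\bs{z}_{t,k}) \in [0,\sigma^{-2}]$, one obtains the elementary inequality
\begin{align*}
    \sigma^2_{t-1,k}(\bs{z}_{t,k}) \le \frac{1}{\log(1+\sigma^{-2})}\log\!\big( 1 + \sigma^{-2}\sigma^2_{t-1,k}(\bs{z}_{t,k}) \big).
\end{align*}
Summing over $t$ and recognizing $\tfrac12\log(1+\sigma^{-2}\sigma^2_{t-1,k}(\bs{z}_{t,k}))$ as the incremental (conditional) mutual information $I(y_{t,k};f_k \mid y_{1:t-1,k})$ between the new node-$k$ observation and $f_k$, the chain rule for mutual information telescopes the sum to $I(y_{1:T,k};f_{1:T,k})$, which by the definition of the MIG is at most $\gamma_{T,k}$ (monotonicity of information gain in the observation set handles the fact that the realized query locations need not be distinct). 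Combining the displays yields $\sum_{t=1}^T \sigma^2_{t-1,k}(\bs{z}_{t,k}) \le \tfrac{2}{\log(1+\sigma^{-2})}\gamma_{T,k} = C_1\gamma_{T,k}$ pathwise, and taking expectations completes the proof, as in \cite{srinivas2010gaussian,takeno2023posterior}.

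The one step I expect to be genuinely delicate is justifying that $\tfrac12\log(1+\sigma^{-2}\sigma^2_{t-1,k}(\bs{z}_{t,k}))$ equals a conditional mutual information \emph{even though} the query location $\bs{z}_{t,k}$ depends on the random parent outputs $\{h_j(\bs{x}_t)\}_{j\in J(k)}$ and on the (independent) Thompson draw used to pick $\bs{x}_t$. The resolution is that the posterior variance $\sigma^2_{t-1,k}$ is a function of the query \emph{locations} only (not the observed values), so the argument only requires the location sequence $\{\bs{z}_{t,k}\}$ to be predictable with respect to the node-$k$ observation filtration, which holds by construction of BONSAI; the remaining randomness (other nodes, TS innovations) can be conditioned on without affecting the telescoping identity. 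Everything else -- the elementary inequality, the tower-property reduction, and bounding the realized information gain by $\gamma_{T,k}$ -- is routine.
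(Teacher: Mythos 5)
Your proposal is correct and follows essentially the same route as the paper: reduce to the pathwise bound $\sum_{t=1}^T \sigma^2_{t-1,k}(\bs{z}_{t,k}) \leq C_1\gamma_{T,k}$ and then take expectations, noting $\gamma_{T,k}$ is deterministic. The only difference is that the paper simply cites \cite[Lemma 5.4]{srinivas2010gaussian} for the pathwise step, whereas you re-derive it (elementary inequality, telescoping information gain, and the observation that the posterior variance depends only on the query locations), which is a faithful unpacking of the same argument rather than a different approach.
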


\begin{proof}
    By \cite[Lemma 5.4]{srinivas2010gaussian}, we know that, for the GP at node $k$,
    \begin{align*}
        \sum_{t=1}^T\sigma^2_{t-1,k}( \bs{z}_{t,k} ) \leq C_1 \gamma_{T,k},
    \end{align*}
    for any realized sequence of inputs $\bs{z}_{1,k}, \ldots, \bs{z}_{T,k}$. Since $\gamma_{T,k}$ is a deterministic constant, we can plug in random sequence $\bs{Z}_{1,k}, \ldots, \bs{Z}_{T,k}$ and take the expectation on both sides to yield the claimed inequality. 
\end{proof}

The final lemma that we prove next enables us to relate the key difference quantity in Lemma \ref{lem:psi} to the standard deviation of $g$ (effectively yielding an optimistic bound on the Bayesian regret at any fixed iteration $t$):

\begin{lemma}[Finite-set optimism bound] \label{lem:optimism}
    Let $Y_{t, \bs{x}} = [ g_t(\bs{x}) - \mu^{(g)}_{t-1}(\bs{x}) ] / \sigma^{(g)}_{t-1}(\bs{x})$ and $M_t = \max_{\bs{x} \in \mathcal{X}} Y_{t,\bs{x}}$. Under the sub-Gaussian assumption (A5), we have:
    \begin{align*}
        \mathbb{E}\{ (M_t)_+^2 \} \leq 2 \kappa^2 ( 1 + \log| \mathcal{X}| ),
    \end{align*}
    where $(M_t)_+ = \max\{ M_t, 0 \}$. Consequently, the following also holds:
    \begin{align*}
        \mathbb{E}\{ g_t(\bs{x}_t) - \mu^{(g)}_{t-1}(\bs{x}_t) \} \leq \sqrt{2 \kappa^2 ( 1 + \log| \mathcal{X}| )} \sqrt{ \mathbb{E}\{ ( \sigma^{(g)}_{t-1}(\bs{x}_t) )^2 \} }
    \end{align*}
\end{lemma}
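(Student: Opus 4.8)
The approach is a standard maximal-inequality argument carried out conditionally on $\mathcal{D}_{t-1}$, followed by a Cauchy--Schwarz step to pass to the stated consequence. First I would fix an arbitrary (deterministic) $\bs{x}\in\mathcal{X}$ and convert the moment-generating-function bound (A5) into a one-sided sub-Gaussian tail bound: applying Markov's inequality to $\exp(\lambda Y_{t,\bs{x}})$ and optimizing over $\lambda = s/\kappa^2>0$ gives $\mathbb{P}(Y_{t,\bs{x}}>s\mid\mathcal{D}_{t-1})\le\exp(-s^2/(2\kappa^2))$ for every $s>0$. A union bound over the finite index set $\mathcal{X}$ then yields $\mathbb{P}(M_t>s\mid\mathcal{D}_{t-1})\le\min\{1,\;|\mathcal{X}|\exp(-s^2/(2\kappa^2))\}$ for all $s>0$. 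The degenerate case $\sigma^{(g)}_{t-1}(\bs{x})=0$ is handled by the convention $Y_{t,\bs{x}}:=0$, which is consistent since then $g_t(\bs{x})=\mu^{(g)}_{t-1}(\bs{x})$ almost surely, so no term in what follows is affected.

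Next I would integrate the tail. Since $(M_t)_+\ge 0$ and $\{(M_t)_+>s\}=\{M_t>s\}$ for $s>0$, the layer-cake identity gives $\mathbb{E}\{(M_t)_+^2\mid\mathcal{D}_{t-1}\}=\int_0^\infty 2s\,\mathbb{P}(M_t>s\mid\mathcal{D}_{t-1})\,ds$. I would split this integral at the crossover threshold $a:=\kappa\sqrt{2\log|\mathcal{X}|}$, use the trivial bound $\mathbb{P}(M_t>s)\le 1$ on $[0,a]$ (contributing $a^2=2\kappa^2\log|\mathcal{X}|$) and the exponential bound on $[a,\infty)$, and evaluate $|\mathcal{X}|\int_a^\infty 2s\,e^{-s^2/(2\kappa^2)}\,ds = 2\kappa^2|\mathcal{X}|e^{-a^2/(2\kappa^2)} = 2\kappa^2$ via the substitution $w=s^2/(2\kappa^2)$. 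This gives $\mathbb{E}\{(M_t)_+^2\mid\mathcal{D}_{t-1}\}\le 2\kappa^2(1+\log|\mathcal{X}|)$, and taking the total expectation by the tower property removes the conditioning and establishes the first claim (when $|\mathcal{X}|=1$ the bound reads $2\kappa^2$, which remains valid).

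For the consequence, I would write $g_t(\bs{x}_t)-\mu^{(g)}_{t-1}(\bs{x}_t)=\sigma^{(g)}_{t-1}(\bs{x}_t)\,Y_{t,\bs{x}_t}$ and observe that, since $\bs{x}_t\in\mathcal{X}$, we have $Y_{t,\bs{x}_t}\le M_t\le (M_t)_+$ pathwise, while $\sigma^{(g)}_{t-1}(\bs{x}_t)\ge 0$; hence $g_t(\bs{x}_t)-\mu^{(g)}_{t-1}(\bs{x}_t)\le\sigma^{(g)}_{t-1}(\bs{x}_t)\,(M_t)_+$. Taking expectations and applying Cauchy--Schwarz gives $\mathbb{E}\{g_t(\bs{x}_t)-\mu^{(g)}_{t-1}(\bs{x}_t)\}\le\sqrt{\mathbb{E}\{(\sigma^{(g)}_{t-1}(\bs{x}_t))^2\}}\,\sqrt{\mathbb{E}\{(M_t)_+^2\}}$, and substituting the first claim finishes the proof. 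Measurability of $M_t$, $(M_t)_+$, and $\sigma^{(g)}_{t-1}(\bs{x}_t)$ — the last via the measurable selector guaranteed by (A6) — is what makes all of these expectations well-defined.

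I do not anticipate a genuine obstacle here, only two delicate bookkeeping points. The first is the order of operations in the conditioning: the union bound must be taken over deterministic $\bs{x}\in\mathcal{X}$, where (A5) literally applies, \emph{before} passing to the data-dependent (and TS-randomized, hence not $\mathcal{D}_{t-1}$-measurable) selector $\bs{x}_t$; this is legitimate precisely because $Y_{t,\bs{x}_t}\le M_t$ holds pathwise. The second is choosing the split point $a$ in the tail integral so that the two contributions combine exactly into the constant $2\kappa^2(1+\log|\mathcal{X}|)$ rather than a looser multiple.
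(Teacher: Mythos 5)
Your proposal is correct and follows essentially the same route as the paper: Chernoff tail bound from (A5), union bound over the finite $\mathcal{X}$, tail integration of $(M_t)_+^2$, tower property, and then the pathwise bound $Y_{t,\bs{x}_t}\le (M_t)_+$ followed by Cauchy--Schwarz. The only difference is that you carry out explicitly the tail integration (split at $a=\kappa\sqrt{2\log|\mathcal{X}|}$) that the paper dispatches as ``standard methods,'' arriving at the stated constant $2\kappa^2(1+\log|\mathcal{X}|)$.
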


\begin{proof}
    By (A5), for each $\bs{x}$ given $\mathcal{D}_{t-1}$, we have:
    \begin{align*}
        \mathbb{P}\{ Y_{t, \bs{x}} \geq u \mid \mathcal{D}_{t-1} \} \leq \exp\left( -\frac{u^2}{2 \kappa^2}  \right), ~~~ u \geq 0.
    \end{align*}
    By combining this with a union bound over the finite set $\mathcal{X}$, we get:
    \begin{align*}
        \mathbb{P}\{ M_t \geq u \mid \mathcal{D}_{t-1} \} &= \mathbb{P}\{ \exists \bs{x} \in \mathcal{X} : Y_{t, \bs{x}} \geq u \mid \mathcal{D}_{t-1} \}, \\
        & \leq \sum_{ \bs{x} \in \mathcal{X} } \mathbb{P}\{ Y_{t, \bs{x}} \geq u \mid \mathcal{D}_{t-1} \}, \\
        & \leq | \mathcal{X} | \exp\left( -\frac{u^2}{2 \kappa^2}  \right).
    \end{align*}
    Integrating the tail bound using standard methods gives the following conditional second-moment bound:
    \begin{align*}
        \mathbb{E}_{t-1}\{ (M_t)_+^2 \} \leq 2 \kappa^2 \log(1 + |\mathcal{X}|).
    \end{align*}
    Taking the total expectation using the tower property again shows that the unconditional version also holds, as claimed. 
    
    We can relate the one-step ``optimism'' to $M_t$ (to derive the second inequality) as follows. Start with the definition of $Y_{t, \bs{x}}$:
    \begin{align*}
       g_t(\bs{x}) - \mu^{(g)}_{t-1}(\bs{x}) = \sigma^{(g)}_{t-1}(\bs{x}) Y_{t, \bs{x}}.
    \end{align*}
    Evaluating at the chosen point $\bs{x}_t$ and using $Y_{t, \bs{x}_t} \leq (M_t)_+$ (true even when $M_t < 0$), we have the pointwise inequality:
    \begin{align*}
        g_t(\bs{x}_t) - \mu^{(g)}_{t-1}(\bs{x}_t) \leq \sigma^{(g)}_{t-1}(\bs{x}) (M_t)_+.
    \end{align*}
    Taking the unconditional expectation and applying Cauchy-Schwarz yields:
    \begin{align*}
        \mathbb{E}\{ g_t(\bs{x}_t) - \mu^{(g)}_{t-1}(\bs{x}_t) \} & \leq \mathbb{E}\{ \sigma^{(g)}_{t-1}(\bs{x}) (M_t)_+ \} \\
        & \leq \sqrt{ \mathbb{E}\{ ( \sigma^{(g)}_{t-1}(\bs{x}_t) )^2 \} } \sqrt{ \mathbb{E}\{ (M_t)_+^2 \} }.
    \end{align*}
    Substituting the first yields exactly the second claimed inequality.
\end{proof}

\subsection{Main theorem for DAGs and finite $\mathcal{X}$}

We are now in a position to state and prove our main theorem that provides a novel bound on the BCR when using a TS version of BOFN (equivalent to nominal version of BONSAI).

\begin{theorem} \label{thm:bcr_bound}
    Let assumptions (A1) to (A6) hold. Then, by running the BONS algorithms with TS acquisition function (i.e., the nominal version of BONSAI; Algorithm \ref{alg:bonsai}), the BCR can be bounded as follows:
    \begin{align*}
        \textnormal{BCR}_T \leq \kappa \sqrt{2(1 + \log|\mathcal{X}|)T} \sqrt{L_\textnormal{net} C_1 \textstyle\sum_{k=1}^K \gamma_{T,k}}
    \end{align*}
    where $C_1 = 2 / \log(1 + \sigma^{-2})$.
\end{theorem}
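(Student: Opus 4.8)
The plan is to chain the four supporting lemmas in sequence, each of which already absorbs one conceptual difficulty (posterior-sampling symmetry, the finite-set union bound, variance propagation through the network, and the MIG accounting), so that the theorem itself reduces to assembling these pieces together with a single Cauchy--Schwarz step. First I would expand $\textnormal{BCR}_T = \sum_{t=1}^T \mathbb{E}\{ g(\bs{x}^\star) - g(\bs{x}_t) \}$ and apply the posterior-sampling identity (Lemma~\ref{lem:psi}) term by term to rewrite it as $\textnormal{BCR}_T = \sum_{t=1}^T \mathbb{E}\{ g_t(\bs{x}_t) - \mu^{(g)}_{t-1}(\bs{x}_t) \}$; this is exactly the one-step ``optimism gap'' that the remaining lemmas are built to control.

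Next I would apply the finite-set optimism bound (the ``consequently'' inequality in Lemma~\ref{lem:optimism}) to each summand, yielding
\begin{align*}
    \textnormal{BCR}_T \leq \sqrt{2\kappa^2(1+\log|\mathcal{X}|)} \sum_{t=1}^T \sqrt{ \mathbb{E}\{ ( \sigma^{(g)}_{t-1}(\bs{x}_t) )^2 \} }.
\end{align*}
Applying Cauchy--Schwarz in the form $\sum_{t=1}^T a_t \leq \sqrt{T}\,\sqrt{\sum_{t=1}^T a_t^2}$ to the deterministic sequence $a_t = \sqrt{ \mathbb{E}\{ ( \sigma^{(g)}_{t-1}(\bs{x}_t) )^2 \} }$ pulls out the $\sqrt{T}$ factor and leaves the cumulative-variance quantity $\sqrt{ \sum_{t=1}^T \mathbb{E}\{ ( \sigma^{(g)}_{t-1}(\bs{x}_t) )^2 \} }$.

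It then remains to bound that cumulative predictive variance of the objective node. Here I would invoke the network variance bound from Lemma~\ref{lem:varrec}, $( \sigma^{(g)}_{t-1}(\bs{x}_t) )^2 \leq L_\textnormal{net} \sum_{k=1}^K S_k(t)$, take total expectations, interchange the finite sums over $t$ and $k$, and then apply Lemma~\ref{lem:mig} node by node, $\sum_{t=1}^T \mathbb{E}\{ S_k(t) \} \leq C_1 \gamma_{T,k}$, to get $\sum_{t=1}^T \mathbb{E}\{ ( \sigma^{(g)}_{t-1}(\bs{x}_t) )^2 \} \leq L_\textnormal{net} C_1 \sum_{k=1}^K \gamma_{T,k}$. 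Substituting this back and collecting constants gives exactly $\textnormal{BCR}_T \leq \kappa \sqrt{2(1+\log|\mathcal{X}|)T}\,\sqrt{ L_\textnormal{net} C_1 \sum_{k=1}^K \gamma_{T,k} }$.

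I do not anticipate a serious obstacle at the level of the theorem: the genuinely delicate steps (non-Gaussianity of $g \mid \mathcal{D}_{t-1}$, propagation of node variances through the DAG via Efron--Stein and the $(\bs{I}-\bs{A})^{-1}$ accounting, and the validity of the MIG bound despite the node inputs $\bs{Z}_{t,k}$ being random) are all discharged inside the lemmas. The only point requiring care is bookkeeping around conditional versus unconditional expectations: Lemma~\ref{lem:optimism} already delivers an unconditional bound via the tower property, whereas Lemma~\ref{lem:varrec} states its variance identities conditional on $\mathcal{D}_{t-1}$, so one must take $\mathbb{E}\{\cdot\}$ on both sides before chaining into Lemma~\ref{lem:mig}. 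I would also note that the Cauchy--Schwarz step is applied to the deterministic sequence $a_t$, not inside an expectation, so no Jensen-type correction is needed and the $\sqrt{T}$ dependence is obtained cleanly.
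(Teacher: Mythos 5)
Your proposal is correct and follows essentially the same route as the paper's proof: Lemma~\ref{lem:psi} to rewrite the regret as the one-step optimism gap, Lemma~\ref{lem:optimism} for the $\sqrt{2\kappa^2(1+\log|\mathcal{X}|)}$ factor, Cauchy--Schwarz to extract $\sqrt{T}$, then Lemma~\ref{lem:varrec} and Lemma~\ref{lem:mig} to bound the cumulative objective variance by $L_\textnormal{net} C_1 \sum_{k=1}^K \gamma_{T,k}$. Your remark on handling conditional versus unconditional expectations before chaining into Lemma~\ref{lem:mig} matches the bookkeeping in the paper's derivation, so no further changes are needed.
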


\begin{proof}
    We start with the definition of BCR and derive a series of inequalities:
    \begin{align*}
        \text{BCR}_T &= \sum_{t=1}^T \mathbb{E} \left\lbrace g(\bs{x}^\star) - g(\bs{x}_t) \right\rbrace, \\
        &= \sum_{t=1}^T \mathbb{E} \left\lbrace g_t(\bs{x}_t)-\mu_{t-1}^{(g)}(\bs{x}_t) \right\rbrace, \\
        &\leq \sqrt{2 \kappa^2 ( 1 + \log| \mathcal{X}| )} \sum_{t=1}^T \sqrt{ \mathbb{E}\{ ( \sigma^{(g)}_{t-1}(\bs{x}_t) )^2 \} }, \\
        &\leq \sqrt{2 \kappa^2 ( 1 + \log| \mathcal{X}| )} \sqrt{T \sum_{t=1}^T \mathbb{E}\{ ( \sigma^{(g)}_{t-1}(\bs{x}_t))^2 \} }, \\
        & \leq \sqrt{2 \kappa^2 ( 1 + \log| \mathcal{X}| ) T} \sqrt{\sum_{t=1}^T L_\text{net} \sum_{k=1}^K \mathbb{E}\{ S_k(t) \} }, \\
        & \leq \sqrt{2 \kappa^2 ( 1 + \log| \mathcal{X}| ) T} \sqrt{L_\text{net} \sum_{k=1}^K C_1 \gamma_{T,k} }, 
    \end{align*}
    where the second line follows from Lemma \ref{lem:psi}, the third line follows from Lemma \ref{lem:optimism}, the fourth line follows from the Cauchy-Schwarz inequality, the fifth line follows from Lemma \ref{lem:varrec}, and the six line follows from Lemma \ref{lem:mig}. The claimed inequality then follows by rearrangement of this final expression. 
\end{proof}

As mentioned earlier, this bound has important significance for convergence; we see that the growth of the BCR has the following order with respect to $T$ and $|\mathcal{X}|$: $\text{BCR}_T = O( \sqrt{T \gamma^\text{sum}_T \log|\mathcal{X}|} )$ where $\gamma^\text{sum}_T = \sum_{k=1}^K \gamma_{T,k}$. This means, as long as $\gamma^\text{sum}_T$ is sublinaer in $T$, then $\overline{\text{BSR}}_T \to 0$ by Lemma \ref{lem:BSRvsBCR} and convergence in probability immediately follows. However, as one would expect, even just a single node being difficult to learn might compromise both performance and convergence. 

\subsection{Corollary for cyclic networks}

Although the previous analysis is only valid for DAGs, it turns out that only Lemma \ref{lem:varrec} makes use of this fact (Lemmas \ref{lem:psi}, \ref{lem:mig}, and \ref{lem:optimism} are all valid for general networks). Furthermore, Lemma \ref{lem:varrec} only minorly makes use of the DAG assumption to ensure $(\bs{I} - \bs{A})^{-1}$ exists and has non-negative entries. The identical derivation extends to cyclic networks provided two mild conditions hold: (i) for every queried $\bs{x}$ there exists a unique fixed point $\bs{h}^\star(\bs{x})$ and (ii) the matrix $\bs{A}$ satisfies $\rho( \bs{A} ) < 1$ (i.e., has a spectral radius less than one). The latter is equivalent to $(\bs{I} - \bs{A})$ being a non-singular $M$-matrix, which ensures the conditions on existence and non-negative entries. When the graph is a DAG, $\rho(\bs{A}) = 0$ whereas, if $\rho(\bs{A}) \geq 1$, the variance bound may fail due to loop amplification and thus would require an alternative proof technique (which we believe is an interesting direction for future work). 

\subsection{Proof of Lemma 1} \label{subsec:prooflem1}

We start by rewriting the modified BSR in terms of the posterior mean function for $g$ using the tower property of expectations:
\begin{align*}
    \overline{\textnormal{BSR}}_T &= \mathbb{E}\{ g(\bs{x}^\star) - g( \hat{\bs{x}}_T ) \}, \\
    &= \mathbb{E}_{\mathcal{D}_{T-1}} \left\lbrace \mathbb{E}\{ g(\bs{x}^\star) - g( \hat{\bs{x}}_T ) \mid \mathcal{D}_{T-1} \} \right\rbrace, \\
    &= \mathbb{E}_{\mathcal{D}_{T-1}} \{ \mathbb{E}\{ g(\bs{x}^\star) \mid \mathcal{D}_{T-1} \} - \mu^{(g)}_{T-1}( \hat{\bs{x}}_{T} ) \}.
\end{align*}
For every $t \leq T$, we have $\mu^{(g)}_{T-1}( \hat{\bs{x}}_{t} ) \leq \mu^{(g)}_{T-1}( \hat{\bs{x}}_{T} )$, which implies:
\begin{align*}
    \mu^{(g)}_{T-1}( \hat{\bs{x}}_{T} ) \geq \frac{1}{T}\sum_{t=1}^T \mu^{(g)}_{T-1}( \hat{\bs{x}}_{t} ).
\end{align*}
Inserting this above and rearranging, we can derive:
\begin{align*}
    \overline{\textnormal{BSR}}_T &\leq \mathbb{E}_{\mathcal{D}_{T-1}} \left\lbrace \mathbb{E}\{ g(\bs{x}^\star) | \mathcal{D}_{T-1} \} - \frac{1}{T}\sum_{t=1}^T \mu^{(g)}_{T-1}( \hat{\bs{x}}_{t} ) \right\rbrace, \\
    &= \mathbb{E}_{\mathcal{D}_{T-1}} \left\lbrace \mathbb{E} \left\lbrace g(\bs{x}^\star) - \frac{1}{T}\sum_{t=1}^T g( \hat{\bs{x}}_{t} ) ~|~ \mathcal{D}_{T-1} \right\rbrace \right\rbrace, \\
    &= \frac{1}{T} \sum_{t=1}^T \mathbb{E}\{ g(\bs{x}^\star) - g( \hat{\bs{x}}_{t} ) \}, \\
    &= \frac{1}{T} \sum_{t=1}^T \overline{\textnormal{BSR}}_t.
\end{align*}
This proves the first stated inequality. For the BCR, we note that:
\begin{align*}
    \mathbb{E}\{ g(\bs{x}^\star) - g( \hat{\bs{x}}_{t} ) \} &= \mathbb{E}_{ \mathcal{D}_{t-1} } \left\lbrace \mathbb{E}\left\lbrace g(\bs{x}^\star) - g(\bs{x}_t) + g(\bs{x}_t) - g( \hat{\bs{x}}_{t} ) \mid \mathcal{D}_{t-1} \right\rbrace  \right\rbrace, \\
    &= \mathbb{E}_{ \mathcal{D}_{t-1} } \left\lbrace \mathbb{E}\left\lbrace g(\bs{x}^\star) - g(\bs{x}_t) \mid \mathcal{D}_{t-1} \right\rbrace + \mu^{(g)}_{t-1}(\bs{x}_t) - \mu^{(g)}_{t-1}( \hat{\bs{x}}_{t} ) \right\rbrace, \\
    &\leq \mathbb{E}_{ \mathcal{D}_{t-1} } \left\lbrace \mathbb{E}\left\lbrace g(\bs{x}^\star) - g(\bs{x}_t) \mid \mathcal{D}_{t-1} \right\rbrace\right\rbrace, \\
    &= \mathbb{E} \{ g(\bs{x}^\star) - g(\bs{x}_t) \},
\end{align*}
where the third line follows from the definition of the recommender that maximizes the posterior means function such that $\mu^{(g)}_{t-1}(\bs{x}_t) \leq \mu^{(g)}_{t-1}( \hat{\bs{x}}_{t} )$ for all $t \geq 1$. Averaging over $t = 1, \ldots, T$, we get:
\begin{align*}
    \frac{1}{T} \sum_{t=1}^T \overline{\textnormal{BSR}}_t &= \frac{1}{T} \sum_{t=1}^T \mathbb{E}\{ g(\bs{x}^\star) - g( \hat{\bs{x}}_{t} ) \}, \\
    &\leq \frac{1}{T} \sum_{t=1}^T \mathbb{E} \{ g(\bs{x}^\star) - g(\bs{x}_t) \}, \\
    &= \frac{1}{T} \text{BCR}_T.
\end{align*}
This completes the proof. $\hfill \square$

\section{Description of Synthetic Test Functions}
\label{app:synthetic-test}

\setcounter{figure}{0}
\renewcommand{\thefigure}{C\arabic{figure}}

\subsection{Polynomial} \label{sec:polynomial_case}

The Polynomial test problem is a function network consisting of $K=4$ nodes. The first three are black-box (unknown) while the final one is a white-box (known) summation function. This problem involves two design variables $\bs{x}=[ x_1,x_2 ]^\top$ and two uncertainties $\bs{w}=[w_1, w_2]^\top$. The ground-truth equations can be expressed in the form of \eqref{eq:network-functions} as follows
\begin{align*}
    h_1 &= -2r_1^6 + 12.2r_1^5 - 21.2r_1 - 6.2r_1 + 6.4r_1^3 + 4.7r_1^2, \\
    h_2 &= -r_2^6 + 11r_2^5 - 43.3r_2^4 + 10r_2 + 74.8r_2^3 - 56.9r_2^2, \\
    h_3 &= 4.1r_1r_2 + 0.1r_1^2r_2^2 - 0.4r_1r_2^2 - 0.4r_1^2r_2, \\
    h_4 &= h_1 + h_2 + h_3,
\end{align*}
where $r_1 = x_1 + w_1\cos(w_2)$ and $r_2 = x_2 + w_1\sin(w_2)$. The objective $g(\bs{x},\bs{w})$ is given by \eqref{eq:network-objective} with $\bs{c} = [0,0,0,1]^\top$. The design space is $\mathcal{X} = [-0.5, 3.25] \times  [-0.5, 4.25] $ and the uncertainty space is $\mathcal{W} = \mathcal{W}_1 \times \mathcal{W}_2$ where
\begin{align*}
    \mathcal{W}_1 &= 0.5 \times \{ 0.0, 0.2, 0.4, 0.6, 1.0  \}, \\
    \mathcal{W}_2 &= 2\pi \times \{ 0.0, 0.1, 0.125, 0.2, 0.25, 0.3, 0.375, 0.45, 0.5, \\
    & ~~~~~ 0.575, 0.625, 0.7, 0.75, 0.875, 0.95, 1.0 \}.
\end{align*}
The global robust solution $\bs{x}^\star$ is approximately $[-0.178, 0.289]^\top$ while the corresponding optimal objective $\max_{\bs{w} \in \mathcal{W}} g(\bs{x}^\star, \bs{w})$ is approximately $-4.2$. The nominal uncertainty value is assumed to be $\bar{\bs{w}} = [0,0]^\top$. 

\subsection{Cliff}

The Cliff test problem is a function network consisting of $K=6$ nodes. The first five are black-box (unknown) while the final one is a white-box (known) summation function. The problem involves five design variables $\bs{x}=[ x_1,x_2,x_3,x_4,x_5 ]^\top$ and five uncertainties $\bs{w}=[w_1, w_2, w_3, w_4, w_5]^\top$. The ground-truth equations can be expressed in the form of \eqref{eq:network-functions} as follows
\begin{align*}
    h_k &= \frac{-10}{1 + 0.3 \exp(6x_i + 3\sin(w_i))} - 0.2 (x_i + 0.5 \sin(w_i))^2, ~~ k = 1,\ldots,5, \\
    h_6 &= \sum_{k=1}^5 h_k.
\end{align*}
The objective $g(\bs{x},\bs{w})$ is given by \eqref{eq:network-objective} with $\bs{c} = [0,0,0,0,0,1]^\top$. The design space is $\mathcal{X} = [0, 5]^5$ and the uncertainty space is $\mathcal{W} = W^5$ where $W = \pi \times \{ 0, 0.5, 1 \} - \pi/2$.
The global robust solution $\bs{x}^\star$ is $[1.2, 1.2, 1.2, 1.2, 1.2]^\top$ while the corresponding optimal objective $\max_{\bs{w} \in \mathcal{W}} g(\bs{x}^\star, \bs{w})$ is approximately $-2.9$. The nominal uncertainty value is assumed to be $\bar{\bs{w}} = [0,0,0,0,0]^\top$. 

\subsection{Rosenbrock}

The Rosenbrock problem is a function network consisting of $K=4$ nodes. The first three are are black-box (unknown) while the final one is a white-box (known) summation function. The problem involves one design variable $x$ and two uncertainties $\bs{w} = [w_1,w_2]^\top$. The ground-truth equations can be expressed in the form of \eqref{eq:network-functions} as follows
\begin{align*}
    h_1 &= (x + w_1)^2, \\
    h_2 &= (x + w_1 - 1)^2, \\
    h_3 &= (w_2 - h_1)^2, \\
    h_4 &= -100h_3 - h_2.
\end{align*}
The objective $g(\bs{x},\bs{w})$ is given by \eqref{eq:network-objective} with $\bs{c} = [0,0,0,1]^\top$. The design space is $\mathcal{X} = [-1, 2]$ and the uncertainty space is $\mathcal{W} = \mathcal{W}_1 \times \mathcal{W}_2$ where
\begin{align*}
    \mathcal{W}_1 &= 0.2 \times \{ \textstyle\frac{p}{19} : p = 0, \ldots, 19 \} - 0.1, \\
    \mathcal{W}_2 &= \{ -0.1, 0, 0.1 \}.
\end{align*}
The global robust solution $\bs{x}^\star$ is $1$ while the corresponding optimal objective $\max_{\bs{w} \in \mathcal{W}} g(\bs{x}^\star, \bs{w})$ is approximately $-143.8$. The nominal uncertainty value is assumed to be $\bar{\bs{w}} = [0,1.4]^\top$. 

\subsection{Modified Sine}

The Modified Sine problem is a function network consisting of $K=7$ nodes. The first six are are black-box (unknown) while the final one is a white-box (known) summation function. The problem involves two design variables $\bs{x}=[ x_1,x_2 ]^\top$ and two uncertainties $\bs{w}=[w_1, w_2]^\top$. The ground-truth equations can be expressed in the form of \eqref{eq:network-functions} as follows
\begin{align*}
    h_1 &= x_1 + w_1, \\
    h_2 &= -\sin(2\pi h_1^2), \\
    h_3 &= -h_1^2 - 0.2h_1, \\
    h_4 &= x_2 + w_2, \\
    h_5 &= -\sin(2\pi h_4^2), \\
    h_6 &= -h_4^2 - 0.2h_4, \\
    h_7 &= h_2 + h_3 + h_5 + h_6.
\end{align*}
The objective $g(\bs{x},\bs{w})$ is given by \eqref{eq:network-objective} with $\bs{c} = [0,0,0,0,0,0,1]^\top$. The design space is $\mathcal{X} = [-1, 1]^2$ and the uncertainty space is $\mathcal{W} = W^2$ where
\begin{align*}
    W &= 0.5 \times \{ 0, 0.33, 0.50, 0.66, 1 \} - 0.25.
\end{align*}
The global robust solution $\bs{x}^\star$ is $[0, 0]^\top$ while the corresponding optimal objective $\max_{\bs{w} \in \mathcal{W}} g(\bs{x}^\star, \bs{w})$ is approximately $-0.891$. The nominal uncertainty value is assumed to be $\bar{\bs{w}} = [0, 0]^\top$. 

\section{Comparison to surrogate-free robust optimization} \label{app:surrogate-free-comparison}

\setcounter{figure}{0}
\renewcommand{\thefigure}{D\arabic{figure}}

Here, we compare BONSAI to a class of robust optimization methods that do not rely on surrogate models and take inspiration from evolutionary or local search heuristics. While these methods can be effective in certain cases, they typically suffer from some form of sample inefficiency. To highlight this distinction, we compare BONSAI to the method proposed by Bertsimas, Nohadani, and Teo (BNT) \cite{Bertsimas_unconstrained}. We also provide a brief discussion of related evolutionary approaches and how they differ from our method in both philosophy and performance characteristics.

\subsection{The BNT method for robust black-box optimization}

The BNT method was developed for solving robust (max-min) optimization problems in simulation-based settings. At each iteration, the method performs the following steps:
\begin{enumerate}
    \item \textbf{Neighborhood exploration:} Given a current decision variable $\bs{x}_k$, the method explores the local neighborhood to identify adversarial perturbations that maximize the objective $g(\bs{x}_k, \bs{w})$. These worst-case $\bs{w}$ values are obtained by running short gradient descent routines (or local searches) over the uncertainty space.
    \item \textbf{Robust update direction:} Once a set of adversarial directions has been collected, the method formulates a second-order cone program (SOCP) to compute a (robust) ascent direction for $\bs{x}$ that pushes the design away from all known adversarial points.
    \item \textbf{Step size selection:} The step length is chosen to minimally move $\bs{x}_k$ while ensuring robustness against the discovered $\bs{w}$ samples. The process is repeated until the SOCP becomes infeasible, indicating convergence to a locally robust solution.
\end{enumerate}

This method is inherently local and does not reuse information across iterations. It is also sample-intensive, requiring many objective evaluations and gradient estimates to identify the worst-case $\bs{w}$ samples at every iteration. Despite this, the algorithm provides a principled approach to simulation-based robust design and serves as a useful benchmark for comparison.

\subsection{Experimental setup}

To evaluate performance, we implement the BNT algorithm and apply it to the 2D polynomial case study described in Section~\ref{sec:polynomial_case}. At each iteration, we restrict the neighborhood exploration step to a maximum of 10 function evaluations, consistent with the approach in the original paper. We run 30 independent trials, each initialized from a different random starting point. The recommended design point is taken as the current iterate. 
For BONSAI, we use the same setup and acquisition procedure described in the main text, with evaluation budget matched to that used in the BNT runs. Both methods are compared based on the best worst-case objective value attained after a given number of total function evaluations. 
% Figures~\ref{fig:bertsimas_ascent_steps} and \ref{fig:bonsai_vs_bertsimas} summarize the performance across runs.

\subsection{Results and discussion}

Figure~\ref{fig:bertsimas_ascent_steps} shows the trajectory of four representative BNT runs. Depending on the initial point, the method either converges to a robust local solution or reaches the global robust optimum. Since the method only uses local information and does not construct a global model, its behavior can vary widely across runs. This kind of sensitivity is a common drawback of local search–based methods in the presence of multiple (locally) robust solutions.

\begin{figure}[tb]
  \centering
  \includegraphics[width=0.9\textwidth]{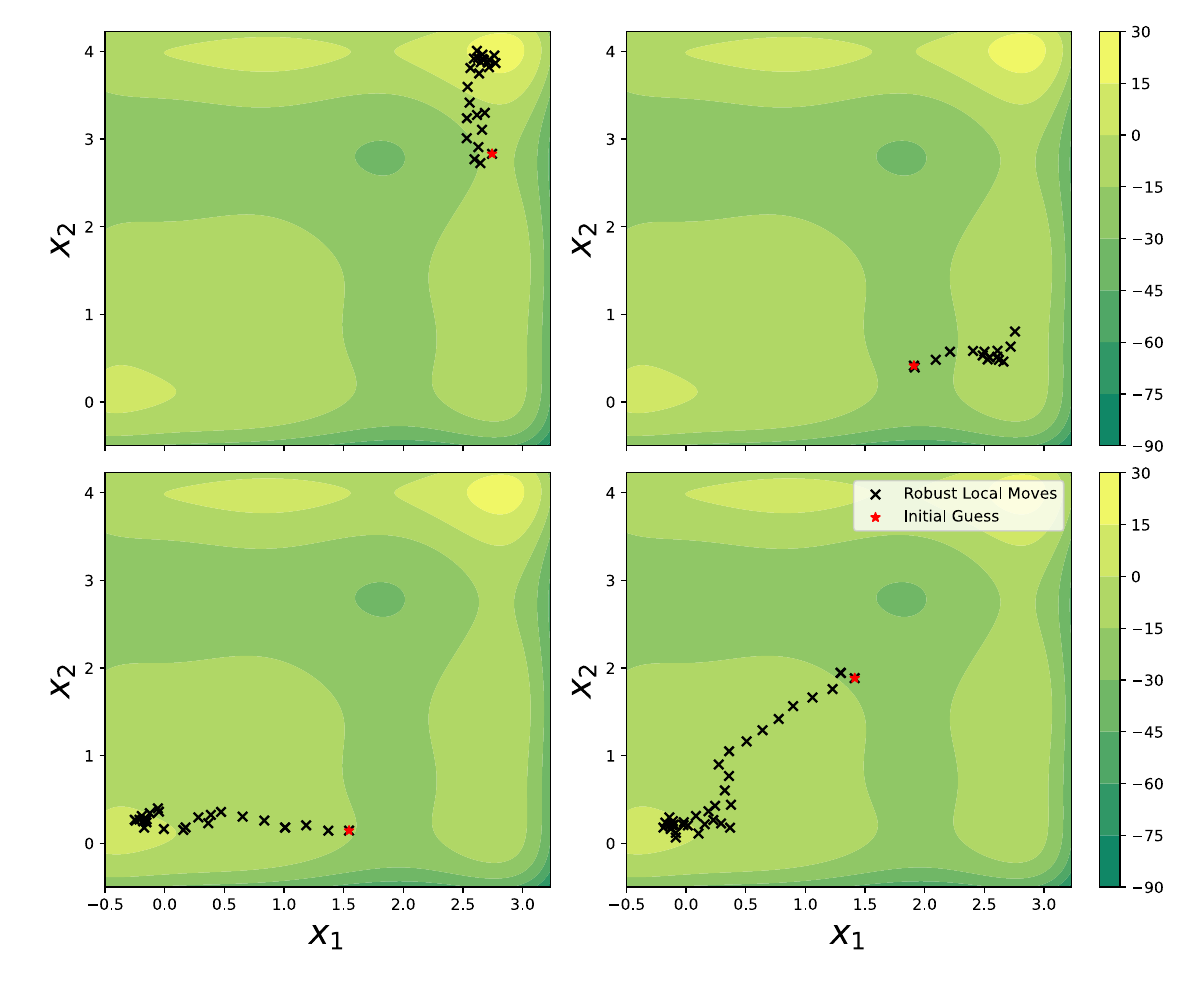}
  \caption{Representative runs of the BNT local robust optimization method on the 2D polynomial case study. \textbf{Top:} Certain initializations lead to convergence to local (suboptimal) robust solutions. \textbf{Bottom:} Other initializations recover the global robust optimum. Red star denotes the initial point; black ‘x’ markers indicate the iterates; background shading shows the nominal objective surface.}
  \label{fig:bertsimas_ascent_steps}
\end{figure}

In contrast, BONSAI uses a global surrogate that models $g(\bs{x}, \bs{w})$ across the joint design–uncertainty space, enabling more principled and data-efficient exploration. Figure~\ref{fig:bonsai_vs_bertsimas} shows the best worst-case value attained as a function of the number of evaluations. BONSAI converges rapidly to the global robust solution with low variance across runs, while the BNT method exhibits slower progress and significantly more variability. This reflects the fundamental tradeoff: without a surrogate, BNT spends many evaluations rediscovering adversarial points at each iteration, whereas BONSAI integrates prior observations to more effectively guide sampling. Note that BONSAI performs poorly in the early phases when the surrogate model is trained on too little data and thus recommends inaccurate designs. This quickly gets corrected as new data comes in and the surrogate can overcome the prior, achieving a significant and steady boost in performance after roughly 25 function evaluations.

\begin{figure}[tb]
  \centering
  \includegraphics[width=0.65\textwidth]{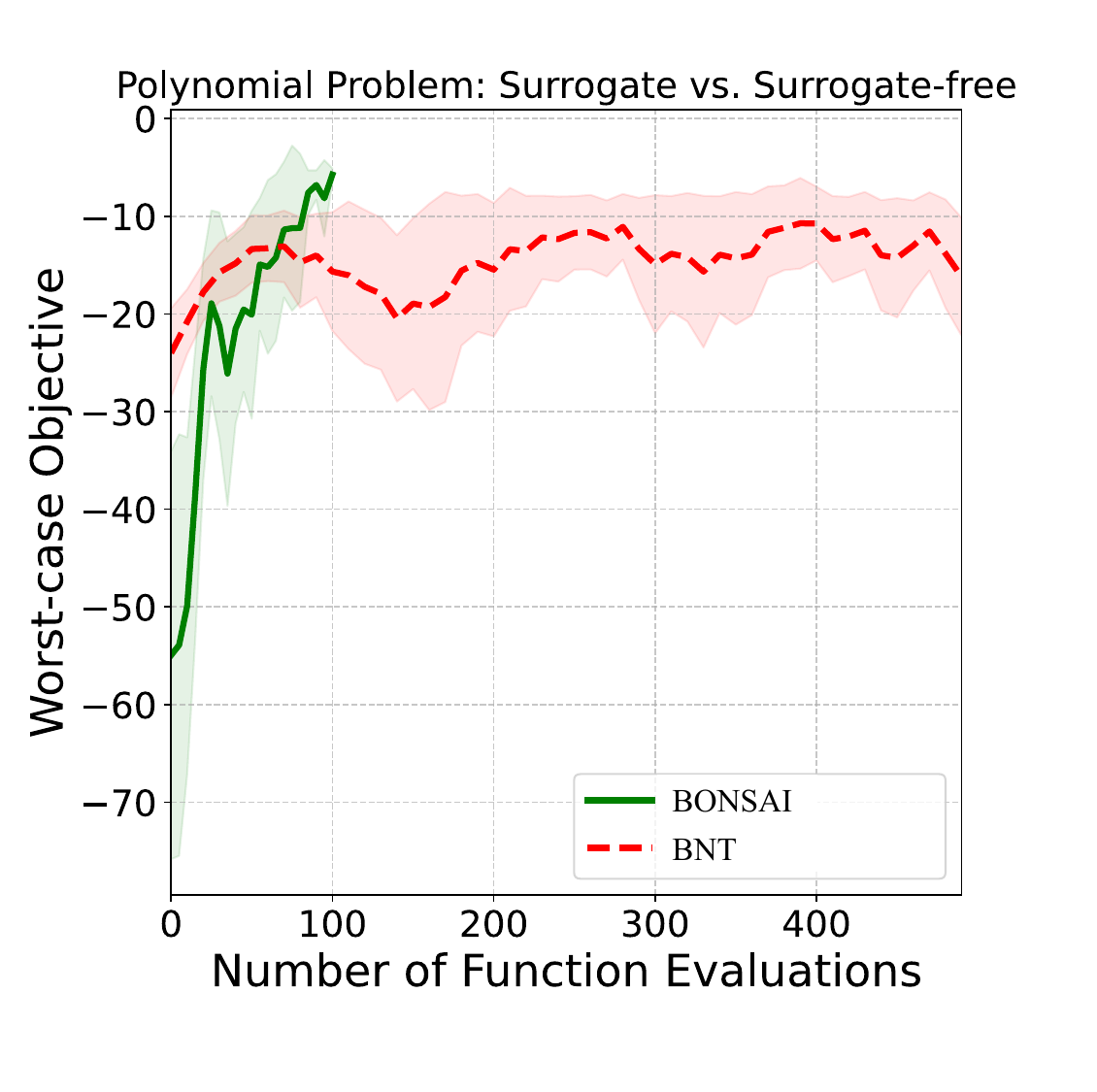}
  \caption{Recommended worst-case objective value as a function of the number of function evaluations for BONSAI (green) and the BNT robust local search method (red) for the polynomial case study. BONSAI consistently finds better solutions with fewer evaluations and lower variance after an initial warm-up period.}
  \label{fig:bonsai_vs_bertsimas}
\end{figure}

We expect similar limitations to hold for other surrogate-free robust optimization methods including evolutionary strategies such as dual-stage differential evolution (DE) \cite{du2024novel} or robust CMA-ES \cite{kruisselbrink2011using}. These methods typically either co-evolve a population of designs with a population of uncertainties/adversaries or approximate worst-case rankings inside a single population. While such approaches can be competitive given large evaluation budgets, they are rarely sample-efficient and often treat the objective as a pure black box without exploiting structure or utilizing any rigorous form of uncertainty quantification.
In contrast, BONSAI explicitly models structure via function networks and uses acquisition strategies that prioritize informative samples. A more comprehensive exploration of how these different classes of methods compare on a wider variety of problems (including high-dimensional and noisy settings) is an important area worthy of future work. 

\section{Ablation Studies}
\label{app:ablation}

\setcounter{figure}{0}
\renewcommand{\thefigure}{E\arabic{figure}}

In this section, we conduct two ablation studies to analyze the contribution of BONSAI's two primary components: its structure-aware acquisition strategy and its function network-based recommendation rule.

\subsection{Search strategy versus recommendation rule}

In this experiment, we apply nine variants combining three search strategies (BONSAI, ARBO, and Random) with three recommendation rules: \textbf{GPFN}, uses the posterior mean of the full function network; \textbf{GP-Quantile}, applies a lower confidence bound (LCB) $\mu_{n}(\bs{x}, \bs{w}) + \beta^{1/2} \sigma_n (\bs{x}, \bs{w})$ with fixed width $\beta^{1/2} = 2$ to a standard GP model; and \textbf{GP}, a special case of GP-Quantile with $\beta=0$ (posterior mean only). 

Figure~\ref{fig:ablation-search-recommendation} shows the results on the Polynomial function network problem. The BONSAI search strategy is clearly the most effective in exploring the design space. For recommendation, methods leveraging function network structure (GPFN) substantially outperform those relying on black-box GP models. These results support our central claim: leveraging intermediate structure during both search and recommendation is critical for robust design.

\begin{figure}[tb]
  \centering
  \includegraphics[width=0.7\textwidth]{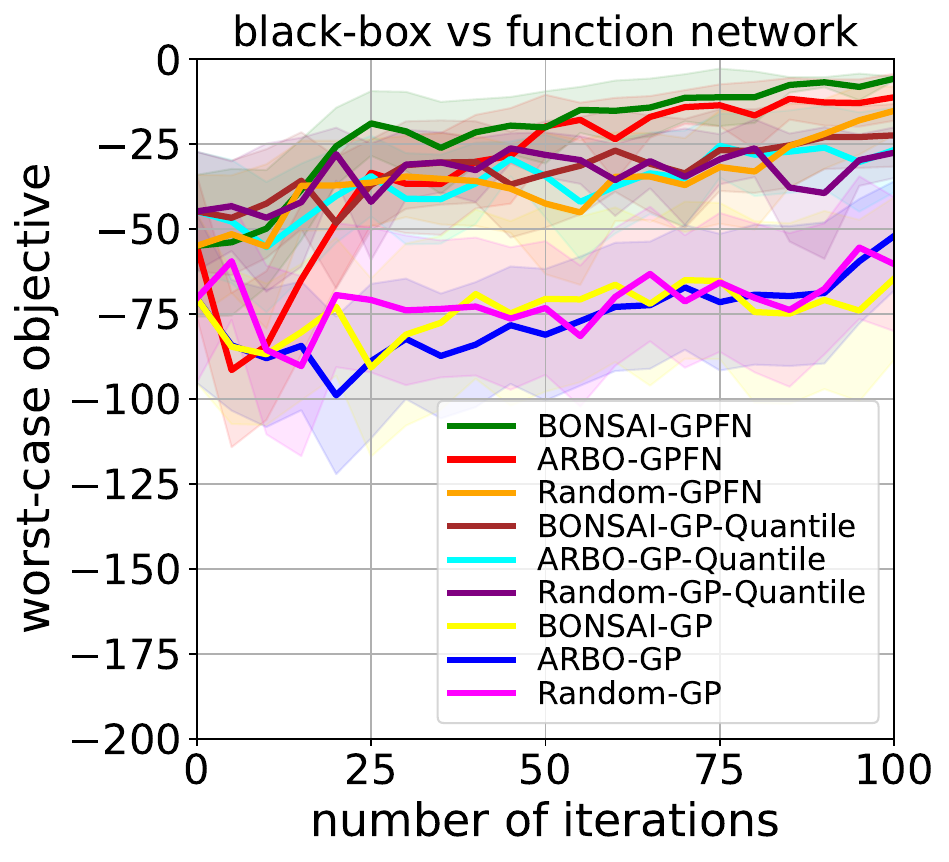}
  \caption{Ablation study on the Polynomial test problem. We compare nine algorithmic variants formed by pairing three search strategies with three recommendation rules. BONSAI–GPFN consistently achieves the best performance, highlighting the importance of both structure-aware exploration and recommendation.}
  \label{fig:ablation-search-recommendation}
\end{figure}

\subsection{Quantile-based recommendation using GP function network}

To assess whether risk-averse recommendations could further improve performance, we investigate a quantile-based recommendation rule derived from the full GPFN posterior. Since closed-form confidence bounds are not available for general function networks, we approximate the 5\% quantile using 256 independently drawn posterior function samples. This rule, denoted \textbf{GPFN-Quantile}, is evaluated every 25 iterations for tractability.
Figure~\ref{fig:GPFN-vs-Quantile} compares GPFN-Quantile to standard GPFN. In early iterations, the quantile rule leads to more robust designs by guarding against model error. As data accumulate, both rules converge, suggesting that model accuracy ultimately dominates recommendation performance.

\begin{figure}[tb]
  \centering
  \includegraphics[width=0.7\textwidth]{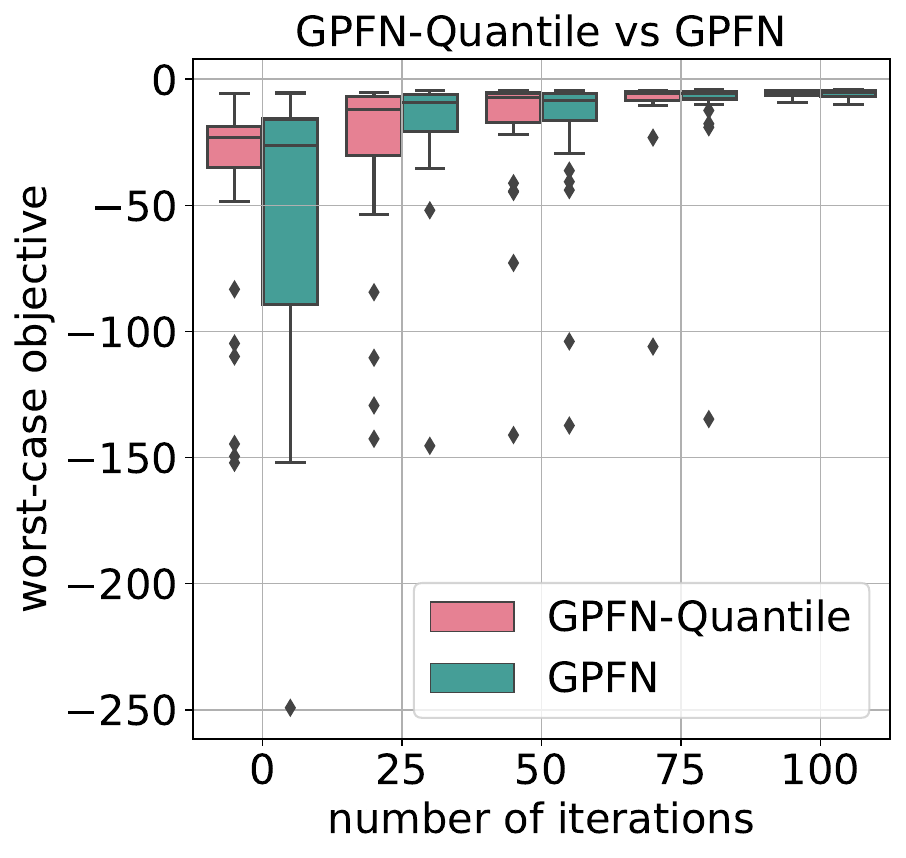}
  \caption{Comparison of GPFN and GPFN-Quantile recommendation strategies. GPFN-Quantile uses a 5\% pessimistic quantile based on posterior function samples (same as that used in the Thompson sampling procedure). While GPFN-Quantile provides better performance in early iterations, both approaches converge as model fidelity improves.}
  \label{fig:GPFN-vs-Quantile}
\end{figure}

\section{Average Runtime per Iteration}
\label{app:runtime}

\setcounter{figure}{0}
\renewcommand{\thefigure}{F\arabic{figure}}

\setcounter{table}{0}
\renewcommand{\thetable}{F\arabic{table}}

We report the average runtime per iteration for acquisition function optimization in the BONSAI and ARBO methods. These values are computed by dividing the total optimization time over 100 iterations and averaging across 30 independent runs for each case study. 
% Since each method was run for 100 iterations, the total runtime can be recovered by multiplying the reported values by 100. 
These results provide a normalized measure of computational cost that facilitates fair comparisons across problems and between methods. All timings were recorded on a system with an Intel(R) Core(TM) i7-10700K CPU.

As expected, BONSAI incurs higher computational overhead per iteration than ARBO (see Table \ref{tab:runtime-comparison}). This additional cost stems from the use of the structured surrogate models, which require propagating samples through the function network and solving a more complex acquisition function optimization problem. However, the intended use case for BONSAI is optimization of expensive black-box systems (such as high-fidelity simulations or physical experiments), where each function evaluation may take minutes, hours, or longer. In such settings, the cost of acquisition function optimization is typically negligible relative to the cost of evaluating the true system. 
As such, we view the current implementation as a conservative, structure-aware baseline for robust optimization. Reducing the computational overhead of BONSAI (particularly for lower-cost applications) remains an important and interesting direction for future work. For example, it may be possible to develop adaptive versions of BONSAI that dynamically balance acquisition cost with evaluation cost, enabling more efficient decision-making in resource-constrained settings.

\begin{table}[tb]
\centering
\caption{Average runtime per iteration (in seconds) for BONSAI and ARBO. Reported values represent the average over 30 independent runs.}
\label{tab:runtime-comparison}
\begin{tabular}{lcc}
\toprule
\textbf{Case Study} & \textbf{BONSAI (sec/iter)} & \textbf{ARBO (sec/iter)} \\
\midrule
Polynomial             & 1.14    & 1.11 \\
Cliff                  & 41.47   & 12.78 \\
Rosenbrock             & 13.94   & 2.37 \\
Modified Sine          & 3.87    & 0.56 \\
Robot Pushing          & 2.47    & 1.55 \\
Heat Exchanger Network & 3.55    & 0.73 \\
Digits Classifier      & 1.60    & 0.97 \\
Vibration Absorber     & 4.06    & 0.66 \\
\bottomrule
\end{tabular}
\end{table}

These results are meant to give a practical sense of BONSAI's computational cost in representative cases rather than provide a full scaling analysis. The most important factors that are expected to affect its runtime are:
\begin{itemize}
    \item \textbf{Function network size:} Each learned node maintains its own probabilistic surrogate model, and these are updated separately. Runtime grows with the number of nodes, although updates can be parallelized and lightweight surrogate models (e.g., sparse or approximate GPs) could help in larger networks.
    \item \textbf{Design and uncertainty dimensions:} Higher-dimensional problems increase the difficulty of the acquisition optimization problems. Although warm starting techniques can help manage this, they could introduce additional suboptimality and still suffer from some cost increase with increasing problem dimension. 
    \item \textbf{Implementation details:} The current codebase is a prototype built for flexibility, not speed. Performance could be improved through batching, caching, and parallelization.
\end{itemize}
A full scaling study across network size and input dimensionality is left to future work.

\end{document}